\author{
  Kai Lion \quad Liang Zhang \quad Bingcong Li\thanks{Equal supervision.} \quad Niao He$^*$ \\
  Department of Computer Science \\
  ETH Zurich \\
  Zurich, Switzerland \\
  \texttt{\{kai.lion, liang.zhang, bingcong.li, niao.he\}@inf.ethz.ch}
}
\newcommand\DoToC{
  \startcontents
  \printcontents{}{1}{\vspace{1.cm}\textbf{\Large{Contents}}\vskip3pt\hrule\vskip5pt \vspace{0.4cm}}
  \vskip3pt\hrule\vskip5pt
}
\title{PoLAR: Polar-Decomposed \\Low-Rank Adapter Representation}
\newtheorem{definition}{Definition} 
\newtheorem{theorem}{Theorem} 
\newtheorem{lemma}{Lemma}
\newcommand{\St}[2]{\mathsf{St}(#1, #2)}
\newcommand{\Psd}[1]{\mathbb{S}^{#1}_{\succeq 0}}
\newcommand{\method}{PoLAR}
\newcommand{\x}{\mathbf{x}}
\newcommand{\y}{\mathbf{y}}
\newcommand{\uu}{\mathbf{u}}
\newcommand{\vv}{\mathbf{v}}
\newcommand{\ww}{\mathbf{w}}
\newcommand{\F}{\mathbf{F}}
\newcommand{\I}{\mathbf{I}}
\newcommand{\X}{\mathbf{X}}
\newcommand{\Y}{\mathbf{Y}}
\newcommand{\Z}{\mathbf{Z}}
\newcommand{\A}{\mathbf{A}}
\newcommand{\U}{\mathbf{U}}
\newcommand{\V}{\mathbf{V}}
\newcommand{\Q}{\mathbf{Q}}
\newcommand{\B}{\mathbf{B}}
\newcommand{\W}{\mathbf{W}}
\newcommand{\D}{\mathbf{D}}
\newcommand{\G}{\mathbf{G}}
\newcommand{\PP}{\mathbf{P}}
\newcommand{\bfS}{\mathbf{S}}
\newcommand{\E}{\mathbf{E}}
\newcommand{\bfPhi}{\mathbf{\Phi}}
\newcommand{\bfPsi}{\mathbf{\Psi}}
\newcommand{\bfSigma}{\mathbf{\Sigma}}
\newcommand{\bfOmega}{\mathbf{\Omega}}
\newcommand{\bfTheta}{\mathbf{\Theta}}
\newcommand{\bfLambda}{\mathbf{\Lambda}}
\newcommand{\bfGamma}{\mathbf{\Gamma}}
\newcommand{\sr}{\mathsf{sr}}
\newcommand{\rank}{\mathsf{rank}}
\newcommand{\tr}{\mathsf{Tr}}
\newcommand{\fro}{\mathsf{F}}
\newcommand{\st}{\mathsf{St}}
\begin{document}

\maketitle

\begin{abstract}
  We show that low-rank adaptation of large-scale models suffers from a low stable rank that is well below the linear algebraic rank of the subspace, degrading fine-tuning performance. To mitigate the underutilization of the allocated subspace, we propose PoLAR, a parameterization inspired by the polar decomposition that factorizes the low-rank update into two direction matrices constrained to Stiefel manifolds and an unconstrained scale matrix. Our theory shows that PoLAR yields an exponentially faster convergence rate on a canonical low-rank adaptation problem. Pairing the parameterization with Riemannian optimization leads to consistent gains on three different benchmarks testing general language understanding, commonsense reasoning, and mathematical problem solving with base model sizes ranging from 350M to 27B.
\end{abstract}

\section{Introduction}
\label{sec:introduction}

Large language models (LLMs) transformed the field of machine learning through their proficiency in understanding and generating text, as well as handling complex multimodal tasks.
Typically, those models require a number of parameters that is on the scale of billions, rendering fine-tuning of large-scale models infeasible in hardware-constrained setups. Consequently, parameter-efficient fine-tuning methods have been developed to enable adaptation of LLMs with limited resources \citep{houlsby_parameter-efficient_2019, li_prefix-tuning_2021, lester_power_2021, hu_lora_2022}.
One notable work is that of \cite{hu_lora_2022}, which performs fine-tuning by learning an additive low-rank update parameterized as the product of two low-rank matrices. Such Low-Rank Adapters (LoRA) have been the subject of study in a large body of work. Various improvements, ranging from initialization strategies \citep{meng_pissa_2024,li_crucial_2024} and custom learning rate settings for stable feature learning \citep{hayou_impact_2024} to alternative parameterizations \citep{liu_dora_2024} have been proposed ever since.

Recent work attempts to overcome the low-rank constraint imposed by LoRA while preserving its parameter-efficiency \citep{xia_chain_2024, lialin_relora_2024, zhao_galore_2024, huang_hira_2024, jiang_mora_2024}. 
The underlying premise is that the low-rank nature of the adapter limits its expressiveness. However, this premise is at odds with recent theoretical results that LoRA can approximate any target transformer model reasonably well under mild assumptions on the rank that depend on the relationship between the pre-trained and the target model \citep{zeng_expressive_2024}. Additionally, \cite{kalajdzievski_rank_2023} shows that raising the nominal rank does little to improve performance. Taken together, these findings suggest that the low-rank space offers sufficient expressiveness, but the classical low-rank adapter formulation struggles to fully utilize this potential.

Indeed, we observe comprehensive empirical evidence for this underutilization: when fine-tuning Llama-2-7B with LoRA, we find that the stable rank, a robust analogue of the matrix rank and measure of expressiveness, of the resulting update remains well below the linear algebraic rank. For some learned LoRA updates $\Delta \W$, the stable rank, defined as $\sr(\Delta \W) := \lVert \Delta \W \rVert_{\fro}^2/\lVert \Delta \W \rVert_2^2$ (\cite{rudelson_sampling_2006}, see Appendix~\ref{apex.sec.related-work}), is as low as 1.06. This reveals that approximately a rank $1$ subspace is utilized even if the LoRA rank is chosen as 32. Similar behaviors of low stable rank are consistently observed across layers and datasets; see Figs. \ref{fig:stable-rank-boxplot} and \ref{fig:stable-rank-official-checkpoints}.
Such deficiency results in a \textit{diversity collapse} in the update directions among different neurons, where in extreme cases the updates for all neurons strongly align along a single direction (up to a sign flip); see Fig.~\ref{fig:lora-dd-illustration} for an illustration and Fig.~\ref{fig:lora-dd} for the directional diversity collapse when fine-tuning Llama-2-7B.

\begin{figure}
     \centering
     \begin{subfigure}{0.285\textwidth}
         \centering
         \includegraphics[width=\textwidth]{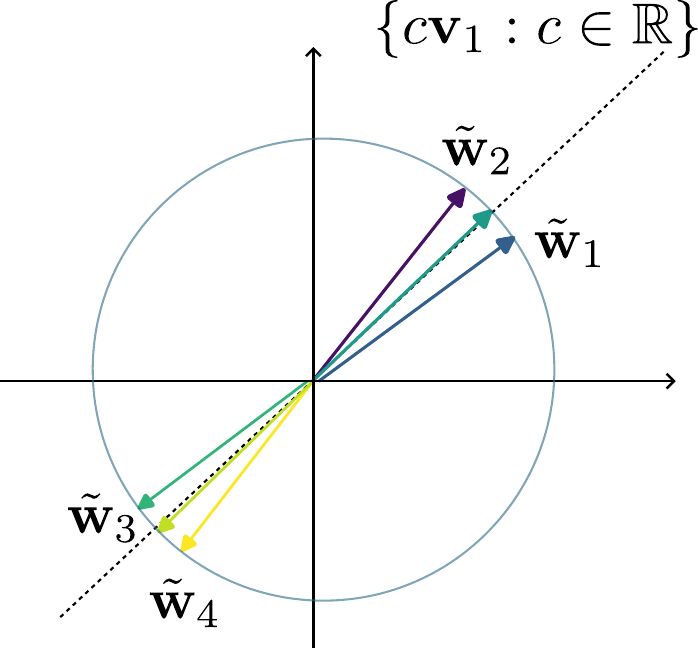}
         \caption{diversity collapse (DC)}   
         \label{fig:lora-dd-illustration}
     \end{subfigure}
     \hfill
     \begin{subfigure}{0.325\textwidth}
         \centering
         \includegraphics[width=\textwidth]{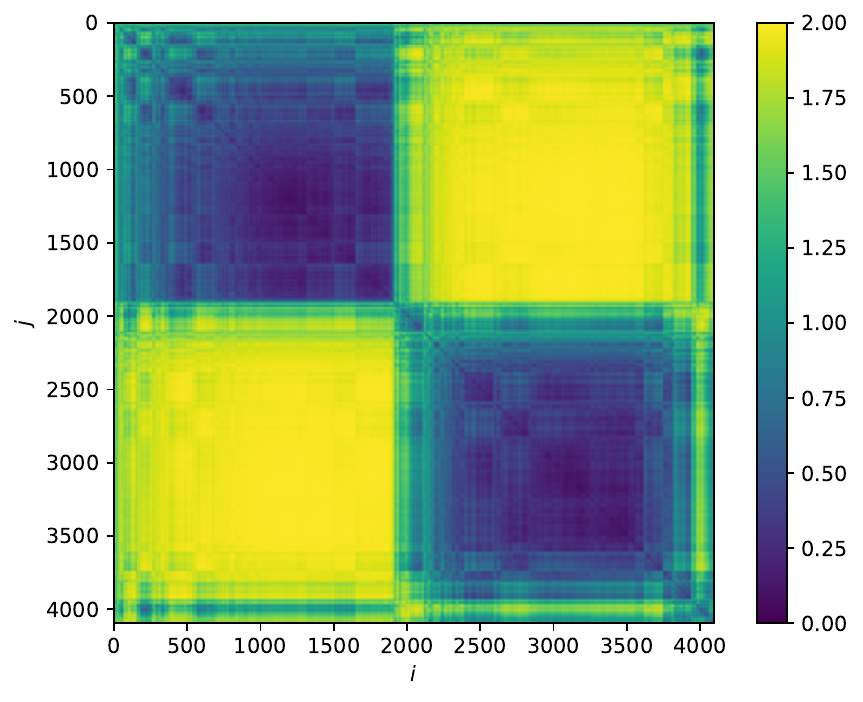}
         \vspace{-0.35cm}
         \caption{DC in LoRA}
        \label{fig:lora-dd}
     \end{subfigure}
     \hfill
     \begin{subfigure}{0.325\textwidth}
         \centering
         \includegraphics[width=\textwidth]{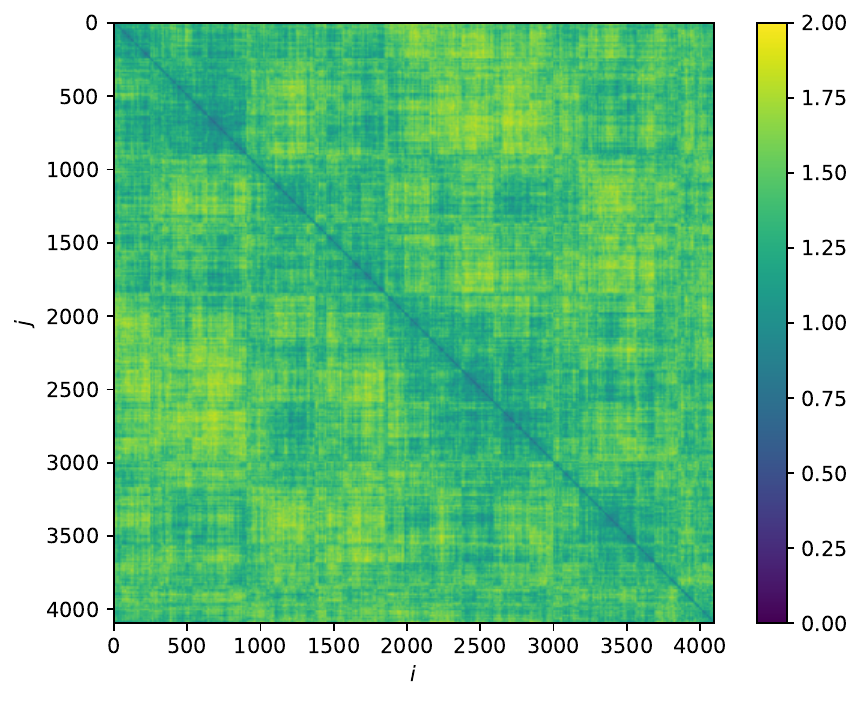}
         \vspace{-0.35cm}
         \caption{\method{} solves DC}
        \label{fig:method-dd}
     \end{subfigure}
     \caption{\textit{(a)} Illustration of directional diversity collapse (DC) of $\tilde{\ww}_i = \ww_i / \| \ww_i \|_2$ where $\ww_i$ denotes the $i$-th row of low-rank update $\Delta \W$. \textit{(b) and (c)} Diversity of update directions of LoRA and \method{} for a Llama-2-7B down-projection layer on dataset Social-IQA, respectively. Each pixel shows $\| \tilde{\ww}_i  - \tilde{\ww}_j \|_2$; rows and columns are rearranged to reveal cluster patterns in both plots. Emergence of a cluster pattern is evidence for DC. The algebraic rank is 32 for both methods, yet the stable rank is 1.06 and 5.49 for LoRA and \method{}, respectively. See also Section~\ref{sec:overcoming-low-stable-rank}.
     }
     \label{fig:stable-rank}
     \vspace{-0.3cm}
\end{figure}

To address this pathology, we put forth \method{}, a co-design of architecture and optimizer that mitigates the directional diversity collapse, as shown in Fig.~\ref{fig:method-dd}.
On the \emph{architecture side}, \method{} facilitates effective exploitation of the linear algebraic rank by expressing the low-rank update as the product of two column-orthogonal direction matrices and a $r \times r$ scale matrix. This parameterization can also be interpreted from the perspective of decoupling direction and magnitude, which arises naturally when the low-rank factors are lifted via the polar decomposition. 
On the \emph{optimizer side}, we apply methods from Riemannian optimization \citep{boumal_introduction_2023}. Theoretically, we demonstrate that our co-design enables exponentially faster convergence than vanilla LoRA on a canonical problem. To further enable training at scale, we draw on recent advances in infeasible methods for optimization over Stiefel manifolds \citep{ablin_fast_2022}. The resulting method is tailored to GPU hardware, catalyzing a speedup of at least 3$\times$ over feasible methods. In a nutshell, our contribution is four-fold:

\vspace{-0.1cm}
\begin{itemize}[leftmargin=0.6cm]
    \item[\ding{118}] Our empirical analysis demonstrates that the update matrices learned by LoRA have a stable rank far below their full linear algebraic rank, leading to a collapse in directional diversity and, in turn, preventing the adapters from fully realizing their expressiveness.

    \item[\ding{118}] We introduce \method{}, an architecture-optimizer co-design that ensures diverse update directions by factoring the low-rank updates into column-orthogonal direction matrices and an arbitrary scale matrix. Riemannian optimization is then adopted for our \method{} parameterization.
    
    \item[\ding{118}] On a matrix factorization prototype problem, we prove that our \method{} parameterization achieves an \textit{exponentially} faster convergence rate than vanilla LoRA.

    \item[\ding{118}] We evaluate \method{} on three benchmarks (language understanding, commonsense reasoning, and mathematical problem solving) using several backbones: DeBERTa-v3, Llama-2-7B, Gemma-3-12B, and Gemma-3-27B. Consistent performance gains are observed across all settings.

\end{itemize}

\vspace{-0.1cm}
\textbf{Notation.}
Bold capital (lowercase) letters denote matrices (vectors); $(\cdot)^\top$ and $\| \cdot \|_\fro$  refer to transpose and Frobenius norm of a matrix; $\| \cdot \|$ is the $\ell_2$ (spectral) norm of a vector (matrix); $\sigma_i(\cdot)$ and $\lambda_i(\cdot)$ denote the $i$-th largest singular value and eigenvalue, respectively.
For a matrix $\X \in \mathbb{R}^{r \times r}$, let $\mathsf{Skew}(\X)=\frac{1}{2}(\X - \X^\top)$ be its skew-symmetric part. The set of matrices with orthonormal columns, i.e., the Stiefel manifold, is denoted as $\St{m}{r} := \{\X \in \mathbb{R}^{m \times r} : \X^\top\X = \I_r\}$. The set of $r \times r$ positive semi-definite (PSD) matrices is denoted as $\Psd{r}:=\{\X \in \mathbb{R}^{r \times r}: \X \succeq 0\}$.

\subsection{Related Work}
\label{sec:related-work}

\textbf{Memory-Efficient Fine-Tuning.}
Low-Rank Adapters \citep{hu_lora_2022} are now a standard approach for fine-tuning large language models. There are various efforts to further enhance LoRA, e.g., via adaptivity \citep{zhang_adaptive_2023}, chaining \citep{lialin_relora_2024, xia_chain_2024}, low-bit training \citep{dettmers2024qlora,li2023loftq}, modifications for long sequences \citep{chen2023longlora}, weight decomposition \citep{liu_dora_2024}, or sparsity \citep{nikdan2024rosa}. For this work, LoRA-variants that change the parameterization of $\Delta \W$ are particularly relevant. DoRA \citep{liu_dora_2024} reparameterizes the low-rank update according to weight normalization \citep{salimans_weight_2016}, learning direction and magnitude of the update separately. AdaLoRA \citep{zhang_adaptive_2023} allocates the rank of adapters adaptively according to layer importance, learning an SVD-type update with a diagonal matrix and two orthogonal low-rank factors. A work concurrent to ours \citep{li_stella_2025} proposes a reparameterization of LoRA that is similar to ours, using feasible methods from the literature of Riemannian optimization for training. VeRA and LoRA-XS propose novel parameterizations to further shrink the number of trainable parameters \citep{kopiczko_vera_2024,balazy_lora-xs_2024}. Building on top of the LoRA-XS parameterization,  \cite{ponkshe_initialization_2024} develop a custom initialization scheme which is informed by the SVD of the gradient at the pre-trained model. Another closely related line of research is dedicated to the development of subspace optimizers which maintain the memory efficiency of low-rank adapters, while enabling full rank parameter updates. Such subspace methods project gradients to a low-rank subspace within which optimizer states are maintained \citep{zhao_galore_2024, liang_memory-efficient_2024, hao_flora_2024, robert_ldadam_2025, chen_fira_2024}.

\textbf{Optimization on Manifolds.} Riemannian optimization generalizes gradient methods to smooth manifolds \citep{absil_optimization_2008, bonnabel_stochastic_2013, boumal_introduction_2023} and has proven useful in scenarios such as matrix completion \citep{vandereycken_low-rank_2013} and eigenvalue problems \citep{edelman_geometry_1998}. Despite the theoretical appeal, such methods are rarely applied in large-scale settings, as expensive retraction operations are required to ensure the feasibility of iterates. \cite{ablin_fast_2022} propose an infeasible method for optimization on the orthogonal manifold that does not require retractions. Rather than ensuring the feasibility of all iterates, the specifically designed \emph{landing field} guarantees convergence to the manifold in the limit. 
\cite{gao_optimization_2022} extend their analysis to the Stiefel manifold, and \cite{schechtman_orthogonal_2023} generalize the landing method to more general cases.

\section{\method{}: A Co-Design of Architecture and Optimizer }

We begin by empirically identifying inefficiencies in LoRA’s utilization of its nominal rank, an insight that motivates our \method{} parameterization based on polar decomposition.

\subsection{Overcoming Low Stable Rank with \method{}} \label{sec:overcoming-low-stable-rank}

\begin{wrapfigure}{r}{0.45\textwidth}
        \centering
         \vspace{-0.53cm}
         \includegraphics[width=0.45\textwidth]{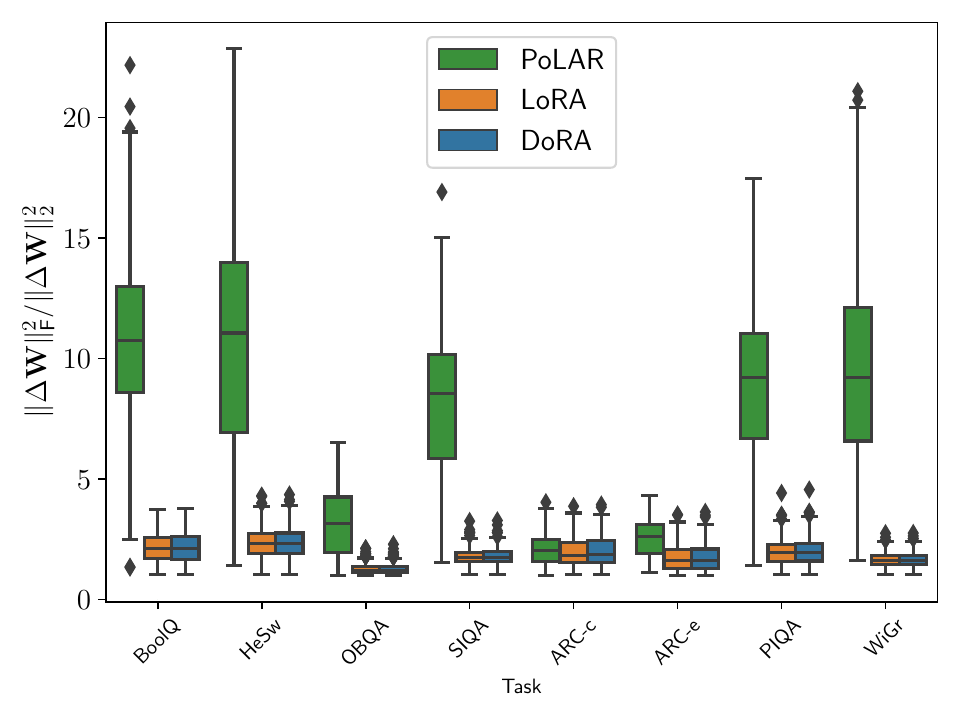}
         \vspace{-0.6cm}
         \caption{$\sr(\Delta \W)$ of Llama-2-7B low-rank updates fine-tuned on commonsense tasks with rank 32.}
         \vspace{-0.8cm}
         \label{fig:stable-rank-boxplot}
\end{wrapfigure}

Given the pretrained weight (of a linear layer) $\W_0 \in \mathbb{R}^{m \times n}$, LoRA learns an additive low-rank update $\Delta \W \in \mathbb{R}^{m \times n}$ with $\rank({\Delta \W})\leq r$. The adapted weight is thus given by $\W_0 + \Delta \W$. In 
\cite{hu_lora_2022}, the parameterization $\Delta \W = \Z_1\Z_2^\top$ with $\Z_1 \in \mathbb{R}^{m \times r} \text{ and } \Z_2 \in \mathbb{R}^{n \times r}$ is used. 

While LoRA significantly enhances parameter efficiency as $(m+n)r \ll mn$, it turns out that it struggles to fully utilize the expressiveness of its parameterization. 
In particular, when fine-tuning Llama-2-7B with LoRA, we find that the stable rank $\sr(\Delta \W)$ remains small on various datasets, oftentimes approaching $1$ for many layers even with a reasonably large choice of $r=32$; see Fig.~\ref{fig:stable-rank-boxplot}. Similar observations are also evident in \textit{official} LoRA weights; see more in Fig.~\ref{fig:stable-rank-official-checkpoints}.
Such a low stable rank translates to a loss in directional diversity of the neural updates. 

The directional diversity of an update matrix is measured by the average pairwise Euclidean distance of neurons when projected to the unit sphere. Note that this distance is within $[0, 2]$ with the lower and upper bounds attained by a pair of collinear neurons, pointing in the same or opposite directions, respectively. Consequently, a distribution of pairwise distances with most mass at the ends of the interval can be interpreted as evidence for low directional diversity. 

As observed in Fig.~\ref{fig:lora-dd}, the LoRA update closely aligns along a single direction, with most neural updates being nearly collinear. 
Using the compact SVD $\Delta \W = \U \bfSigma\V^\top$, it is possible to explain this observation and identify a low stable rank as a driver behind the lack of directional diversity. Here, $\U = [\uu_1, \dots, \uu_r] \in \mathbb{R}^{m \times r}$ and $\V =[\vv_1, \dots, \vv_r] \in \mathbb{R}^{n \times r}$ have orthonormal columns and $\bfSigma = \operatorname{diag}(\sigma_1, \dots, \sigma_r) \in \mathbb{R}^{r \times r}$ contains the singular values on its diagonal. Also, let $\ww_i$ denote the $i$-th row of $\Delta \W$. With this notation, we can see that the direction of the LoRA update for the $i$-th neuron can be approximated by:
\begin{equation}\label{eq.low-directional-diversity}
    \frac{\ww_i}{\lVert \ww_i \rVert} = \frac{1}{\sqrt{\sum_{j'=1}^r \sigma_{j'}^2\U_{ij'}^2}} \sum_{j=1}^r \sigma_j \U_{ij} \vv_j^\top \overset{(a)} {\approx} \operatorname{sign}(\sigma_1 \U_{i1}) \vv_1^\top, \forall i \in \{1,2, \ldots, m \}
\end{equation}
where $(a)$ comes from $\sr(\Delta \W) \approx 1$.
Equation \eqref{eq.low-directional-diversity} suggests that the weight update of all neurons tends to strongly align with the direction of the leading right singular vector up to a sign flip, causing the two-cluster pattern in Fig.~\ref{fig:lora-dd}. Moreover, Fig.~\ref{fig:stable-rank-boxplot} demonstrates the wide applicability of this finding, as the LoRA updates across several datasets suffer from a very low stable rank. 

We provide more thorough empirical evidence of this collapse in directional diversity in Appendix~\ref{appdx:stable-rank}. 

This pathology suggests that LoRA does not fully utilize its rank capacity, and we conjecture that a parameterization addressing this pathology increases the performance of LoRA. These insights lead to the following desiderata: we wish to learn ${\cal O}(r)$ roughly orthogonal directions whose contributions to the unit-norm neural update are roughly balanced, avoiding the collapse of directional diversity.

To this end, we advocate to incorporate orthogonality directly into the architecture.
This can be done by applying the polar decomposition \citep{golub_matrix_2013} to each of the low-rank factors.

\begin{definition}[Polar Decomposition]
    Let $\Z_1 \in \mathbb{R}^{m \times r}$ with $m \geq r$ and $\Z_2 \in \mathbb{R}^{n \times r}$ with $n \geq r$. It admits factorizations $\Z_1 = \X \bfTheta_1$ and $\Z_2 = \Y \bfTheta_2$, where $\X \in \st(m, r)$
    \footnote{The Stiefel manifold is defined as $\St{m}{r} := \{\X \in \mathbb{R}^{m \times r} : \X^\top\X = \I_r\}$.} 
    and $\Y  \in \st(n, r)$ have orthonormal columns and $\bfTheta_1, \bfTheta_2 \in \mathbb{R}^{r \times r}$ are positive semi-definite, respectively.
    \label{def:polar} 
\end{definition}

When $r = 1$, Definition \ref{def:polar} reduces to the familiar magnitude-direction decomposition of vectors. 

The polar decomposition thus generalizes magnitude-direction decomposition to matrices, where the semi-orthogonal factor represents the directional component and the positive semi-definite factor captures the magnitude. Using this magnitude-direction decomposition, the desirable orthogonality is naturally enforced through the manifold structure of $\X$ and $\Y$.
Moreover, rather than relying on two individual scale matrices, it is more convenient to learn a joint $\bfTheta \in \mathbb{R}^{r \times r}$ matrix for the overall update, which amounts to merging the product $\bfTheta_1 \bfTheta_2^\top$. 

These considerations give rise to our \textbf{Po}lar-decomposed \textbf{L}ow-rank \textbf{A}dapter \textbf{R}epresentation (PoLAR): 
\begin{equation}
    \Delta \W = \X\bfTheta\Y^\top \text{ with } \X \in \st(m, r) \text{, } \Y \in \st(n, r) \text{, and }\bfTheta \in \mathbb{R}^{r \times r}. \label{eq.polar-para}
\end{equation}
As a byproduct, \method{} admits a natural interpretation in terms of a direction–magnitude decomposition. However, unlike alternative decompositions, such as the column-wise weight normalization used in DoRA \citep{salimans_weight_2016,liu_dora_2024}, \method{} enforces orthogonality, substantially increasing the stable rank and generating low-rank updates with more competitive downstream task performance (see later in Section \ref{sec.num}). 

In the following subsection, we show that our \method{} parameterization in \eqref{eq.polar-para} offers provable advantages over LoRA when paired with appropriate optimization algorithms.

\subsection{Faster Optimization with \method{} Parameterization}
\label{sec:theoretical-motivation}

We now compare the convergence of PoLAR and LoRA on the problem of learning a low-rank adapter for a single layer with whitened data. As discussed in \cite{arora2018,zhang_riemannian_2024,li_crucial_2024} and Appendix~\ref{appdx:whitened-data}, applying LoRA in this case is equivalent to matrix factorization under the asymmetric Burer-Monteiro (BM) parameterization \citep{burer2003}
\begin{align}\label{eq.asym-bm}
	\min_{\Z_1 \in \mathbb{R}^{m \times r}, \Z_2 \in \mathbb{R}^{n \times r}} \frac{1}{2}\| \Z_1 \Z_2^\top - \A \|_\fro^2.
\end{align}
\begin{wrapfigure}{r}{0.53\textwidth}
    \vspace{0.1cm}
    \begin{minipage}{0.53\textwidth}
        \begin{algorithm}[H]
        \caption{RGD for \method{} parameterized \eqref{eq.asym-r}}\label{alg.rgd}
        \begin{algorithmic}
        \Require Learning rates $\eta$, $\gamma$; sample $\X_0$ and $\Y_0$ uniformly from $\st(m,r)$ and $\st(n,r)$, respectively.
        \For {$t=0,\dots,T-1$}
            	\State Find $\bfTheta_t$ via \eqref{eq.update-theta}
            	\State Obtain Riemannian gradients $\E_t$ and $\F_t$
                \State Update $\X_{t+1}$ and $\Y_{t+1}$ with \eqref{eq.update-x} and \eqref{eq.update-y}
        \EndFor
        \end{algorithmic}
        \end{algorithm}
    \end{minipage}
    \vspace{-0.5cm}
\end{wrapfigure}

In problem \eqref{eq.asym-bm}, the matrix to be factorized (or the target matrix of LoRA; see Appendix~\ref{appdx:whitened-data}) is denoted as $\A \!\in\! \mathbb{R}^{m \times n}$, and $\Z_1, \Z_2$ represent the LoRA weights. In light of the low-rank setting, $r \ll \min\{m, n\}$ is assumed. Problem \eqref{eq.asym-bm} has been widely adopted as a testbed for developing advanced optimization schemes for LoRA; see, e.g., \cite{zhang_riemannian_2024} or \cite{li_crucial_2024}.

Our goal in this subsection is to understand the optimization dynamics of our \method{} parameterization applied to the same one-layer setting, yielding the problem below 
\begin{align}\label{eq.asym-r}
	\min_{\X, \Y, \bfTheta} \frac{1}{2} \| \X \bfTheta \Y^\top - \A \|_\fro^2 , ~~~ \text{s.t.} ~~~~ \X \in \st{(m, r)},\, \Y \in \st{(n, r)}	,\, \bfTheta \in \mathbb{R}^{r \times r}.
\end{align}

Considering the sufficient expressiveness of LoRA \citep{zeng_expressive_2024},
we focus on the overparameterized regime for both \eqref{eq.asym-bm} and \eqref{eq.asym-r}, where $r > r_A :=\rank(\A)$.
In this setting, zero loss is attainable.
Let the compact SVD of $\A$ be $\U\bfSigma \V^\top$ with $\U \in \mathbb{R}^{m \times {r_A}}$, $\V \in \mathbb{R}^{n \times {r_A}}$ and diagonal $\bfSigma \in \mathbb{R}^{r_A \times r_A}$. Without loss of generality, we assume $\sigma_1(\bfSigma) =1$ and $\sigma_{r_A}(\bfSigma) =1/\kappa$ where $\kappa$ is the condition number. We also assume $m \geq n$, as one can transpose $\A$ if necessary.

Note that the semi-orthogonal low-rank factors live on Stiefel manifolds, requiring a treatment with Riemannian gradient descent (RGD). For technical simplicity, we consider a procedure that alternates between updating $\bfTheta_t$ and $(\X_t, \Y_t)$. At iteration $t$, it starts by finding $\bfTheta_t$ with gradient descent (GD) using learning rate $\gamma > 0$, i.e., 
\begin{subequations}
    \begin{align}\label{eq.update-theta}
	\bfTheta_t = (1 - \gamma) \bfTheta_{t-1} + \gamma \X_t^\top \A \Y_t.
\end{align}
Setting $\gamma=1$ significantly simplifies our analysis. With this value and the updated matrix $\bfTheta_t$, the Riemannian gradient of $\X_t$ can be obtained via $\E_t = -(\I_m - \X_t\X_t^\top ) \A \Y_t\bfTheta_t^\top$. Further involving polar retraction to remain on the manifold, the RGD update on $\X_t$ is given by 
\begin{align}\label{eq.update-x}
    \X_{t+1} = (\X_t - \eta \E_t) (\I_r + \eta^2 \E_t^\top \E_t )^{-1/2}. 	
\end{align}
Likewise, the Riemannian gradient of $\Y_t$ is $\F_t = -(\I_n - \Y_t\Y_t^\top)\A^\top \X_t \bfTheta_t$, leading to the update
\begin{align}\label{eq.update-y}
    \Y_{t+1} = (\Y_t - \eta \F_t) (\I_r + \eta^2 \F_t^\top \F_t )^{-1/2}.
\end{align}
\end{subequations}
Alg.~\ref{alg.rgd} summarizes the resulting RGD procedure. Despite the main technical challenge of establishing global convergence in the presence of saddles, RGD for our \method{} parameterization admits a natural geometric interpretation in terms of principal angles between subspaces. In particular, define the alignment matrices $\bfPhi_t := \U^\top \X_t \in \mathbb{R}^{r_A \times r} $ and $\bfPsi_t := \V^\top \Y_t \in \mathbb{R}^{r_A \times r} $. The singular values $\{\sigma_i(\bfPhi_t)\}_{i=1}^{r_A}$ of $\bfPhi_t$ give the arc-cosine of the principal angles between the two subspaces spanned by bases $\X_t$ and $\U$, respectively. 
In other words, $\{\sigma_i(\bfPhi_t)\}_{i=1}^{r_A}$ capture the alignment of the subspaces, yielding $\tr(\bfPhi_t\bfPhi_t^\top)=\sum_{i=1}^{r_A}\sigma_i^2(\bfPhi_t)$ as a suitable metric to measure total alignment; see more in Appendix~\ref{app:pabs}. Analogous statements apply to $\{\sigma_i(\bfPsi_t)\}_{i=1}^{{r_A}}$. This alignment perspective further complements the discussion of direction-magnitude decomposition in the context of equation \eqref{eq.polar-para}.
Our first theoretical result demonstrates that the alignment between $\X_t$ and $\U$ is non-decreasing over iterations. This geometric observation bears resemblance to the descent lemma in standard GD.

\begin{lemma}[Increasing Alignment] \label{lem:increasing-alignment}
	Let $\beta_t:= \sigma_1(\I_{r_A} - \bfPhi_t\bfPhi_t^\top)$ and $\delta_t:= \sigma_1(\I_{r_A} - \bfPsi_t\bfPsi_t^\top)$, and suppose that the learning rates are chosen as $\eta < 1$ and $\gamma=1$. If the following conditions are met, 
	\begin{align*}
		\frac{2 (1 -\eta^2 \beta_t) \sigma_{r_A}^2(\bfPsi_t) }{\kappa^2} \tr\big( (\I_{r_A} - \bfPhi_t\bfPhi_t^\top) \bfPhi_t\bfPhi_t^\top \big) & \geq  \eta \beta_t \tr\big( \bfPhi_t\bfPhi_t^\top \big)  \\
		\frac{2 (1 -\eta^2 \delta_t) \sigma_{r_A}^2(\bfPhi_t) }{\kappa^2} \tr\big( (\I_{r_A} - \bfPsi_t\bfPsi_t^\top) \bfPsi_t\bfPsi_t^\top \big) & \geq  \eta \delta_t \tr\big( \bfPsi_t\bfPsi_t^\top \big)
	\end{align*}
    Alg. \ref{alg.rgd} guarantees that $\tr(\bfPhi_{t+1}\bfPhi_{t+1}^\top) \geq \tr(\bfPhi_t\bfPhi_t^\top)$ and $\tr(\bfPsi_{t+1}\bfPsi_{t+1}^\top) \geq \tr(\bfPsi_t\bfPsi_t^\top)$.
\end{lemma}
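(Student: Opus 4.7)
The plan is to reduce $\bfPhi_{t+1}\bfPhi_{t+1}^\top$ to an expression in $\bfPhi_t, \bfPsi_t, \bfSigma$ alone, to bound the polar retraction's normalizer below by a scalar multiple of the identity, and then to extract the lemma's hypothesis via a trace--PSD inequality. The statement for $\bfPsi$ will follow by the symmetric argument, with the roles of $(\X,\U)$ and $(\Y,\V)$ swapped.

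First, using $\gamma = 1$ in \eqref{eq.update-theta} gives $\bfTheta_t = \X_t^\top \A \Y_t$, reducing the Riemannian gradient to $\E_t = -(\I_m - \X_t\X_t^\top)\A\Y_t\Y_t^\top\A^\top\X_t$, which crucially satisfies $\X_t^\top \E_t = 0$ (so $(\X_t - \eta\E_t)^\top(\X_t - \eta\E_t) = \I_r + \eta^2 \E_t^\top\E_t$, matching the polar-retraction normalizer). Substituting the SVD $\A = \U\bfSigma\V^\top$ and left-multiplying by $\U^\top$ yields the compact identity $\U^\top(\X_t - \eta \E_t) = \bfPhi_t + \eta \PP_t \Sbf_t \bfPhi_t$, where I introduce the shorthand $\PP_t := \I_{r_A} - \bfPhi_t\bfPhi_t^\top \succeq 0$ and $\Sbf_t := \bfSigma \bfPsi_t \bfPsi_t^\top \bfSigma \succeq 0$.

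To control the inverse appearing in \eqref{eq.update-x}, I would rewrite $\E_t^\top \E_t = \bfTheta_t \bfPsi_t^\top \bfSigma \PP_t \bfSigma \bfPsi_t \bfTheta_t^\top$ and invoke $\|\bfTheta_t\|_2, \|\bfPsi_t\|_2, \|\bfSigma\|_2 \leq 1$ to deduce $\E_t^\top\E_t \preceq \beta_t \I_r$. The elementary PSD inequality $(\I + \M)^{-1} \succeq \I - \M$ (valid for symmetric $\M$ with $\I + \M \succ 0$) then delivers $(\I_r + \eta^2 \E_t^\top \E_t)^{-1} \succeq (1 - \eta^2 \beta_t)\I_r$, producing precisely the prefactor that appears in the lemma's hypothesis. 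Expanding $\tr(\bfPhi_{t+1}\bfPhi_{t+1}^\top)$ via cyclicity and symmetry of $\PP_t, \Sbf_t$, then discarding the non-negative $\eta^2$ term from the resulting quadratic, reduces the target $\tr(\bfPhi_{t+1}\bfPhi_{t+1}^\top) \geq \alpha_t$ (with $\alpha_t := \tr(\bfPhi_t\bfPhi_t^\top)$) to the single inequality $2(1-\eta^2 \beta_t) \tr(\Sbf_t \PP_t \bfPhi_t \bfPhi_t^\top) \geq \eta \beta_t \alpha_t$.

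The final step is to match this to the lemma's hypothesis. Since $\PP_t$ and $\bfPhi_t\bfPhi_t^\top$ commute with eigenvalues in $[0,1]$, their product is PSD; and $\bfSigma \succeq \kappa^{-1}\I_{r_A}$ combined with $\sigma_{\min}(\bfPsi_t) = \sigma_{r_A}(\bfPsi_t)$ gives $\Sbf_t \succeq (\sigma_{r_A}^2(\bfPsi_t)/\kappa^2)\I_{r_A}$. The trace--PSD inequality $\tr(\A\B) \geq \lambda_{\min}(\A)\tr(\B)$ for PSD $\A,\B$ then lower bounds $\tr(\Sbf_t \PP_t \bfPhi_t \bfPhi_t^\top)$ by $(\sigma_{r_A}^2(\bfPsi_t)/\kappa^2) \tr(\PP_t \bfPhi_t \bfPhi_t^\top)$, reproducing the lemma's first hypothesis verbatim. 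I expect the main obstacle to be the operator-norm bound on $\E_t^\top \E_t$: a crude bound such as $\|\E_t\|_2 \leq 1$ would weaken the prefactor to $(1-\eta^2)$ and no longer match the stated condition. The key observation is that the projector $(\I_m - \X_t\X_t^\top)$ inside $\E_t$ transforms exactly into $\PP_t$ after passing through the SVD of $\A$, which is what produces the sharp factor $\beta_t$.
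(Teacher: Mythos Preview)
Your proposal is correct and follows essentially the same approach as the paper: express $\bfPhi_{t+1}$ via the update rule, bound the retraction normalizer by $(1-\eta^2\beta_t)\I_r$ using $(\I+\M)^{-1}\succeq \I-\M$ together with $\sigma_1(\E_t^\top\E_t)\leq\beta_t$, expand and discard the PSD $\eta^2$ cross term, and finally lower-bound $\tr(\Sbf_t\PP_t\bfPhi_t\bfPhi_t^\top)$ via the smallest eigenvalue of $\Sbf_t$. Your final step is slightly more direct than the paper's---you observe that $\PP_t$ and $\bfPhi_t\bfPhi_t^\top$ commute so their product is PSD and invoke $\tr(\A\B)\geq\lambda_{\min}(\A)\tr(\B)$, whereas the paper first diagonalizes $\bfPhi_t\bfPhi_t^\top=\Q_t\bfLambda_t\Q_t^\top$ and applies a specialized lemma for diagonal sandwiches---but the two routes are equivalent.
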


Next, we show that, with a proper $\eta$, Lemma~\ref{lem:increasing-alignment} leads to global convergence of Alg. \ref{alg.rgd}.

\begin{theorem}[Global Convergence]\label{thm.convergence}
    Suppose that $r_A \leq \frac{n}{2}$.
    Let $\rho:= \min\{ \frac{1}{m}, \frac{(r - r_A)^2}{rm} \}$. W.h.p. over the random initialization of $\X_0$ and $\Y_0$, choosing the learning rates $\eta = {\cal O}( \frac{(r - r_A)^2 \rho}{r^2 \kappa^2 m} )$ and $\gamma = 1$, Alg. \ref{alg.rgd} ensures $\frac{1}{2} \| \X_T \bfTheta_T \Y_T^\top - \A \|_\fro^2 \leq \epsilon$ for all $T \geq  {\cal O}\big( \frac{ m^2 r^3  r_A  \kappa^4 }{\rho^2 (r - r_A)^4}  + \frac{ m^2 r^3  \kappa^4 }{\rho (r - r_A)^4} \log\frac{1}{\epsilon} \big)$.
\end{theorem}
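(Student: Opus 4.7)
The plan is to combine a probabilistic analysis of the random initialization with a two-phase deterministic argument that repeatedly invokes Lemma~\ref{lem:increasing-alignment} on the principal-angle metrics $\tr(\bfPhi_t\bfPhi_t^\top)$ and $\tr(\bfPsi_t\bfPsi_t^\top)$. A \emph{warmup} phase drives these two trace quantities from their small initial values up to a constant fraction of $r_A$, and a subsequent \emph{local} phase exploits the resulting near-alignment to drive the residual $\frac{1}{2}\|\X_t\bfTheta_t\Y_t^\top - \A\|_\fro^2$ to zero at a linear rate.

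For the initialization and the warmup, I would first note that $\bfPhi_0 = \U^\top\X_0$ is distributed as an $r_A \times r$ block of a Haar-random orthogonal matrix, so a Davidson--Szarek-type concentration bound (or an explicit Jacobi-ensemble density) gives $\sigma_{r_A}^2(\bfPhi_0) \gtrsim (r-r_A)^2/(rm)$ w.h.p.\ when $r - r_A \geq \sqrt{r}$ and $\gtrsim 1/m$ otherwise; the minimum of these two scales is exactly $\rho$, and the same bound transfers to $\bfPsi_0$ via $r_A \leq n/2$. Using the invariants $\sigma_{r_A}^2(\bfPhi_t) \wedge \sigma_{r_A}^2(\bfPsi_t) \geq \rho$, $\beta_t, \delta_t \leq 1$, and $\tr(\bfPhi_t\bfPhi_t^\top) \leq r_A$, the premise of Lemma~\ref{lem:increasing-alignment} reduces to $\eta = \mathcal{O}((r-r_A)^2\rho/(r^2\kappa^2 m))$, matching the theorem. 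Taylor-expanding the polar retractions \eqref{eq.update-x}--\eqref{eq.update-y} and retaining leading-order terms yields a quantitative one-step growth $\tr(\bfPhi_{t+1}\bfPhi_{t+1}^\top) - \tr(\bfPhi_t\bfPhi_t^\top) \gtrsim (\eta\rho/\kappa^2)\tr((\I_{r_A}-\bfPhi_t\bfPhi_t^\top)\bfPhi_t\bfPhi_t^\top) \gtrsim \eta\rho^2/\kappa^2$, valid as long as some $\sigma_i^2(\bfPhi_t)$ lies in $[\rho, 1 - \Omega(1)]$. Consequently the warmup terminates in $\mathcal{O}(r_A\kappa^2/(\eta\rho^2))$ steps, which after substituting the admissible $\eta$ produces the first term of $T$.

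For the local phase, setting $\gamma = 1$ in \eqref{eq.update-theta} yields the closed form $\bfTheta_t = \X_t^\top\A\Y_t$, under which the loss simplifies to $\frac{1}{2}\|\A\|_\fro^2 - \frac{1}{2}\|\bfPhi_t^\top\bfSigma\bfPsi_t\|_\fro^2$. Once $\beta_t, \delta_t$ fall below a constant, I would combine the alignment increments afforded by Lemma~\ref{lem:increasing-alignment} with the spectral floor $\sigma_{r_A}^2(\bfSigma) = 1/\kappa^2$ to establish a one-step contraction of the form $\|\A\|_\fro^2 - \|\bfPhi_{t+1}^\top\bfSigma\bfPsi_{t+1}\|_\fro^2 \leq (1 - c\eta/\kappa^2)\bigl(\|\A\|_\fro^2 - \|\bfPhi_t^\top\bfSigma\bfPsi_t\|_\fro^2\bigr)$. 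Iterating this contraction for $\mathcal{O}((\kappa^2/\eta)\log(1/\epsilon))$ steps produces the second term of $T$, and a union bound over the two initialization events on $\bfPhi_0, \bfPsi_0$ delivers the overall high-probability guarantee.

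The main obstacle will be maintaining the hypotheses of Lemma~\ref{lem:increasing-alignment} uniformly along the trajectory despite the coupled, nonlinear dynamics induced by the polar retractions $(\I_r + \eta^2\E_t^\top\E_t)^{-1/2}$ and the cross-dependence of $\X_t, \Y_t, \bfTheta_t$ through \eqref{eq.update-theta}. In particular, proving that $\sigma_{r_A}^2(\bfPhi_t)$ and $\sigma_{r_A}^2(\bfPsi_t)$ never fall below $\rho$ requires showing that the retraction-induced second-order corrections to the singular values of $\bfPhi_t$ and $\bfPsi_t$ are uniformly dominated by the first-order growth afforded by Lemma~\ref{lem:increasing-alignment}; this is precisely where the tight step-size restriction and the $\rho$-scaling of the iteration complexity originate.
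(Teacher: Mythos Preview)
Your high-level architecture (random-initialization bound, a warmup driving $\tr(\bfPhi_t\bfPhi_t^\top)$ and $\tr(\bfPsi_t\bfPsi_t^\top)$ up, then a local linear phase) matches the paper's, which uses a three-phase version of the same idea and finishes by bounding the loss via $\|\X_t\bfTheta_t\Y_t^\top-\A\|_\fro^2 \leq 2\|\bfSigma\|^2\bigl(\tr(\I_{r_A}-\bfPhi_t\bfPhi_t^\top)+\tr(\I_{r_A}-\bfPsi_t\bfPsi_t^\top)\bigr)$ rather than working with $\|\A\|_\fro^2-\|\bfPhi_t^\top\bfSigma\bfPsi_t\|_\fro^2$ directly. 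That difference is cosmetic. The substantive gap is in the step you flag as the main obstacle.

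You propose to maintain the invariant $\sigma_{r_A}^2(\bfPhi_t)\wedge\sigma_{r_A}^2(\bfPsi_t)\geq\rho$ by arguing that ``retraction-induced second-order corrections are dominated by the first-order growth afforded by Lemma~\ref{lem:increasing-alignment}.'' This does not work: Lemma~\ref{lem:increasing-alignment} controls only the \emph{trace} $\tr(\bfPhi_t\bfPhi_t^\top)$, and trace growth is perfectly compatible with $\sigma_{r_A}^2(\bfPhi_t)$ decreasing (mass can shift from the smallest to larger principal angles). No amount of step-size tuning turns a trace inequality into a pointwise singular-value floor. The paper closes this gap with a separate and much cleaner device: it tracks the \emph{misalignment} $\bfOmega_t:=\U_\perp^\top\X_t$ and observes that the update collapses to
\[
\bfOmega_{t+1}=\bfOmega_t\bigl(\I_r-\eta\,\bfTheta_t\bfTheta_t^\top\bigr)\bigl(\I_r+\eta^2\E_t^\top\E_t\bigr)^{-1/2},
\]
because $\U_\perp^\top\A=\mathbf{0}$. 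Both right factors are PSD contractions for $\eta<1$, so $\bfOmega_{t+1}\bfOmega_{t+1}^\top\preceq\bfOmega_t\bfOmega_t^\top$ as a PSD inequality, and the relation $\sigma_{r_A}^2(\bfPhi_t)=1-\sigma_{r_A}^2(\bfOmega_t)$ (valid when $r_A\leq m/2$) then gives \emph{exact} monotonicity $\sigma_{r_A}^2(\bfPhi_{t+1})\geq\sigma_{r_A}^2(\bfPhi_t)$, with an identical argument for $\bfPsi_t$ via $\V_\perp^\top\Y_t$. This is the missing idea; once you have it, the invariant holds without any first-versus-second-order balancing, and both the step-size choice and the verification of Lemma~\ref{lem:increasing-alignment}'s hypotheses become straightforward.

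Two smaller points. First, your reading of $\rho$ is slightly off: in the paper $\rho$ is the lower bound on $\tr\bigl((\I_{r_A}-\bfPhi_t\bfPhi_t^\top)\bfPhi_t\bfPhi_t^\top\bigr)$ during the intermediate phase, and the $1/m$ branch arises from the \emph{upper} bound $\sigma_{r_A}^2(\bfPhi_t)\leq 1-1/(2r_A)$ forced by that phase's trace condition, not from initialization. Second, your local contraction $(1-c\eta/\kappa^2)$ presumes $\sigma_{r_A}^2(\bfPsi_t)=\Omega(1)$ while analyzing $\bfPhi_t$; the paper decouples $\bfPhi$ and $\bfPsi$ and in Phase~III for $\bfPhi$ only uses the initialization floor $\sigma_{r_A}^2(\bfPsi_t)\geq(r-r_A)^2/(c_2nr)$, yielding the slower contraction that produces the stated second term in $T$.
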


Theorem~\ref{thm.convergence} builds on a three-phase analysis to prove global convergence of Alg.~\ref{alg.rgd}. It demonstrates that RGD with \method{} parameterization avoids getting trapped by saddles of \eqref{eq.asym-r} without the need for artificial noise.
Our rate compares favorably to previous results of GD in the overparameterized regime.\footnote{The works of \citep{ward2023,xu2024provable} provide improved $\kappa$ dependence of Alternating GD and GD when the column/row space of $\A$ is known at initialization. They are excluded from comparison because knowing the column space of $\A$ amounts to setting $\tr(\bfPhi_0\bfPhi_0^\top)  = r_A$, which is already optimal at initialization.}
In particular, \cite{xiong2023over} show that overparameterization slows down GD, leading to an undesirable $\kappa$ dependence with $\mathcal{O} \big( \max\{\kappa^{15}, \kappa^{\kappa} \}\log(1/\epsilon) \big)$. Although this bound is originally obtained for matrix sensing, the numerical example in Fig.~\ref{fig:mf} confirms that GD barely progresses beyond certain iterations. Our rates in Theorem~\ref{thm.convergence} exponentially improve the $\kappa$ dependence to a quartic one.
Compared with other works of overparameterized GD on BM such as \cite{soltanolkotabi2023implicit}, our bound not only has a more favorable dependence on $\kappa$, but also ensures last-iteration convergence, removing the need for early stopping.

It is also worth noting that, unlike GD with BM, 
\method{} \textit{improves} with overparameterization. To see this, consider the slightly overparameterized regime with $r \approx r_A$, e.g., $r = r_A + 5$. 
Our bound in 
Theorem \ref{thm.convergence} reads $ {\cal O}\big( m^4  r^6  \kappa^4  \! +\! m^3 r^4  \kappa^4  \log(1/\epsilon) \big)$. However, once at the highly overparameterized regime $r = C r_A$ for some constant $C$, e.g., $C = 5$, 
we have a bound with a drastically improved dependence on the dimension: 
$ {\cal O}\big(  m^4   \kappa^4   + \frac{ m^3  \kappa^4 }{ r_A} \log(1/\epsilon) \big)$. A graphical illustration can be found in Fig.~\ref{fig:asym-mf}, where the convergence of $r = 5 r_A$ is much faster than that of $r = r_A + 5$.

Theorem \ref{thm.convergence} also suggests practical merits of \method{}, notably its \emph{simplified initialization} (via, for instance, Lemma \ref{lemma.unitform-st}). In contrast, initialization of BM can impose multiple requirements both theoretically and empirically \citep{ye_global_2021,xiong2023over,hayou_impact_2024}.
In addition, our choice of an \emph{unconstrained $\bfTheta \in \mathbb{R}^{r\times r}$} plays 
a crucial role for convergence. Empirical evidence in \cite[Fig.~5]{mishra_fixed-rank_2013} shows that substituting $\bfTheta$ by a diagonal matrix $\bfTheta^{\mathsf{d}}$, as done in AdaLoRA \citep{zhang_adaptive_2023}, can adversely affect convergence, potentially because of the presence of non-strict saddles in the loss landscape \citep{levin_effect_2024}. 
These spurious stationary points can be removed by a parameterization with positive-definite $\bfTheta^{\mathsf{s}}$ \cite[Prop. 2.10]{levin_effect_2024}. Our \method{} parameterization poses no constraints on $\bfTheta$. It thus avoids computational overheads associated with enforcing positive-definiteness (e.g., matrix exponentials), yet still ensures global convergence.

Lastly, we extend our analysis to the symmetric matrix factorization problem in Appendix~\ref{appdx:proofs-symmetric-problem}, where we prove last-iterate convergence with complexity ${\cal O}\big( \kappa^4\log(1/\epsilon)  \big)$. This represents an \textit{exponential} improvement in the $\epsilon$-dependence over GD on BM-type problems, which is subject to the lower bound of ${\Omega}(\kappa^2/\sqrt{\epsilon})$ established in \cite{xiong2023over}; see also Fig.~\ref{fig:sym-mf} for an illustration.

\begin{figure}
     \centering
     \begin{subfigure}{0.495\textwidth}
         \centering
         \includegraphics[width=\textwidth]{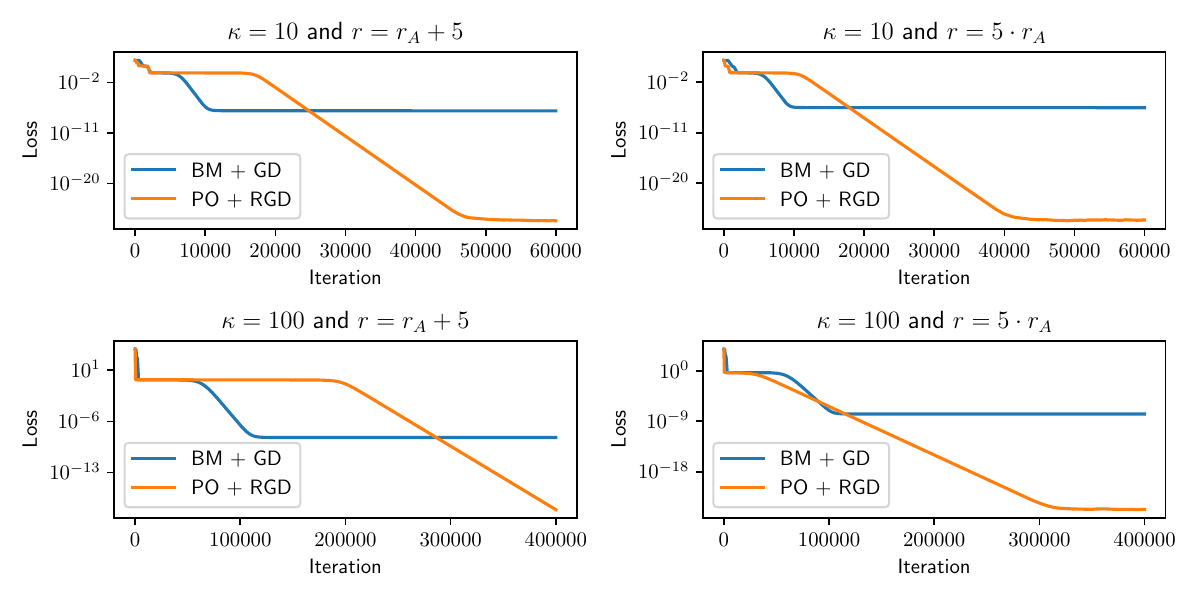}
         \caption{Asymmetric}   
         \label{fig:asym-mf}
     \end{subfigure}
     \hfill
     \begin{subfigure}{0.495\textwidth}
         \centering
         \includegraphics[width=\textwidth]{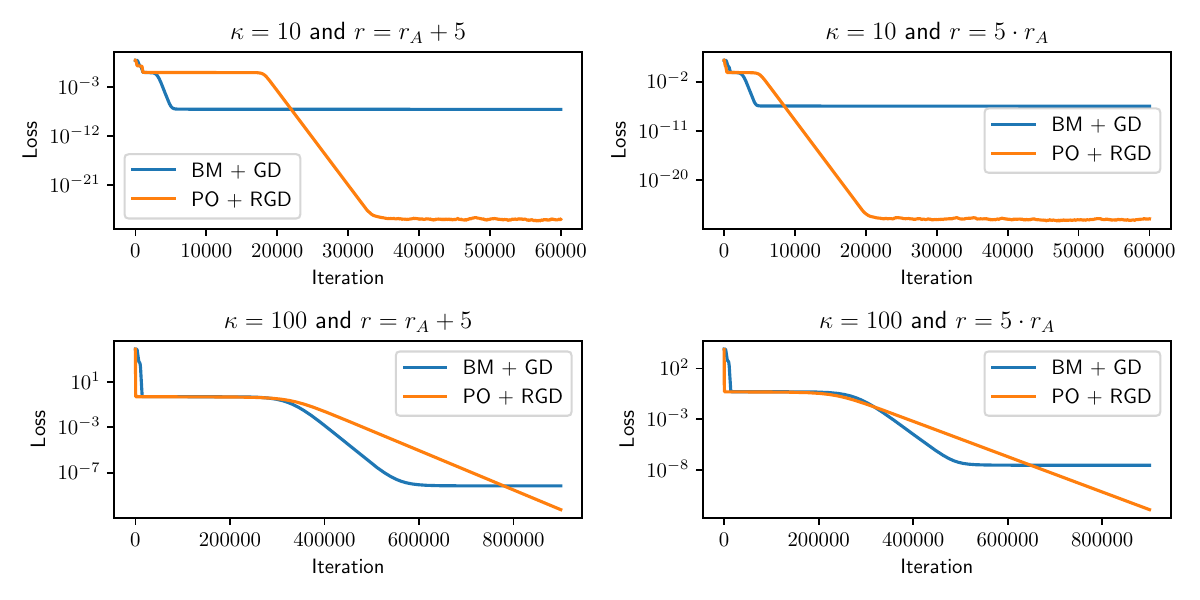}
         \vspace{-0.35cm}
         \caption{Symmetric}
        \label{fig:sym-mf}
     \end{subfigure}
     \caption{Overparameterized matrix factorization for different condition numbers and degrees of overparameterization using the \method{} parameterization with Riemannian Gradient Descent (PO + RGD) and the Burer-Monteiro parameterization with Gradient Descent (BM + GD).  \emph{(a)} Asymmetric scenario and \emph{(b)} symmetric case. The experimental details can be found in Appendix~\ref{appdx:mf-experimental-details}.}
     \label{fig:mf}
     \vspace{-0.4cm}
\end{figure}

\section{Practical \method{} for Scalable Fine-Tuning}
\label{sec:method}

Although \method{} increases expressiveness and accelerates convergence, optimizing under the manifold constraint of \eqref{eq.polar-para} with standard feasible methods does not scale to tasks such as LLM fine-tuning. Every retraction back onto the Stiefel manifold requires a matrix inversion or SVD (see \eqref{eq.update-x}), operations whose sequential nature limits GPU parallelism and becomes a runtime bottleneck (see Table~\ref{tab:benchmark} and \cite{sun_retraction-free_2024}). Retractions are also numerically brittle in the low-precision arithmetic, now standard in training. To sidestep these issues, we take inspiration from the recently proposed \textit{landing algorithm} which completely eschews retractions, producing iterates that are not necessarily on the manifold, but provably \textit{land} on it as training proceeds \citep{ablin_fast_2022, gao_optimization_2022}.

To fine-tune the \method{} parameterization of \eqref{eq.polar-para},
we initialize $\X_0$ and $\Y_0$ randomly on the Stiefel manifold 
and set $\bfTheta_0=\mathbf{0}$. 
At iteration $t$, we update $\X_t$ and $\Y_t$ with the landing field. Taking the 
\begin{wrapfigure}{r}{0.5\textwidth}
    \vspace{-0.35cm}
    \begin{minipage}{0.5\textwidth}
        \begin{algorithm}[H]
        \caption{\method{} Fine-tuning}\label{alg:landing-lora}
        \begin{algorithmic}
        \Require Parameterize via \eqref{eq.polar-para};
        Initialize $\X_0, \Y_0$ uniformly random from Stiefel manifolds, set $\bfTheta_0 = \mathbf{0}$, and denote $\lambda$ regularization strength, $\rho_t$ stateful gradient transformation (e.g., Adam) 
        \For{$t=0, \dots, T-1$}
        \State $ \bfGamma(\X_t) \gets \psi(\X_t)\X_t + \lambda \nabla \mathcal{N}(\X_t)$
        \State $ \bfGamma(\Y_t) \gets \psi(\Y_t)\Y_t + \lambda \nabla \mathcal{N}(\Y_t)$
        \State $\X_{t+1} \gets \X_t - \eta_t \rho_t(\bfGamma(\X_t))$
        \State $\Y_{t+1} \gets \Y_t - \eta_t \rho_t(\bfGamma(\Y_t))$
        \State $\bfTheta_{t+1} \gets \bfTheta_t - \eta_t \rho_t(\nabla_{\bfTheta_t} \mathcal{L}(\X_t, \bfTheta_t, \Y_t))$
        \EndFor
        \end{algorithmic}
        \end{algorithm}
    \end{minipage}
    \vspace{-0.5cm}
\end{wrapfigure}
update on $\X_t$ as an example, we use 
\begin{equation}\label{eq.landing-field}
    \bfGamma (\X_t) := \underbrace{\psi({\X_t}){\X_t}}_{\text{Riemannian grad.}} + \underbrace{\lambda \nabla \mathcal{N}(\X_t)}_{\text{Infeasibility penalty}}
\end{equation}
as a drop-in replacement for the Riemannian update described in Section~\ref{sec:theoretical-motivation}. The first component in \eqref{eq.landing-field} is the standard Riemannian gradient for matrices on the Stiefel manifold with $\psi(\X) :=\mathsf{Skew}(\nabla_\X \mathcal{L}(\X, \bfTheta, \Y) \X^\top)$. The second component in \eqref{eq.landing-field} is given by the gradient of the infeasibility penalty, $\mathcal{N}(\X):= \lVert \X^\top\X - \I_r \rVert_{\fro}^2$, which attracts the iterate towards the Stiefel manifold, making retraction obsolete.
The landing field \eqref{eq.landing-field} has several appealing properties. First, the two components of the field are orthogonal to each other, decoupling loss minimization and convergence to the Stiefel manifold. Second, its computation involves only matrix multiplications, eliminating the need for sequential retraction routines that do not map well to GPU parallelism; see Section \ref{sec:runtime} for numerical comparisons. 

The complete \method{} fine-tuning procedure is summarized in Alg.~\ref{alg:landing-lora}. We treat $\lambda>0$ in \eqref{eq.landing-field} as a tunable hyperparameter chosen such that iterates converge to the Stiefel manifold in the limit (see Fig.~\ref{fig:orthogonality}).
Alg.~\ref{alg:landing-lora} also omits the alternating update for $\bfTheta$ used in Alg.~\ref{alg.rgd} to simplify training by avoiding a second backward pass. Additional details relating to deployment appear in Appendix~\ref{appdx:limitations}. 

\section{Experiments}
\label{sec.num}

To demonstrate \method{}'s effectiveness at scale, we fine-tune backbones spanning 350M--27B parameters on three benchmarks of varying difficulty. Our study includes both masked and autoregressive language models and evaluates their abilities in terms of general language understanding, commonsense reasoning, and multi-step mathematical problem solving. 

\subsection{Commonsense Reasoning}

\begin{table}
  \caption{Accuracy of Gemma-2-2B with \method{} on commonsense reasoning tasks for different gradient types and parameterizations.  Rie. (Eucl.) refers to Riemannian (Euclidean) gradient.}
  \label{tab:gemma-param-commonsense}
  \centering
  \resizebox{\columnwidth}{!}{
  \begin{tabular}{c l l r r r r r r r r r}
    \toprule
    Rank & Param. $\bfTheta$ & Grad. & BoolQ & PIQA & SIQA & HeSw & WiGr & ARC-e & ARC-c & OBQA & \textbf{Avg.} \\
    \midrule
    \multirow{2}{*}{4}  & $\operatorname{Diag}(r)$ & Eucl. & 86.24 & 81.50 & 58.70 & 79.88 & 78.69 & 78.75 & 52.39 & 56.80 & 71.62 \\
       & $\mathbb{R}^{r \times r}$ & Rie. & 86.48 & 81.66 & 58.90 & 79.69 & 80.03 & 81.78 & 54.69 & 56.40 & \textbf{72.45} \\
    \midrule
    \multirow{2}{*}{8}  & $\operatorname{Diag}(r)$ & Eucl. & 86.06 & 82.05 & 59.52 & 80.38 & 77.19 & 79.50 & 55.03 & 57.20 & 72.12 \\
       & $ \mathbb{R}^{r \times r}$ & Rie.  & 86.94 & 81.77 & 58.80 & 80.49 & 79.64 & 81.94 & 56.14 & 55.80 & \textbf{72.69} \\
    \midrule
    \multirow{2}{*}{16} & $\operatorname{Diag}(r)$ & Eucl. & 86.82 & 81.83 & 59.52 & 81.08 & 78.37 & 78.87 & 54.35 & 54.80 & 71.96 \\
       & $\mathbb{R}^{r \times r}$ & Rie.  & 86.91 & 81.18 & 59.93 & 81.01 & 79.16 & 82.45 & 53.92 & 57.40 & \textbf{72.75} \\
    \midrule
    \multirow{2}{*}{32} & $\operatorname{Diag}(r)$ & Eucl. & 87.03 & 81.39 & 60.08 & 81.73 & 77.51 & 79.21 & 55.29 & 56.60 & 72.35 \\
       & $\mathbb{R}^{r \times r}$ & Rie. & 87.28 & 81.61 & 59.72 & 81.40 & 77.74 & 81.78 & 54.86 & 57.80 & \textbf{72.77} \\
    \bottomrule
  \end{tabular}
  }
\end{table}

\begin{table}
  \caption{Performance on commonsense reasoning tasks with Llama-2-7B using \method{} for different ranks in a single-task setup.  
  HeSw refers to HellaSwag and WiGr to WinoGrande.}
  \label{tab:commonsense-llama}
  \centering
  \resizebox{\columnwidth}{!}{
  \begin{tabular}{c l r r r r r r r r r}
    \toprule
    Rank & Adapter     &  BoolQ & PIQA & SIQA & HeSw & WiGr & ARC-e & ARC-c & OBQA & \textbf{Avg.} \\
    \midrule
    \multirow{3}{*}{4}  
       & LoRA       & 87.16 & 81.01 & 58.85 & 82.36 & 74.35 & 81.90 & 57.68 & 56.80 & 72.51 \\
       & DoRA       & 87.22 & 80.30 & 58.96 & 82.39 & 75.22 & 81.69 & 57.85 & 56.80 & 72.55 \\
       & \textbf{\method{}}  & 87.49 & 82.59 & 59.31 & 81.23 & 81.77 & 81.61 & 56.31 & 55.80 & \textbf{73.26} \\
    \midrule
    \multirow{3}{*}{8}
       & LoRA       & 87.34 & 81.07 & 58.80 & 82.68 & 74.66 & 82.24 & 58.11 & 55.80 & 72.59 \\
       & DoRA       & 87.49 & 80.58 & 58.39 & 82.32 & 75.22 & 82.24 & 58.19 & 55.60 & 72.50 \\
       & \textbf{\method{}}  & 87.74 & 82.70 & 59.62 & 82.14 & 82.40 & 81.36 & 56.91 & 55.20 & \textbf{73.51} \\
    \midrule
    \multirow{3}{*}{16} 
       & LoRA       & 86.94 & 81.18 & 58.96 & 82.57 & 76.01 & 82.58 & 57.85 & 56.00 & 72.76 \\
       & DoRA       & 86.85 & 81.34 & 58.29 & 82.39 & 74.59 & 82.62 & 57.68 & 55.40 & 72.39 \\
       & \textbf{\method{}}  & 88.01 & 82.92 & 59.93 & 82.68 & 81.77 & 81.82 & 57.51 & 57.40 & \textbf{74.00} \\
    \midrule
    \multirow{3}{*}{32} 
       & LoRA      & 87.89 & 81.56 & 59.06 & 82.51 & 72.61 & 82.37 & 56.83 & 54.60 & 72.18 \\
       & DoRA      & 87.61 & 81.45 & 58.70 & 82.50 & 74.43 & 82.28 & 57.17 & 55.60 & 72.47 \\
       & \textbf{\method{}} & 88.13 & 82.64 & 60.03 & 83.12 & 82.00 & 81.99 & 56.14 & 55.60 & \textbf{73.71}  \\
    \bottomrule
  \end{tabular}
  }
\end{table}

Commonsense reasoning tasks aim to assess how well LLMs can mimic human-like understanding. A standard suite of eight different datasets is chosen, and LLMs are finetuned separately on each of the datasets. For evaluation, we employ the widely-adopted \emph{lm-evaluation-harness} framework from Eleuther-AI \citep{biderman_lessons_2024}. 
We report the accuracy based on multiple-choice log-likelihood evaluation to facilitate reproducibility.
More details can be found in Appendix~\ref{app:experimental-setup-commonsense}.

We first isolate the contribution of our $\bfTheta$ parameterization and the optimizer based on the landing field. To this end, we compare \method{} with a modified procedure which (i) sets $\bfTheta \in \operatorname{Diag}(r)$ and (ii) replaces the Riemannian gradient with the Euclidean gradient. These changes bring our procedure closer to the SVD-type parameterization discussed in AdaLoRA \citep{zhang_adaptive_2023}. Gemma-2-2B is chosen as the base model \citep{gemma-team_gemma_2024}, and adapters with diverse ranks are tested. We report the results in Table~\ref{tab:gemma-param-commonsense} and remark that \method{} outperforms the diagonal parameterization across all ranks, validating the effectiveness of our co-design.

Scaling up to Llama-2-7B \citep{touvron_llama_2023}, \method{} again delivers the highest mean accuracy across tasks, outperforming both LoRA and DoRA (Table~\ref{tab:commonsense-llama}). Whereas the gains of DoRA and LoRA appear to be mostly flat as the adapter rank grows, \method{}'s accuracy increases with larger rank. This is consistent with our conjecture that \method{} counteracts directional-diversity collapse by exploiting the allocated rank more effectively.
We next examine the stable rank of the learned update matrices (Fig.~\ref{fig:stable-rank-boxplot}) at nominal rank 32. For all datasets, except for ARC-e and ARC-c, \method{} significantly increases the stable rank of the update. Correspondingly, these two tasks are also where \method{} shows no accuracy advantage relative to LoRA and DoRA. These results support the intuition that a higher stable rank correlates with improved downstream performance. Figs.~\ref{fig:stable-rank-rank} and \ref{fig:stable-rank-rank-hw} (and more in Appendix~\ref{appdx:stable-rank}) extend the analysis over multiple nominal (linear algebraic) ranks: LoRA’s stable rank rarely exceeds 2, while PoLAR’s increases steadily with the nominal rank, indicating a richer subspace and mirroring the observed accuracy gains. Figs. \ref{fig:sr-vs-iter-siqa} and \ref{fig:sr-vs-iter-hw} further visualize the evolution of the stable rank over the course of training. 
While the stable rank for LoRA remains relatively constant, \method{}'s stable rank increases throughout training and appears to dynamically vary with the layer type: MLP layers (up and down projection) appear to require a larger stable rank, while that of the value projection layer often comes out lowest (see more examples in Fig.~\ref{fig:stable-rank-dynamics}).

\begin{figure}
     \centering
     \begin{subfigure}{0.245\textwidth}
         \centering
         \includegraphics[width=\textwidth]{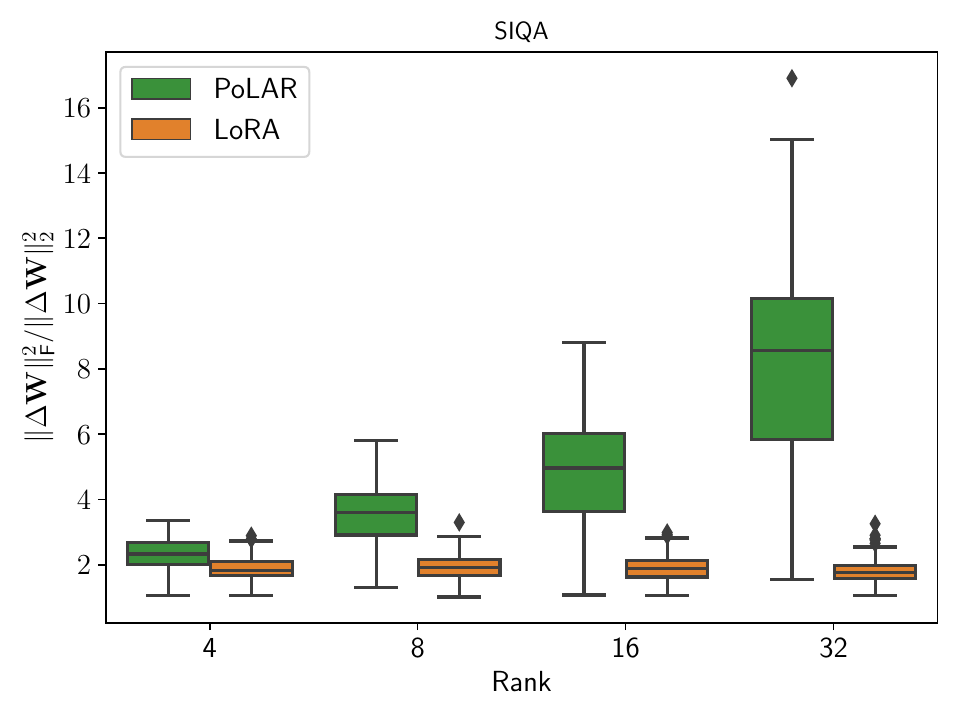}
         \vspace{-0.35cm}
         \caption{Social-IQA}   
         \label{fig:stable-rank-rank}
     \end{subfigure}
     \hfill
     \begin{subfigure}{0.245\textwidth}
         \centering
         \includegraphics[width=\textwidth]{figures/stable_rank/HeSW_boxplot_stable_rank_llama.pdf}
         \vspace{-0.35cm}
         \caption{HellaSwag} 
        \label{fig:stable-rank-rank-hw}
     \end{subfigure}
     \hfill
     \begin{subfigure}{0.245\textwidth}
         \centering
         \includegraphics[width=\textwidth]{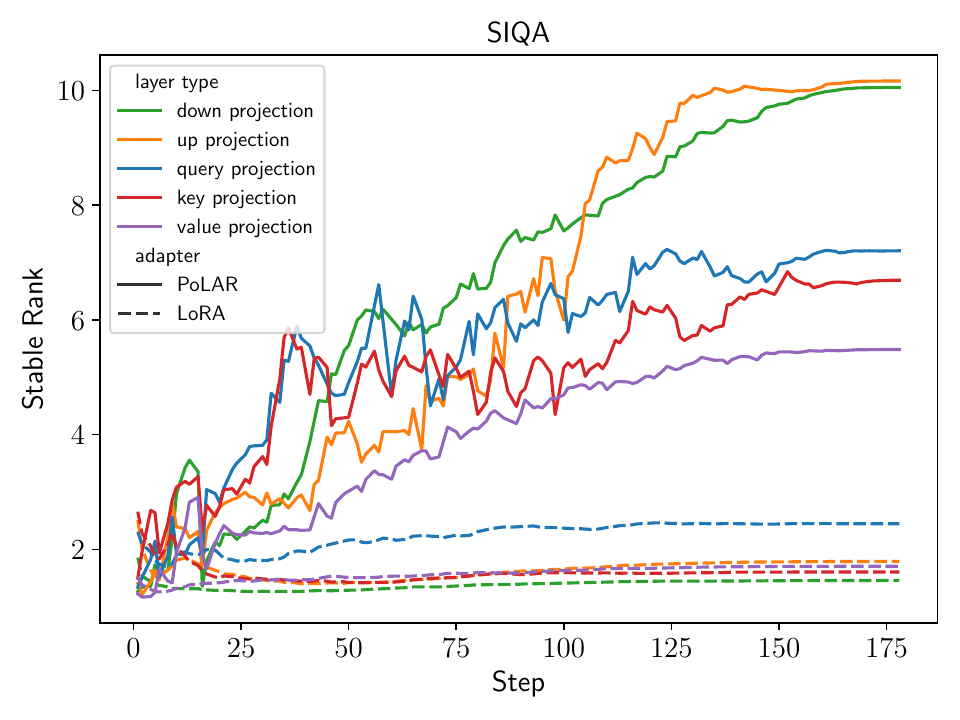}
         \vspace{-0.35cm}
         \caption{Social-IQA} 
        \label{fig:sr-vs-iter-siqa}
     \end{subfigure}
     \begin{subfigure}{0.245\textwidth}
         \centering
         \includegraphics[width=\textwidth]{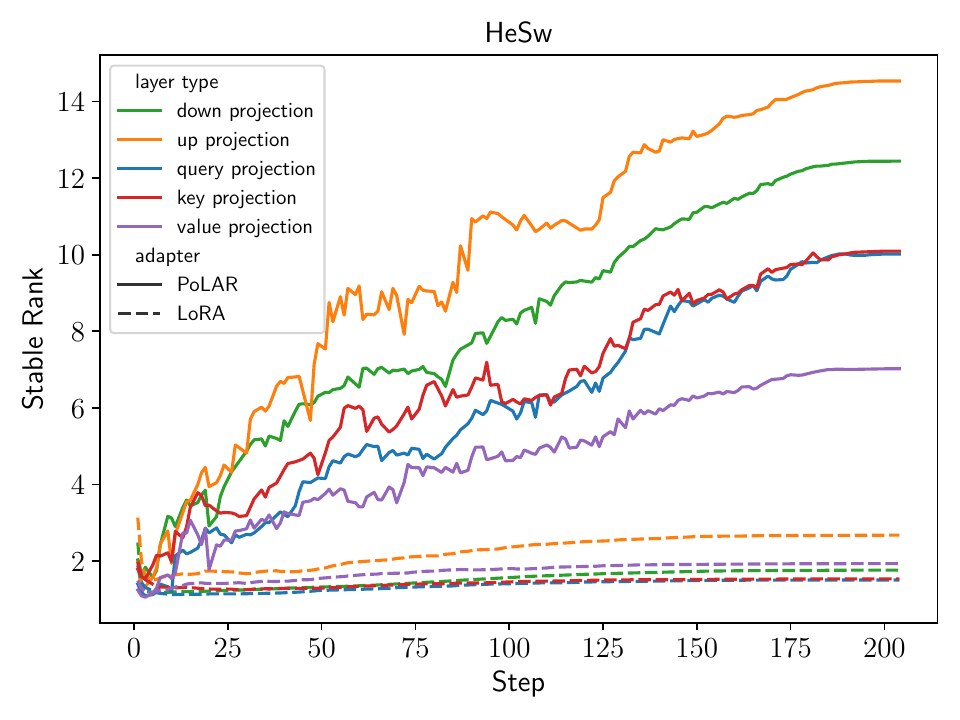}
         \vspace{-0.35cm}
         \caption{HellaSwag} 
        \label{fig:sr-vs-iter-hw}
     \end{subfigure}
      \caption{ \emph{Left two:} Stable rank distribution of Llama-2-7B adapter layers trained with \method{} and LoRA and different $r$ on the Social-IQA and HellaSwag datasets. See also Fig.~\ref{fig:appdx-stable-rank}. \emph{Right two:} Stable rank dynamics of the fifth transformer block over the course of training by layer type. See also Fig.~\ref{fig:stable-rank-dynamics}.}

\end{figure}

In Appendix~\ref{appdx:additional-exps}, we provide further experimental results for the commonsense reasoning benchmark, comparing \method{} to GaLore \citep{zhao_galore_2024}. In addition, we further scale up the experiment and report detailed results for Gemma-3-12B \citep{gemma_team_gemma_2025}, where \method{} is observed to improve upon LoRA by more than a percentage point.

\subsection{Mathematical Problem Solving}
Our second set of experiments tests the ability of fine-tuned models to perform multi-step mathematical reasoning. To do so, we apply \method{} to adapt Gemma-3-27B on MetaMathQA \citep{yu_metamath_2024} and evaluate on GSM8K \citep{cobbe_training_2021}, a set of 8.5K grade school math word problems, and MATH \citep{hendrycks_measuring_2021}, a set of 12.5K high school competition mathematics problems. As is done in \cite{yu_metamath_2024}, a zero-shot prompt is leveraged for evaluation. We report results in Table~\ref{tab:metamath-gemma3} and remark that \method{} achieves superior performance compared to LoRA with the margin being larger on the more challenging MATH benchmark. In Appendix~\ref{appdx:approximate-feasbility}, we show that approximate feasibility is maintained throughout training with the manifold constraints fulfilled at convergence.

\begin{table}[t]
\centering
\begin{minipage}{0.58\textwidth}
\centering
\caption{Performance comparison on GSM8K and MATH. $^\ddagger$ results are taken from \cite{yu_metamath_2024}.}
\resizebox{0.99\textwidth}{!}{
\begin{tabular}{ccccc}
\toprule
Model & Method & Trainable (\%) & GSM8K & MATH \\
\midrule
GPT-4$^\ddagger$ & - &  - & 92.00 & 42.50 \\
\midrule
Gemma-3- & LoRA  & 0.2816 & 85.37 & 41.94 \\
27B-PT & \textbf{PoLAR} & 0.2818 & \textbf{85.67} & \textbf{42.70} \\
\bottomrule
\end{tabular}
}
\label{tab:metamath-gemma3}
\end{minipage}
\hfill
\begin{minipage}{0.4\textwidth}
    \vspace{-0.02cm}
    \centering
  \caption{Runtime (\(\mu\mathrm{s}\)) of landing (Land.) and retraction (Retr.) on a H100 GPU with parameter size \((4096, r)\)}
  \label{tab:benchmark}
  \vspace{-0.03cm}
  \centering
  \resizebox{0.99\textwidth}{!}{
    \begin{tabular}{lcccc}
      \toprule
      $r$  & 4 & 32 & 64 & 256 \\
      \midrule
      Retr. & 541 & 1050 & 1916 & 9889 \\
      \textbf{Land.} & \textbf{180} & \textbf{186} & \textbf{238} & \textbf{539} \\
      \bottomrule
    \end{tabular}
  }
\end{minipage}
\end{table}

\subsection{Natural Language Understanding}
Lastly, we demonstrate the applicability of our method beyond auto-regressive language models. To achieve this, we use DeBERTa-V3 \citep{he_deberta_2021} as a representative of masked language models \citep{devlin_bert_2019} and fine-tune it on the GLUE benchmark \citep{wang_glue_2018}. We repeat the experiment with four seeds and report the evaluation results in Table~\ref{tab:glue-performance} with the standard deviation in the subscript.
Comparing \method{} with LoRA, we note significant improvements across all tasks except for QQP and MRPC where results are comparable. These results highlight that \method{} can handle both autoregressive as well as masked language model setups.

\begin{table}
  \caption{Performance of \method{} on GLUE with DeBERTa-V3-base. We report the matched accuracy for MNLI, Matthew’s (Pearson) correlation for CoLA (STS-B), and accuracy for other tasks.}
  
  \label{tab:glue-performance}
  \centering
  \resizebox{\columnwidth}{!}{
  \begin{tabular}{llllllllll}
    \toprule
    Name     & MNLI     & SST-2 & MRPC & CoLA & QNLI & QQP & RTE & STS-B & \textbf{Avg.} \\
    \midrule
    LoRA     & $90.01_{\pm 0.08}$ & $95.18_{\pm 0.52}$ & $89.71_{\pm 1.34}$ & $68.17_{\pm 1.07}$ & $94.06_{\pm 0.16}$ & $92.47_{\pm 0.03}$ & $84.84_{\pm 1.22}$ & $90.99_{\pm 0.15}$ & 88.18 \\
    \textbf{\method{}} & $90.67_{\pm 0.12}$ & $95.96_{\pm 0.22}$  & $89.52_{\pm 0.51}$ & $69.89_{\pm 0.47}$ & $94.37_{\pm 0.13}$ &  $92.12_{\pm 0.05}$ & $87.18_{\pm 1.33}$ & $91.83_{\pm 0.06}$ & \textbf{88.94} \\
    \bottomrule
  \end{tabular}
  }
\end{table}

\subsection{Practical Runtime Improvement over Retraction}\label{sec:runtime}

For $\tilde{\X} \in \mathbb{R}^{m \times r}$, a landing step requires $\mathcal{O}(m^2r)$ FLOPs while a retraction-based routine only requires $\mathcal{O}(mr^2)$ operations. While landing offers no advantage in terms of FLOPs over retraction-based methods, the General Matrix Multiplications (GEMMs) used in a landing step achieve much higher FLOP utilization on GPU hardware than the SVD-based retraction routine. To quantify runtime differences, we benchmark the retraction and landing operations using the most frequently occurring attention and MLP weight dimensions in Llama-2-7B LoRA fine-tuning. As Table~\ref{tab:benchmark} shows, even for very small ranks of $r=4$, which make retraction relatively cheaper, usage of the landing method provides roughly a speedup of $3 \times$ with the advantage increasing to $\approx 18\times$ for higher ranks. Appendix~\ref{appdx:runtime-comparison} extends the study to a broader range of layer shapes and additionally compares memory and runtime overhead to LoRA and DoRA. 

\section{Conclusion and Future Work} \label{sec:conclusion}

Low‑rank adaptation has become the workhorse for efficient fine‑tuning, yet our analysis revealed that the classical BM parameterization in LoRA often underutilizes its allocated subspace: the stable rank of the learned updates collapses, limiting expressive power. Building on this empirical finding, as well as theoretical insights from a canonical low‑rank optimization problem, we introduced \method{}, a reparameterization inspired by the polar decomposition that is coupled with a landing field optimization procedure. Across backbones from 350M to 27B parameters and tasks that span general language understanding, commonsense reasoning, and multi‑step mathematics, \method{} delivered consistent accuracy gains. The improvements often scale with adapter rank, validating 
that maintaining a higher stable rank translates into richer, more task‑aligned updates. In practice, \method{}’s reliance on nothing more than matrix multiplications implies that it maps cleanly onto GPU hardware, offering a drop‑in replacement for existing LoRA pipelines.

Given the low stable rank of the LoRA update, a closer examination of the spectral dynamics of low-rank fine-tuning seems to be a promising direction for future work. While several works already study the spectral evolution of weights during ordinary training of deep neural networks \citep{boix-adsera_transformers_2023, martin_implicit_2021, mousavi-hosseini_neural_2023}, the impact of the explicit low-rank constraint on the spectrum is not yet well understood. An investigation into the dynamics of the spectral distributions of \method{} could bring more insights into the benefits of the parameterization.

\section*{Acknowledgements} 
We are grateful to Christopher Criscitiello for insightful discussions and thank the anonymous reviewers for helpful feedback. Kai Lion is supported by Swiss National Science Foundation (SNSF) Sinergia Funding No. 216600. Liang Zhang gratefully acknowledges funding by the Max Planck ETH Center for Learning Systems (CLS). Bingcong Li is supported by SNSF Project Funding No. 200021-207343. Niao He is supported by ETH research grant funded through ETH Zurich Foundations and SNSF Starting Grant. This work was supported as part of the Swiss AI Initiative by a grant from the Swiss National Supercomputing Centre (CSCS) under project ID a105 on Alps.

\bibliographystyle{plainnat}
\bibliography{refs-1, refs-0}
\appendix

\newpage
\begin{center}
{\Large  \bf Supplementary Document for \method{}}
\end{center}

\DoToC

\newpage

\section{Limitations, Broader Impact, and Practical Concerns} \label{appdx:limitations}

\textbf{Limitations.} In our empirical evaluation of PoLAR we exclusively focused on natural language processing tasks using transformer architectures. We did not evaluate our method on other sequence modeling architectures (e.g., RNNs) or computer vision tasks. Also, we did not cover model sizes beyond 27B. Due to computational cost associated with fine-tuning decently-sized models with 1B+ parameters, we repeat each dataset-model combination only for a single seed. For DeBERTa-v3-base, we repeat training with four different seeds.

\textbf{Broader Impact.} The theories and approaches developed are applicable in discriminative and generative use-cases. The proposed algorithmic tool improves fine-tuning performance on discriminative tasks such as sequence classification. Use-cases range from recommender systems to content-moderation pipelines where reliable classification is crucial. 
For generative downstream tasks, model output should be audited through automated gating or filtering mechanisms.

\textbf{Practical Concerns.} In the scenario of serving multiple adapters concurrently \citep{sheng_s-lora_2024} \method{} and LoRA share identical storage and runtime costs at inference, as $\bfTheta_T$ can be merged into either $\X_T$ or $\Y_T$ once fine-tuning is complete. We note that \method{} incurs computational overhead during fine-tuning compared to LoRA, due to the landing field computation and the additional $r \times r$ matrix $\bfTheta_t$. As already mentioned, this runtime overhead compares favorably to a classical Riemannian treatment based on retractions (see Section~\ref{sec:runtime}). Moreover, we observe runtime benefits over other popular LoRA variants such as DoRA (see Table~\ref{tab:runtime-comparison-dora}).

\section{Additional Related Work}\label{apex.sec.related-work}

\textbf{Parameter-Efficient Fine-Tuning.} The goal of PEFT is to reduce the resource requirement for finetuning LLMs on downstream tasks. Some commonly employed methods include adapters \citep{houlsby_parameter-efficient_2019} and prefix tuning \citep{li_prefix-tuning_2021}. The family of LoRA adapters \citep{hu_lora_2022} recently gained significant popularity. Beyond the works already mentioned in Section~\ref{sec:related-work}, there are several lines of work focused on improving or understanding different aspects of LoRA. \cite{zhu_asymmetry_2024, hayou2024lora+} contribute to the understanding of the different roles of the low-rank factors.  \cite{gu_mix--show_2023} extend the framework to diffusion models. \cite{xu2024qalora, dettmers2024qlora} improve memory-efficiency through quantization strategies. \cite{liu2022fewshot} further improve parameter efficiency. \cite{zhang2024retractionfree} apply landing methods to LoRA fine-tuning and present LoRA variants that constrain the up-projection to the Oblique and Stiefel manifold. 

\textbf{Optimization for Matrix Factorization.} Existing works mostly handle matrix factorization problems under the BM parameterization, i.e., \eqref{eq.asym-bm}. Such problems are classical examples for understanding the optimization dynamics of LoRA. They involve a complex landscape characterized by nonconvexity, saddle points, and the absence of the PL condition.

Recent works have examined the convergence of several algorithms under the BM parameterization, such as GD, Alternating GD, and ScaledGD \citep{du2018,ward2023,li_crucial_2024} in overparameterized settings. Another closely related problem is matrix sensing. The convergence behaviors of different algorithms under the BM parameterization can be found in e.g., \citep{stoger2021small,zhang2021preconditioned,jin2023understanding,xiong2023over}. 

\textbf{Stable Rank.} The stable rank \citep{rudelson_sampling_2006} is a continuous and perturbation-robust analogue of the classical linear algebraic rank \citep{ipsen_stable_2024}. It is given by the square of the ratio of the Frobenius and spectral norms. In the context of regularization for convolutional neural networks, constraining the stable rank has been found to curb expressivity, mitigate overfitting, and reduce memorization \citep{sanyal_stable_2020, georgiev_impact_2021}.

\textbf{Other Related Work.}  \cite{qiu2023controlling, liu2024parameterefficient} use a multiplicative orthogonal matrix to fine-tune diffusion models, attempting to preserve the angular relationships between neurons of the pre-trained matrix. This is motivated through the preservation of hyperspherical energy of a weight matrix. Hyperspherical energy measures the distribution of neurons on the unit-norm hypersphere and attains its minimum when neurons are distributed uniformly across the sphere \citep{liu_learning_2020, lin_regularizing_2020}. A low stable rank of the LoRA update can be understood in terms of a state of high hyperspherical energy of the low-rank update.

\section{Useful Facts}
\label{app:useful_facts}

\subsection{Angles Between Subspaces}
\label{app:pabs}

Angles between two subspaces are known as principal angles. Suppose that $n \geq p$ and $n \geq q$, and let ${\cal U}$ and ${\cal V}$ be two linear subspaces of dimension $n\times p$ and $n \times q$. Then the principle angles $\theta_i \in [0, \frac{\pi}{2}], \forall i \leq \min\{p ,q \}$ is defined as
\begin{align*}
	\theta_1 & = \max	\Big\{ \arccos \frac{\langle \uu, \vv \rangle }{\| \uu \| \|\vv \|} \big| \uu \in {\cal U}, \vv \in {\cal V} \Big\} \\
	\theta_i &= \max	\Big\{ \arccos \frac{\langle \uu, \vv \rangle }{\| \uu \| \|\vv \|} \big| \uu \in {\cal U}, \vv \in {\cal V}, \uu \perp \uu_j, \vv \perp \vv_j, \forall j \in {1,\ldots, i-1} \Big\}, \forall i \neq 1.
\end{align*}

There is a well-known relation between principal angles and SVD. Let $\U$ and $\V$ be basis of ${\cal U}$ and ${\cal V}$ respectively. It can be seen that all the singular values of $\U^\top \V$ belong to $[0,1]$. Moreover, the principle angles defined above are just the arc-cosine of these singular values \citep{bjorck1973numerical}. For convenience of this work, \textit{we refer to the singular values of $\U^\top \V$ as principal angles (instead of the arc-cosine of them).} 
If the basis $\U$ and $\V$ are both from $\st(m, r)$, we sometimes use the term ``alignment'', where we say $\U$ and $\V$ are aligned if all the singular values of $\U^\top \V$ are $1$; or in other words, they share the same column space. 

The principal angles are also related to the geodesic distance on Grassmann manifolds. Oftentimes, people use the term \emph{chordal distance} to refer to $d(\U, \V) = \sqrt{\sum_{i}\sin^2 \theta_i}$, where $\theta_i$ are principal angles between two subspaces spanned by $\U$ and $\V$. The square of the chordal distance coincides with our notation $\tr(\I - \bfPhi_t\bfPhi_t^\top)$, where $\bfPhi_t$ is defined in Section \ref{sec:theoretical-motivation}.

\subsection{Other Useful Lemmas}

\begin{lemma}\label{apdx.lemma.inverse}
	Given a PSD matrix $\A$, we have that $(\I + \A)^{-1} \succeq \I - \A$.	
\end{lemma}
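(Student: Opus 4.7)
The statement is a routine operator-monotonicity-style inequality, and the cleanest approach is via simultaneous diagonalization. Since $\A$ is PSD, the plan is to write $\A = \Q \bfLambda \Q^\top$ with $\bfLambda = \diag(\lambda_1, \ldots, \lambda_n)$ and $\lambda_i \geq 0$, and observe that both $(\I + \A)^{-1}$ and $\I - \A$ are analytic functions of $\A$, so each is diagonalized by the same $\Q$. The desired matrix inequality then reduces, eigenvalue by eigenvalue, to the scalar claim $\tfrac{1}{1+\lambda} \geq 1 - \lambda$ for all $\lambda \geq 0$. Rearranging gives $1 \geq (1-\lambda)(1+\lambda) = 1 - \lambda^2$, which holds trivially.

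An equivalent and perhaps slicker route, which avoids invoking the spectral theorem explicitly, is to compute the difference directly: since $\A$ commutes with $\I + \A$ and with $(\I+\A)^{-1}$, one has
\begin{equation*}
(\I + \A)^{-1} - (\I - \A) = (\I + \A)^{-1}\bigl[\I - (\I + \A)(\I - \A)\bigr] = (\I + \A)^{-1}\A^2.
\end{equation*}
The right-hand side is a product of two commuting PSD matrices (both are polynomials/rational functions of $\A$), hence itself PSD. This delivers the claim in one line.

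There is essentially no obstacle here: the only subtlety is ensuring that $(\I + \A)^{-1}$ is well-defined, which follows because $\A \succeq 0$ implies the smallest eigenvalue of $\I + \A$ is at least $1 > 0$. I would present the direct-computation version for brevity, and note in passing the scalar interpretation as a sanity check.
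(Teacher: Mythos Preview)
Your proposal is correct, and your first route---diagonalize $\A$ and reduce to the scalar inequality $\tfrac{1}{1+\lambda} \geq 1 - \lambda$ for $\lambda \geq 0$---is exactly the paper's proof. Your second route via the identity $(\I+\A)^{-1} - (\I-\A) = (\I+\A)^{-1}\A^2$ is a clean alternative the paper does not mention; it trades the spectral theorem for a commutativity observation and is equally valid.
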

\begin{proof}
	Simply diagonalizing the LHS and RHS, and using $1/(1 + \lambda) \geq 1 - \lambda, \forall \lambda \geq 0$ gives the result.
\end{proof}

\begin{lemma}\label{lemma.apdx.sin-cos}
	Suppose that $\X \in \st(m, r)$, $\U \in \st(m, r_A)$ and $r_A \leq r$. Let $\U_\perp \in \mathbb{R}^{m \times (m -r_A)}$ be the orthogonal complement of $\U$. Denote $\bfPhi = \U^\top \X$ and $\bfOmega = \U_\perp^\top \X$. It is guaranteed that $\sigma_i^2(\bfPhi) + \sigma_i^2(\bfOmega) = 1$ holds for $i \in \{1, 2, \ldots, r \}$. 
\end{lemma}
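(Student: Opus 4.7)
The approach is to translate the orthogonality of $[\U, \U_\perp]$ into a direct algebraic relationship between the two Gram matrices $\bfPhi^\top\bfPhi$ and $\bfOmega^\top\bfOmega$, and then read off the singular values from that relationship. Since $[\U, \U_\perp]$ forms an orthonormal basis of $\mathbb{R}^m$, resolution of the identity gives $\U\U^\top + \U_\perp \U_\perp^\top = \I_m$. The first step is to sandwich this equality with $\X^\top$ on the left and $\X$ on the right, and then invoke $\X^\top\X = \I_r$, which yields
\[
\bfPhi^\top \bfPhi + \bfOmega^\top \bfOmega = \I_r.
\]

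With this identity in hand, the remainder is a short linear-algebra bookkeeping. Since $\bfOmega^\top\bfOmega = \I_r - \bfPhi^\top\bfPhi$, the two positive semidefinite matrices commute and therefore share an orthonormal eigenbasis $\{\vv_1,\dots,\vv_r\}$. For each $i$, if $\bfPhi^\top\bfPhi\, \vv_i = \lambda_i\, \vv_i$ then immediately $\bfOmega^\top\bfOmega\, \vv_i = (1-\lambda_i)\vv_i$. Recalling that the (zero-padded) squared singular values of a rectangular matrix coincide with the eigenvalues of its Gram matrix, pairing indices through this shared eigenbasis produces the claim $\sigma_i^2(\bfPhi) + \sigma_i^2(\bfOmega) = 1$ for $i = 1, \ldots, r$, with the understanding that $\sigma_i(\bfPhi) := 0$ for $i > r_A$ and analogously for $\bfOmega$ when $m - r_A < r$.

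There is no real technical obstacle here; the entire argument rests on the one-line identity $\bfPhi^\top\bfPhi + \bfOmega^\top\bfOmega = \I_r$. The only point that requires care is the indexing convention: the matching indices $i$ on the two sides should be interpreted as those prescribed by the shared eigenbasis of $\bfPhi^\top\bfPhi$ and $\bfOmega^\top\bfOmega$, which for $\bfOmega$ corresponds to listing the squared singular values in increasing rather than decreasing order. Under any such compatible pairing, the per-index sum equals one, completing the proof.
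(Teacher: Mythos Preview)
Your proof is correct and follows essentially the same route as the paper: derive $\bfPhi^\top\bfPhi + \bfOmega^\top\bfOmega = \I_r$ from $\U\U^\top + \U_\perp\U_\perp^\top = \I_m$ and $\X^\top\X = \I_r$, observe that the two Gram matrices commute and hence share an eigenbasis, and read off the paired eigenvalue relation. Your explicit remark about the indexing convention (that the pairing is through the shared eigenbasis, so the squared singular values of $\bfOmega$ appear in reverse order) is a useful clarification that the paper leaves implicit.
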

\begin{proof}
	We have that 
	\begin{align}\label{eq.apdx.commute}
		\I_r & = \X^\top \X = \X^\top  \I_m \X = \X^\top [\U, \U_\perp]
		\begin{bmatrix} 
			\U^\top \\
			\U_\perp^\top
		\end{bmatrix}
		\X  \\
		& = \bfPhi^\top \bfPhi  + \bfOmega^\top \bfOmega.  \nonumber
	\end{align}
	
	Equation \eqref{eq.apdx.commute} indicates that $\bfPsi^\top \bfPsi$ and $\bfPhi^\top \bfPhi $ commute, i.e.,
	\begin{align*}
		(\bfPhi^\top \bfPhi)	 (\bfOmega^\top \bfOmega) & = (\bfPhi^\top \bfPhi)	 (\I_r - \bfPhi^\top \bfPhi) = \bfPhi^\top \bfPhi  - \bfPhi^\top \bfPhi\bfPhi^\top \bfPhi \\
		& = (\I_r - \bfPhi^\top \bfPhi) (\bfPhi^\top \bfPhi)	 = (\bfOmega^\top \bfOmega) (\bfPhi^\top \bfPhi). 
	\end{align*}
	The commutativity shows that the eigenspaces of $\bfPhi^\top \bfPhi$	and $\bfOmega^\top \bfOmega$ coincide. As a result, we have again from \eqref{eq.apdx.commute} that $\sigma_i^2(\bfPhi) + \sigma_i^2(\bfOmega) = 1$ for $i \in \{1, 2, \ldots, r \}$. The proof is thus completed.
\end{proof}

\begin{lemma}\label{lemma.apdx.trace-lower-bound}
	Suppose that $\PP$ and $\Q$ are $m \times m$ diagonal matrices, and their diagonal entries are non-negative. Let $\bfS$ be a PD matrix of $m \times m$ with smallest eigenvalue $\lambda_{\min}$, then we have that 
	\begin{align*}
		\tr(\PP\bfS\Q)  \geq \lambda_{\min}\tr(\PP\Q).
	\end{align*}
\end{lemma}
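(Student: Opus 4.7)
The plan is a short direct calculation exploiting the diagonal structure of $\PP$ and $\Q$. First I would write $\PP = \diag(p_1,\ldots,p_m)$ and $\Q = \diag(q_1,\ldots,q_m)$ with $p_i, q_i \geq 0$. Since left-multiplication by a diagonal matrix scales rows and right-multiplication scales columns, the $(i,j)$-entry of $\PP \bfS \Q$ is simply $p_i \bfS_{ij} q_j$. Taking the trace therefore gives the clean expression
\begin{equation*}
    \tr(\PP \bfS \Q) = \sum_{i=1}^m p_i \, \bfS_{ii} \, q_i,
\end{equation*}
which isolates the dependence on $\bfS$ through its diagonal entries only.

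The second step is to lower-bound each diagonal entry by $\lambda_{\min}$. This is the standard Rayleigh-quotient observation: with $e_i$ the $i$-th canonical basis vector,
\begin{equation*}
    \bfS_{ii} = e_i^\top \bfS \, e_i \geq \lambda_{\min}(\bfS) \, \| e_i \|^2 = \lambda_{\min},
\end{equation*}
where the inequality is the Courant--Fischer characterization of the smallest eigenvalue of a positive definite matrix.

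The third step combines the two. Because $p_i q_i \geq 0$ for every $i$, multiplying the pointwise bound $\bfS_{ii} \geq \lambda_{\min}$ by $p_i q_i$ preserves the inequality, and summing over $i$ yields
\begin{equation*}
    \tr(\PP \bfS \Q) = \sum_{i=1}^m p_i q_i \, \bfS_{ii} \geq \lambda_{\min} \sum_{i=1}^m p_i q_i = \lambda_{\min} \tr(\PP \Q),
\end{equation*}
which is the claim. There is essentially no obstacle here: the only subtlety is remembering that non-negativity of $p_i q_i$ is what allows the pointwise eigenvalue bound to be lifted to the sum, and this is exactly why the hypothesis on $\PP$ and $\Q$ having non-negative diagonal entries is stated. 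The positive definiteness of $\bfS$ (strict rather than just PSD) is used only through the Rayleigh-quotient bound on diagonal entries and is not otherwise exploited.
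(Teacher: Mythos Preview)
Your proposal is correct and follows essentially the same approach as the paper: write the trace as $\sum_i p_i \bfS_{ii} q_i$, bound each diagonal entry via $\bfS_{ii} = e_i^\top \bfS e_i \geq \lambda_{\min}$, and use $p_i q_i \geq 0$ to pass the bound through the sum. The paper's version is terser but the argument is identical.
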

\begin{proof}
	Let $p_i$ and $q_i$ be the $(i,i)$-th entry of $\PP$ and $\Q$, respectively. Then we have that
	\begin{align}
		\tr(\PP\bfS\Q) = \sum_i p_i \bfS_{i,i} q_i \geq \lambda_{\min}\sum_i p_i  q_i = \lambda_{\min} \tr(\PP\Q)
	\end{align}
	where the inequality above comes from the positive definiteness of $\bfS$, i.e., $\bfS_{i,i} = \mathbf{e}_i^\top \bfS \mathbf{e}_i \geq \lambda_{\min}, \forall i$. 
\end{proof}

\begin{lemma}\label{apdx.lemma.aux888}
   Let $\A \in \mathbb{R}^{m \times n}$ be a matrix with full column rank and $\B \in \mathbb{R}^{n \times p}$ be a non-zero matrix. Let $\sigma_{\min}(\cdot)$ be the smallest non-zero singular value. Then it holds that $\sigma_{\min}(\A\B) \geq\sigma_{\min}(\A) \sigma_{\min}(\B)$.
\end{lemma}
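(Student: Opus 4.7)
The plan is to reduce to a Rayleigh-Ritz computation by first compressing the problem via the compact SVD of $\B$. Writing $\B = \U_B \bfSigma_B \V_B^\top$ with $\bfSigma_B \in \mathbb{R}^{r \times r}$ diagonal and invertible (where $r = \rank(\B)$), and $\U_B, \V_B$ having orthonormal columns, I would first observe that $(\A\B)(\A\B)^\top = \A \U_B \bfSigma_B^2 \U_B^\top \A^\top = (\A \U_B \bfSigma_B)(\A \U_B \bfSigma_B)^\top$, so $\A\B$ and the $m \times r$ matrix $\A \U_B \bfSigma_B$ share the same non-zero singular values. Since $\A$ has full column rank, left multiplication by $\A$ preserves rank, so $\rank(\A\B) = \rank(\B) = r$ and hence $\sigma_{\min}(\A\B) = \sigma_r(\A \U_B \bfSigma_B)$.

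Next I would square and use the variational characterization of the smallest eigenvalue:
\begin{align*}
    \sigma_{\min}(\A\B)^2 &= \lambda_{\min}\bigl(\bfSigma_B \U_B^\top \A^\top \A \U_B \bfSigma_B\bigr) \\
    &= \min_{\|\x\| = 1} (\U_B \bfSigma_B \x)^\top \A^\top \A (\U_B \bfSigma_B \x).
\end{align*}
I would then lower-bound the inner quadratic form by $\lambda_{\min}(\A^\top \A) \, \|\U_B \bfSigma_B \x\|^2$, which equals $\sigma_{\min}(\A)^2 \, \|\bfSigma_B \x\|^2$ since $\U_B$ has orthonormal columns. Minimizing over unit $\x$ gives $\sigma_{\min}(\A)^2 \, \sigma_{\min}(\bfSigma_B)^2 = \sigma_{\min}(\A)^2 \, \sigma_{\min}(\B)^2$, and taking square roots yields the claim.

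There is no real obstacle here; the result is a standard product inequality for the smallest non-zero singular value, and the only subtlety is remembering that ``$\sigma_{\min}$'' in the statement denotes the smallest non-zero singular value, which is why the rank preservation under left multiplication by the injective map $\A$ is needed to identify $\sigma_{\min}(\A\B)$ with $\sigma_r(\A\B)$ rather than with a vanishing tail singular value. Once this identification is made, the Rayleigh-Ritz bound closes the argument in a few lines.
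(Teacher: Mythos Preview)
Your proof is correct. Both your argument and the paper's rely on the variational characterization of the smallest non-zero singular value, but they package it differently. The paper works directly with $\sigma_{\min}(\A\B) = \min_{\|\x\|=1,\,\x \perp \ker(\B)} \|\A\B\x\|$, then factors $\|\A\B\x\| = \|\A\,(\B\x/\|\B\x\|)\|\cdot\|\B\x\|$ and performs a change of variables $\y = \B\x/\|\B\x\|$ so that $\y$ ranges over unit vectors in the column space of $\B$; dropping that constraint on $\y$ and using injectivity of $\A$ gives $\sigma_{\min}(\A)$, while the remaining minimum over $\x$ gives $\sigma_{\min}(\B)$. Your route instead compresses via the compact SVD $\B = \U_B\bfSigma_B\V_B^\top$ to reduce to the full-rank $m\times r$ matrix $\A\U_B\bfSigma_B$, and then applies a Rayleigh--Ritz bound on the Gram matrix. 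Your approach has the minor advantage of making the rank bookkeeping explicit (so there is no ambiguity about which subspace $\x$ lives in), whereas the paper's argument is slightly shorter and avoids introducing the SVD altogether.
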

\begin{proof}
    Using the min-max principle for singular values,
    \begin{align*}
        \sigma_{\min}(\A\B) & = \min_{\| \x \|=1, \x \in \text{ColSpan}(\B)} \| \A\B\x \| \\
        & = \min_{\| \x \|=1, \x \in \text{ColSpan}(\B)} \Big{\|} \A \frac{\B\x }{\| \B\x  \|} \Big{\|} \cdot \| \B\x  \|\\
        & \stackrel{(a)}{=} \min_{\| \x \|=1, \| \y\|=1, \x \in \text{ColSpan}(\B), \y \in \text{ColSpan}(\B)} \| \A\y \|  \cdot \| \B\x  \| \\
        & \geq  \min_{\| \y\|=1, \y \in \text{ColSpan}(\B)} \| \A\y \|  \cdot \min_{\| \x \|=1,  \x \in \text{ColSpan}(\B)}  \| \B\x  \| \\
        & \geq  \min_{\| \y\|=1 } \| \A\y \|  \cdot \min_{\| \x \|=1,  \x \in \text{ColSpan}(\B)}  \| \B\x  \| \\
        & =\sigma_{\min}(\A) \sigma_{\min}(\B)
    \end{align*}
    where (a) is by changing of variables, i.e., $\y = \B\x/ \| \B\x \|$.
\end{proof}

\begin{lemma}[Theorem 2.2.1 of \citep{chikuse2012statistics}]\label{lemma.unitform-st}
	If $\Z \in \mathbb{R}^{m \times r}$ has entries drawn iid from Gaussian distribution ${\cal N}(0,1)$, then $\X  = \Z (\Z^\top \Z)^{-1/2}$ is a random matrix uniformly distributed on $\st(m, r)$.	
\end{lemma}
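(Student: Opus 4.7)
The plan is to prove this via the rotational invariance of the Gaussian distribution together with the uniqueness of the Haar (uniform) measure on the Stiefel manifold. Concretely, I will show that (i) $\X$ lies on $\st(m,r)$ almost surely, (ii) the distribution of $\X$ is invariant under the left action of the orthogonal group $\orth(m)$, and (iii) any such distribution on $\st(m,r)$ must coincide with the uniform distribution.

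For step (i), since the entries of $\Z$ are iid $\mathcal{N}(0,1)$ and $m \geq r$, the Gram matrix $\Z^\top \Z$ is positive definite almost surely (a standard fact, as the set of rank-deficient $m\times r$ matrices has Lebesgue measure zero). Hence the symmetric square root $(\Z^\top \Z)^{-1/2}$ is well-defined almost surely, and a direct calculation gives
\begin{equation*}
\X^\top \X = (\Z^\top \Z)^{-1/2}\,\Z^\top \Z\,(\Z^\top \Z)^{-1/2} = \I_r,
\end{equation*}
so $\X \in \st(m,r)$ almost surely.

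For step (ii), let $\Q \in \orth(m)$ be arbitrary. Since the standard Gaussian density on $\mathbb{R}^{m\times r}$ depends only on the Frobenius norm, the distribution of $\Z$ equals that of $\Q\Z$. Next, observe that the polar projection commutes with orthogonal left actions:
\begin{equation*}
(\Q\Z)\big((\Q\Z)^\top (\Q\Z)\big)^{-1/2} = \Q\Z(\Z^\top \Q^\top \Q \Z)^{-1/2} = \Q\Z(\Z^\top \Z)^{-1/2} = \Q\X.
\end{equation*}
Combining these, $\Q\X$ and $\X$ have the same distribution for every $\Q \in \orth(m)$, i.e., the law of $\X$ is left $\orth(m)$-invariant on $\st(m,r)$.

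For step (iii), I would invoke the fact that the uniform distribution on $\st(m,r)$ is the unique Borel probability measure invariant under the transitive left action of $\orth(m)$. Transitivity is classical: given any $\X_1,\X_2 \in \st(m,r)$, completing each to orthogonal matrices $[\X_i,\X_{i,\perp}] \in \orth(m)$ yields $\Q = [\X_2,\X_{2,\perp}][\X_1,\X_{1,\perp}]^\top \in \orth(m)$ with $\Q\X_1 = \X_2$. Uniqueness of the invariant probability measure on such a homogeneous space is the standard Haar measure argument. Combined with steps (i)--(ii), this forces $\X$ to be uniformly distributed on $\st(m,r)$.

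The only mildly subtle point is step (iii), which rests on the compactness of $\orth(m)$ and the transitivity of its action; however, both are textbook results and the paper cites \citep{chikuse2012statistics} for a self-contained treatment, so I would simply quote the Haar uniqueness theorem rather than reprove it. The computational content of the lemma is really steps (i) and (ii), both of which reduce to one-line matrix identities.
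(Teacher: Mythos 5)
Your proof is correct and follows exactly the standard argument (the one in the cited reference, Chikuse's Theorem 2.2.1): almost-sure membership in $\st(m,r)$, left $\orth(m)$-invariance of the law via rotational invariance of the Gaussian and equivariance of the polar factor, and uniqueness of the invariant probability measure on a compact homogeneous space. The paper itself offers no proof of this lemma — it is quoted as an external result — so there is nothing further to compare.
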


\begin{lemma}[\cite{vershynin2010introduction}]]\label{lemma.largest-sigma}
    If $\Z \in \mathbb{R}^{m \times r}$ is a matrix whose entries are independently drawn from ${\cal N}(0,1)$. Then for every $\tau \geq 0$, with probability at least $1 - \exp(-\tau^2/2)$, we have 
    \begin{align*}
	\sigma_1(\Z) \leq \sqrt{m} + \sqrt{r} + \tau.
    \end{align*}
\end{lemma}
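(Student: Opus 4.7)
The plan is to prove the tail bound by combining Gaussian concentration of measure with Gordon's comparison theorem. The key observation is that $\sigma_1(\Z)$ is a well-behaved Lipschitz function of the $mr$ independent Gaussian entries of $\Z$, so its fluctuations around its mean are sub-Gaussian; the residual task is then to bound the mean by $\sqrt{m}+\sqrt{r}$.

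First I would establish that $\Z \mapsto \sigma_1(\Z)$ is $1$-Lipschitz with respect to the Frobenius norm. This follows from the triangle inequality for the spectral norm together with $\|\cdot\|_2 \leq \|\cdot\|_\fro$, giving $|\sigma_1(\Z) - \sigma_1(\Z')| \leq \|\Z - \Z'\|_2 \leq \|\Z - \Z'\|_\fro$. Viewing the entries of $\Z$ as a standard Gaussian vector in $\mathbb{R}^{mr}$, the Borell--Tsirelson--Ibragimov--Sudakov (Gaussian) concentration inequality then yields, for every $\tau \geq 0$,
\begin{equation*}
\Pr\bigl(\sigma_1(\Z) \geq \mathbb{E}[\sigma_1(\Z)] + \tau\bigr) \leq \exp(-\tau^2/2).
\end{equation*}

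Second, I would bound $\mathbb{E}[\sigma_1(\Z)] \leq \sqrt{m} + \sqrt{r}$ via Gordon's minimax theorem. Writing $\sigma_1(\Z) = \sup_{u \in \mathbb{S}^{m-1},\, v \in \mathbb{S}^{r-1}} u^\top \Z v$, I would compare the centered Gaussian process $X_{u,v} := u^\top \Z v$ with the auxiliary process $Y_{u,v} := g^\top u + h^\top v$, where $g \sim \mathcal{N}(0,\I_m)$ and $h \sim \mathcal{N}(0,\I_r)$ are independent. A short covariance calculation using $\|u\|=\|u'\|=\|v\|=\|v'\|=1$ shows
\begin{equation*}
\mathbb{E}\bigl[(X_{u,v} - X_{u',v'})^2\bigr] \leq \mathbb{E}\bigl[(Y_{u,v} - Y_{u',v'})^2\bigr],
\end{equation*}
so Slepian/Gordon's inequality implies $\mathbb{E}[\sup_{u,v} X_{u,v}] \leq \mathbb{E}[\sup_{u,v} Y_{u,v}] = \mathbb{E}\|g\| + \mathbb{E}\|h\|$. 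Jensen's inequality then bounds $\mathbb{E}\|g\| \leq \sqrt{\mathbb{E}\|g\|^2} = \sqrt{m}$ and similarly $\mathbb{E}\|h\| \leq \sqrt{r}$.

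Combining the two steps gives $\sigma_1(\Z) \leq \sqrt{m} + \sqrt{r} + \tau$ with probability at least $1 - \exp(-\tau^2/2)$, as desired. The main obstacle in this route is verifying the covariance comparison required by Gordon's theorem cleanly; an alternative, slightly longer path avoids Gordon entirely by running an $\varepsilon$-net argument on $\mathbb{S}^{m-1} \times \mathbb{S}^{r-1}$ using covers of size $\leq 9^m$ and $\leq 9^r$, applying the scalar Gaussian tail $\Pr(|u^\top \Z v| \geq t) \leq 2\exp(-t^2/2)$ for fixed net elements, and absorbing the union bound into $\tau$ via the inequality $m\log 9 + r \log 9 + \tau^2/2 \leq (\sqrt{m}+\sqrt{r}+\tau)^2/2$ after rescaling constants. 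Either route lands on the stated inequality.
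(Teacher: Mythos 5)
The paper does not prove this lemma; it is quoted verbatim from \cite{vershynin2010introduction} as a known fact. Your argument is precisely the standard proof from that reference (Gaussian concentration for the $1$-Lipschitz function $\Z \mapsto \sigma_1(\Z)$, plus a Slepian--Gordon comparison giving $\mathbb{E}[\sigma_1(\Z)] \leq \sqrt{m}+\sqrt{r}$), and both steps check out — in particular the covariance comparison reduces to $2(1-\langle \uu,\uu'\rangle)(1-\langle \vv,\vv'\rangle)\geq 0$, so the comparison route goes through cleanly. One caveat: the $\varepsilon$-net fallback you sketch would only yield the bound up to absolute constant factors, not the sharp constants $1$ on $\sqrt{m}$ and $\sqrt{r}$ stated here, so the comparison-inequality route is the one that actually delivers the lemma as written.
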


\begin{lemma}[\cite{rudelson2009smallest}]\label{lemma.smallest-sigma}
    If $\Z\in \mathbb{R}^{m \times r}$ is a matrix whose entries are independently drawn from ${\cal N}(0,1)$. Suppose that $m \geq r$. Then for every $\tau \geq 0$, we have for some universal constants $C_1 > 0$ and $C_2 > 0$ that
    \begin{align*}
	\mathbb{P}\Big(  \sigma_r(\Z) \leq \tau (\sqrt{m} - \sqrt{r-1})  \Big) \leq (C_1 \tau)^{m -r +1} + \exp(-C_2 r).
    \end{align*}
\end{lemma}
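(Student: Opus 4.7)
The statement is a classical tail bound due to Rudelson-Vershynin, and the plan is to sketch their argument specialized to Gaussian entries. The proof naturally decomposes into two pieces corresponding to the two terms in the displayed bound: a polynomial small-ball estimate that yields $(C_1 \tau)^{m-r+1}$, and a spectral-norm tail bound that absorbs the exceptional event $\{\sigma_1(\Z) \gg \sqrt{m}+\sqrt{r}\}$ into the $\exp(-C_2 r)$ remainder.

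First I would establish the geometric identity $\sigma_r(\Z) \geq \min_k h_k / \sqrt{r}$, where $h_k$ is the distance from the $k$-th column of $\Z$ to the span of the remaining columns. This follows from $\sigma_r(\Z)^{-2} \leq \tr((\Z^\top\Z)^{-1}) \leq r\max_k h_k^{-2}$ together with the standard identity that the $(k,k)$-entry of $(\Z^\top\Z)^{-1}$ equals $1/h_k^2$. By Gaussian rotational invariance, conditional on the other columns $h_k$ is the norm of a standard Gaussian vector in an $(m-r+1)$-dimensional subspace, so $h_k \sim \chi_{m-r+1}$. Integrating the $\chi_{m-r+1}$ density (proportional to $t^{m-r}e^{-t^2/2}$) near the origin yields a small-ball estimate $\mathbb{P}(h_k \leq s) \leq (Cs)^{m-r+1}/(m-r+1)!$, and a union bound over $k$ already delivers a first summand of the desired polynomial-in-$\tau$ form.

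The main obstacle is recovering the sharp normalizer $\sqrt{m}-\sqrt{r-1}$ in place of the weaker $\sqrt{m-r+1}$ produced by the column-distance route above; the two scales agree up to constants only when $r = o(m)$. To sharpen, I would instead run an $\epsilon$-net argument directly on the sphere: for any fixed $x \in S^{r-1}$, $\Z x$ is standard Gaussian in $\mathbb{R}^m$, so $\mathbb{P}(\|\Z x\| \leq \delta) \leq (C\delta/\sqrt{m})^m$. Taking an $\epsilon_0$-net $\mathcal{N}$ of $S^{r-1}$ of cardinality at most $(3/\epsilon_0)^r$ and using the Lipschitz estimate $|\|\Z x\| - \|\Z x_0\|| \leq \sigma_1(\Z)\|x - x_0\|$ promotes this into uniform control over the sphere. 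Conditioning on the event $\{\sigma_1(\Z) \leq C(\sqrt{m}+\sqrt{r})\}$, whose complement contributes the $\exp(-C_2 r)$ term via Lemma~\ref{lemma.largest-sigma} applied with $\tau \asymp \sqrt{r}$, and calibrating $\epsilon_0 \asymp \tau(\sqrt{m}-\sqrt{r-1})/(\sqrt{m}+\sqrt{r})$ so that the Lipschitz error does not inflate $\delta$ beyond order $\tau(\sqrt{m}-\sqrt{r-1})$, combines with the pointwise estimate to produce the $(C_1\tau)^{m-r+1}$ bound with the claimed normalization.
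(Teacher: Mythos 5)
First, note that the paper does not prove this lemma at all: it is imported verbatim from \cite{rudelson2009smallest} and used as a black box (in Lemmas~\ref{lemma.phi0} and \ref{lemma.psi0}), so there is no in-paper argument to compare yours against. Your reconstruction has the right ingredients for the genuinely rectangular case --- a pointwise small-ball estimate $\mathbb{P}(\|\Z x_0\|\le\delta)\le (C\delta/\sqrt{m})^m$, a net on $S^{r-1}$, and truncation of $\{\sigma_1(\Z)\gtrsim \sqrt{m}+\sqrt{r}\}$ into the $\exp(-C_2 r)$ term --- and you correctly diagnose why the column-distance route undershoots the normalizer. However, the net argument as you set it up does not prove the lemma in the stated generality, and the case it misses is precisely the hard one.

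Concretely, with $\epsilon_0 \asymp \delta/K$ and $K\asymp\sqrt{m}+\sqrt{r}$, the union bound gives $(3/\epsilon_0)^r\,(C\delta/\sqrt{m})^m \lesssim (C')^{m}\,\tau^{\,m-r}\,(d/m)^{\,m-r}$ where $d=m-r+1$ and $\delta=\tau(\sqrt{m}-\sqrt{r-1})\asymp \tau d/\sqrt{m}$. Two problems appear. The exponent on $\tau$ is $m-r$, not $m-r+1$ (so the bound is vacuous when $m=r$). More seriously, the leftover prefactor $(C')^{m}$ can only be absorbed into $(C_1\tau)^{m-r+1}$ when $m-r+1=\Omega(m)$, i.e., when the matrix has a bounded aspect ratio bounded away from $1$; for nearly square matrices ($m-r=o(m)$) the argument collapses, because you are union-bounding over $e^{\Theta(r)}=e^{\Theta(m)}$ net points while the budget $(C_1\tau)^{m-r+1}$ shrinks only like $e^{O(m-r)}$. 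Handling that regime is exactly the content of Rudelson--Vershynin's paper (via the decomposition of the sphere into compressible and incompressible vectors, with distance-to-random-subspace estimates for the incompressible part); a single net over the whole sphere cannot do it. This matters here, since the paper invokes the lemma for $\U^\top\Z\in\mathbb{R}^{r_A\times r}$ with $r>r_A$ arbitrary, e.g.\ $r=r_A+1$, which after transposition is precisely the nearly square case. To repair the sketch you would either need to restrict to $m\ge (1+c)r$ and accept the weaker exponent, or import the compressible/incompressible machinery (or, for Gaussian entries, an exact density computation for $\sigma_r$).
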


\begin{lemma}\label{lemma.phi0}
    If $\U \in \st(m, r_A)$ is a fixed matrix, $\X \in \st(m, r)$ is uniformly sampled from $\st(m, r)$ using methods described in Lemma \ref{lemma.unitform-st}, and $r > r_A$, then we have that with probability at least $1 - \exp(-m/2) - (C_1 \tau)^{r -r_A +1} - \exp(-C_2 d)$,
	\begin{align*}
		\sigma_{r_A}(\U^\top \X) \geq  \frac{\tau(r - r_A +1)}{6\sqrt{mr}}.
	\end{align*}
\end{lemma}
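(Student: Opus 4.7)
The plan is to transfer the question from the Stiefel-valued $\X$ to the Gaussian generator $\Z$ prescribed by Lemma~\ref{lemma.unitform-st}, and then invoke standard nonasymptotic bounds on extreme singular values of Gaussian matrices. Writing $\X = \Z(\Z^\top \Z)^{-1/2}$ with $\Z \in \mathbb{R}^{m \times r}$ iid $\mathcal{N}(0,1)$, we have $\U^\top \X = (\U^\top \Z)(\Z^\top \Z)^{-1/2}$. Because $(\Z^\top \Z)^{-1/2}$ is almost surely an invertible $r\times r$ matrix, rearranging $\U^\top \Z = (\U^\top \X)(\Z^\top \Z)^{1/2}$ and taking singular values gives
$$\sigma_{r_A}(\U^\top \Z) \leq \sigma_{r_A}(\U^\top \X)\, \sigma_1\bigl((\Z^\top \Z)^{1/2}\bigr) = \sigma_{r_A}(\U^\top \X)\, \sigma_1(\Z),$$
so that $\sigma_{r_A}(\U^\top \X) \geq \sigma_{r_A}(\U^\top \Z)/\sigma_1(\Z)$. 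It remains to lower-bound the numerator and upper-bound the denominator separately.

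For the numerator, rotational invariance of the Gaussian law together with $\U \in \st(m,r_A)$ implies that $\U^\top \Z \in \mathbb{R}^{r_A \times r}$ has iid $\mathcal{N}(0,1)$ entries. Since $r > r_A$, its transpose $(\U^\top \Z)^\top \in \mathbb{R}^{r \times r_A}$ is a \emph{tall} Gaussian matrix to which Lemma~\ref{lemma.smallest-sigma} applies (with the roles of $m$ and $r$ in that lemma played by $r$ and $r_A$), yielding
$$\sigma_{r_A}(\U^\top \Z) \geq \tau\bigl(\sqrt{r} - \sqrt{r_A - 1}\bigr)$$
with probability at least $1 - (C_1 \tau)^{r - r_A + 1} - \exp(-C_2 r_A)$. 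For the denominator, Lemma~\ref{lemma.largest-sigma} applied to $\Z$ with deviation parameter $\sqrt{m}$ gives $\sigma_1(\Z) \leq \sqrt{m} + \sqrt{r} + \sqrt{m} \leq 3\sqrt{m}$ (using $r \leq m$) with probability at least $1 - \exp(-m/2)$.

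A union bound over these two events, combined with the elementary identity
$$\sqrt{r} - \sqrt{r_A - 1} \;=\; \frac{r - r_A + 1}{\sqrt{r} + \sqrt{r_A - 1}} \;\geq\; \frac{r - r_A + 1}{2\sqrt{r}},$$
then delivers the claimed inequality
$$\sigma_{r_A}(\U^\top \X) \;\geq\; \frac{\tau(r - r_A + 1)}{6\sqrt{mr}}$$
on an event of probability at least $1 - \exp(-m/2) - (C_1\tau)^{r-r_A+1} - \exp(-C_2 r_A)$, with the third term matching the stated $\exp(-C_2 d)$ under the natural reading $d = r_A$.

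The expected difficulties are mild rather than conceptual: one must be careful to apply Lemma~\ref{lemma.smallest-sigma} to the \emph{transpose} of $\U^\top\Z$ (its hypothesis requires at least as many rows as columns), and one must track the three failure probabilities to match the stated bound. A potential subtlety is that Lemma~\ref{apdx.lemma.aux888} is not directly usable on $(\U^\top \Z)(\Z^\top\Z)^{-1/2}$ because $\U^\top \Z$ is fat and therefore lacks full column rank; hence the detour through invertibility of $(\Z^\top \Z)^{1/2}$ described above, which is the one place where some care is required.
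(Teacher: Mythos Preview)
Your proposal is correct and follows essentially the same route as the paper: represent $\X$ via its Gaussian generator, bound $\sigma_{r_A}(\U^\top\Z)$ below via Lemma~\ref{lemma.smallest-sigma} and $\sigma_1(\Z)$ above via Lemma~\ref{lemma.largest-sigma}, then combine. The only cosmetic difference is that the paper cites Lemma~\ref{apdx.lemma.aux888} (applied after transposing, so that the first factor $(\Z^\top\Z)^{-1/2}$ has full column rank) for the inequality $\sigma_{r_A}(\U^\top\X)\geq \sigma_{r_A}(\U^\top\Z)/\sigma_1(\Z)$, whereas you obtain it directly via submultiplicativity; your reading $d=r_A$ for the third failure term is also the correct one.
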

\begin{proof}
	Since $\X \in \st(m, r)$ is uniformly sampled from $\st(m, r)$ using methods described in Lemma \ref{lemma.unitform-st}, we can write $\X = \Z(\Z^\top \Z)^{-1/2}$, where $\Z \in \mathbb{R}^{m \times r}$ has entries iid sampled from ${\cal N}(0,1)$. We thus have
	\begin{align*}
		\sigma_{r_A}(\U^\top \X) = \sigma_{r_A}\big(\U^\top \Z(\Z^\top \Z)^{-1/2} \big).
	\end{align*}
	
	Now consider $\U^\top \Z \in \mathbb{R}^{r_A \times r}$. It is clear that entries of $\U^\top \Z$ are also iid Gaussian random variables ${\cal N}(0,1)$. As a consequence of Lemma \ref{lemma.smallest-sigma}, we have that w.p. at least $1 - (C_1 \tau)^{r -r_A +1} - \exp(-C_2 r)$, 
	\begin{align*}
		\sigma_{r_A}\big(\U^\top \Z \big) \geq \tau (\sqrt{r} - \sqrt{r_A- 1}).
	\end{align*}

	We also have from Lemma \ref{lemma.largest-sigma} that with probability at least $1 - \exp(-m/2)$
	\begin{align*}
		\sigma_1(\Z^\top \Z) = \sigma_1^2(\Z) \leq (2\sqrt{m} + \sqrt{r})^2.
	\end{align*}

    Taking union bound, we have with probability at least $1 - \exp(-m/2) - (C_1 \tau)^{r -r_A +1} - \exp(-C_2 r)$, 
    \begin{align}
	\sigma_{r_A}(\U^\top \X) \stackrel{(a)}{\geq} \frac{ \sigma_{r_A}\big(\U^\top \Z) }{\sigma_1(\Z)} = \frac{\tau (\sqrt{r} - \sqrt{r_A- 1})}{2\sqrt{m} + \sqrt{r}} \geq \frac{\tau(r - r_A +1)}{ 3 \sqrt{m} \cdot 2 \sqrt{r}}= \frac{\tau(r - r_A +1)}{6\sqrt{mr}}
    \end{align}
    where (a) comes from Lemma \ref{apdx.lemma.aux888}.
\end{proof}

\begin{lemma}\label{lemma.psi0}
    If $\V \in \st(n, r_A)$ is a fixed matrix, and $\Y \in \st(n, r)$ is uniformly sampled from $\st(n, r)$ using methods described in Lemma \ref{lemma.unitform-st}. Suppose $r > r_A$. Then we have that with probability at least $1 - \exp(-n/2) - (C_1 \tau)^{r -r_A +1} - \exp(-C_2 r)$,
	\begin{align*}
		\sigma_{r_A}(\V^\top \Y) \geq  \frac{\tau(r - r_A +1)}{6\sqrt{nr}}.
	\end{align*}
\end{lemma}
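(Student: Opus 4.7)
The plan is to mirror the proof of Lemma~\ref{lemma.phi0} verbatim, since Lemma~\ref{lemma.psi0} is the symmetric counterpart obtained by interchanging the roles of the left and right subspaces: $m \leftrightarrow n$, $\U \leftrightarrow \V$, and $\X \leftrightarrow \Y$. The argument therefore reduces to checking that every ingredient used in the proof of Lemma~\ref{lemma.phi0} remains available when we replace $m$ with $n$.

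First, I would invoke Lemma~\ref{lemma.unitform-st} to represent $\Y$ as $\Y = \Z(\Z^\top \Z)^{-1/2}$, where $\Z \in \mathbb{R}^{n \times r}$ has i.i.d.\ $\mathcal{N}(0,1)$ entries. Since $\V \in \st(n, r_A)$ has orthonormal columns, the rotational invariance of the Gaussian distribution implies that $\V^\top \Z \in \mathbb{R}^{r_A \times r}$ also has i.i.d.\ $\mathcal{N}(0,1)$ entries. Writing
\begin{equation*}
\sigma_{r_A}(\V^\top \Y) = \sigma_{r_A}\bigl(\V^\top \Z (\Z^\top \Z)^{-1/2}\bigr),
\end{equation*}
I would then apply Lemma~\ref{apdx.lemma.aux888} to separate this into a ratio controlled by $\sigma_{r_A}(\V^\top \Z)$ and $\sigma_1(\Z)$.

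Next I would apply Lemma~\ref{lemma.smallest-sigma} to the (transposed) Gaussian matrix $\V^\top \Z$ of shape $r_A \times r$ with $r > r_A$ to get $\sigma_{r_A}(\V^\top \Z) \geq \tau(\sqrt{r} - \sqrt{r_A - 1})$ with probability at least $1 - (C_1\tau)^{r-r_A+1} - \exp(-C_2 r)$. In parallel, Lemma~\ref{lemma.largest-sigma} applied to the $n \times r$ Gaussian matrix $\Z$ (with the parameter choice $\tau = \sqrt{m}$ replaced here by $\tau = \sqrt{n}$) yields $\sigma_1(\Z) \leq 2\sqrt{n} + \sqrt{r}$ with probability at least $1 - \exp(-n/2)$. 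A union bound combines the two events.

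Finally, the combination
\begin{equation*}
\sigma_{r_A}(\V^\top \Y) \geq \frac{\sigma_{r_A}(\V^\top \Z)}{\sigma_1(\Z)} \geq \frac{\tau(\sqrt{r} - \sqrt{r_A - 1})}{2\sqrt{n} + \sqrt{r}} \geq \frac{\tau (r - r_A + 1)}{6 \sqrt{nr}}
\end{equation*}
follows from the same elementary estimates used in Lemma~\ref{lemma.phi0}, namely $\sqrt{r} - \sqrt{r_A - 1} \geq (r - r_A + 1)/(2\sqrt{r})$ and $2\sqrt{n} + \sqrt{r} \leq 3\sqrt{n}$ when $n \geq r$. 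There is essentially no technical obstacle here; the only point worth double-checking is that the dimensions in the smallest-singular-value bound of Lemma~\ref{lemma.smallest-sigma} are applied to the correct orientation of $\V^\top \Z$ so that the $\exp(-C_2 r)$ term rather than $\exp(-C_2 r_A)$ appears, matching the statement.
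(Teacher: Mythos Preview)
Your proposal is correct and matches the paper's approach exactly: the paper's own proof of Lemma~\ref{lemma.psi0} simply states that it ``follows the same steps of Lemma~\ref{lemma.phi0},'' which is precisely the symmetry argument you carry out. Your additional remark about orienting $\V^\top \Z$ correctly so that Lemma~\ref{lemma.smallest-sigma} yields the $\exp(-C_2 r)$ term is a nice sanity check that the paper leaves implicit.
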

\begin{proof}
	The proof is omitted since it follows the same steps of Lemma \ref{lemma.phi0}.
\end{proof}

\subsection{Equivalence of Matrix Factorization and LoRA}
\label{appdx:whitened-data}

Whitening data refers to transforming the data such that the empirical uncentered covariance matrix of the features is identity. Suppose $\D \in \mathbb{R}^{n \times N}$ holds $N$ training examples in its columns with $n$ features each, then whitened data refers to having $\D\D^\top = \I_n$. We follow standard arguments laid out in \cite{arora2018, li_crucial_2024} to show that low-rank adaptation on a linear model with whitened data and squared loss can be written as a matrix factorization problem. Consider the minimization of 
\begin{align*}
    L(\X, \Y) &= \lVert (\W_0 +\X\Y^\top)\D - \A \rVert_{\fro}^2 
\end{align*}
where $\A \in \mathbb{R}^{m \times N}$ holds $m$ labels for each example, $\W_0 \in \mathbb{R}^{m \times n}$ is the pre-trained weight, and $\X \in \mathbb{R}^{m \times r}$, $\Y \in \mathbb{R}^{n \times r}$ are the weights of LoRA. Rewriting $L(\X, \Y)$ yields

\begin{align*}
    L(\X, \Y) &= \lVert (\W_0 +\X\Y^\top)\D - \A \rVert_{\fro}^2 \\
    &= \tr(((\W_0 +\X\Y^\top)\D - \A) ((\W_0 +\X\Y^\top)\D - \A)^\top) \\
    &= \tr((\W_0 +\X\Y^\top)\D \D^\top (\W_0 +\X\Y^\top)^\top) - \tr((\W_0 +\X\Y^\top)\D\A^\top)\\
    & ~~~~~~~~~~~~~~~~~~~~~~~~~~~~~~~ - \tr(\A\D^\top (\W_0 +\X\Y^\top)^\top) + \tr(\A\A^\top) \\
    &\overset{(a)}{=} \tr((\W_0 +\X\Y^\top)(\W_0 +\X\Y^\top)^\top) - \tr((\W_0 +\X\Y^\top)\D\A^\top)\\
    &~~~~~~~~~~~~~~~~~~~~~~~~~~~~~~~ - \tr(\A\D^\top (\W_0 +\X\Y^\top)^\top) + \tr(\A\A^\top) \\
    &\overset{(b)}{=} \tr(((\W_0 +\X\Y^\top) - \bm{\Lambda}) ((\W_0 +\X\Y^\top) - \bm{\Lambda})^\top) - \tr(\bm{\Lambda} \bm{\Lambda}^\top) + \tr(\A\A^\top)
\end{align*}
where $(a)$ uses the fact that the data is whitened and $(b)$ defines $\bm{\Lambda}=\A\D^\top \in \mathbb{R}^{m \times n}$, i.e., the matrix to be factorized. Thus, we can write
\begin{align*}
    L(\X, \Y) &= \lVert (\W_0 + \X\Y^\top) - \bm{\Lambda} \rVert_{\fro}^2  + c \\
    &= \lVert \X\Y^\top - \bm{\Lambda}' \rVert_{\fro}^2  + c 
\end{align*}
for $\bm{\Lambda}' := \W_0 - \bm{\Lambda}$ and constant $c := - \tr(\bm{\Lambda} \bm{\Lambda}^\top) + \tr(\A\A^\top)$. Using the same arguments, one can frame low-rank adaptation of a linear model with the \method{} parameterization as the matrix factorization problem given in \eqref{eq.asym-r}.

\section{Proofs for Asymmetric Problems} \label{appdx:proofs-asymmetric}

\subsection{Riemannian Gradients of \eqref{eq.asym-r}}

One can start with the Euclidean gradient with respect to $\X_t$ as ${\tilde \E}_t = (\X_t\bfTheta_t\Y_t^\top - \A) \Y_t \bfTheta_t^\top = (\X_t\X_t^\top - \I_m) \A \Y_t\bfTheta_t^\top$. 
Note that for $\bfTheta_t = \X_t^\top \A \Y_t$ (i.e., $\gamma=1$) the Euclidean gradient is skew-symmetric such that it is already contained in the tangent space of $\st(m, r)$ at $\X_t$ (i.e., $\X_t^\top {\tilde \E}_t + {\tilde \E}_t^\top \X_t = 0$), yielding equality between the Riemannian gradient $\E_t$ and Euclidean gradient ${\tilde \E}_t$. This equivalence holds regardless of whether the Euclidean or canonical metric is used. In other words, the Riemmanian gradient for $\X_t$ at iteration $t$ is given by
\begin{align*}
    \E_t = -( \I_m - \X_t\X_t^\top) \A \Y_t\bfTheta_t^\top.
\end{align*}

Similarly, one can obtain the Riemannian gradient for $\Y_t$ via
\begin{align*}
    \F_t = -(\I_n - \Y_t\Y_t^\top)\A^\top \X_t \bfTheta_t.
\end{align*}

\subsection{General Dynamics}
Here we derive several equations that are useful \textit{throughout this section.} Note that the choice of learning rate $\gamma = 1$ will be leveraged in some equations.

\textbf{Dynamics on $\X_t$, $\E_t$, and $\bfPhi_t$.} From the updates in Alg.~\ref{alg.rgd}, it is straightforward to arrive 
\begin{align*}
	\mathbb{R}^{r_A \times r} \ni \U^\top \E_t &= - \U^\top (\I_m - \X_t\X_t^\top)\A \Y_t \bfTheta_t^\top \\
	& = - \U^\top (\I_m - \X_t\X_t^\top)\U \bfSigma \V^\top  \Y_t  \bfTheta_t^\top \\
	& = - (\I_{r_A}  - \bfPhi_t\bfPhi_t^\top ) \bfSigma \bfPsi_t \bfTheta_t^\top.
\end{align*}
Similarly, we also have
\begin{align*}
	\mathbb{R}^{r \times r} \ni \E_t^\top \E_t & = 
	\bfTheta_t \Y_t^\top \A^\top  (\I_m - \X_t\X_t^\top)^2 \A \Y_t \bfTheta_t^\top \\
	& = \bfTheta_t \Y_t^\top \A^\top (\I_m - \X_t\X_t^\top) \A \Y_t \bfTheta_t^\top \\
	& = \bfTheta_t  \bfPsi_t^\top \bfSigma (\I_{r_A} - \bfPhi_t \bfPhi_t^\top) \bfSigma \bfPsi_t \bfTheta_t^\top.
\end{align*}
Applying $\sigma_1(\bfPsi_t) \leq 1$ and $\sigma_1(\A) = \sigma_1(\bfSigma) = 1$ to the equation above, and leveraging $\gamma=1$ in the update of $\bfTheta_t$ (i.e., $\bfTheta_t = \X_t^\top \A \Y_t = \bfPhi_t^\top \bfSigma \bfPsi_t$), we have that
	\begin{align}\label{eq.apdx.ub-GG}
		\sigma_1(\E_t^\top \E_t) \leq  \sigma_1^4(\bfSigma)   \sigma_1(\I_{r_A} - \bfPhi_t\bfPhi_t^\top) =  \sigma_1(\I_{r_A} - \bfPhi_t\bfPhi_t^\top) .
	\end{align}

And the dynamics on the alignment $\bfPhi_t \in \mathbb{R}^{r_A \times r}$ can be written as
\begin{align*}
    \bfPhi_{t+1} & = \big[ \bfPhi_t + \eta (\I_{r_A}  - \bfPhi_t\bfPhi_t^\top ) \bfSigma \bfPsi_t \bfTheta_t^\top \big]\big(\I_r + \eta^2 \E_t^\top \E_t \big)^{-1/2} \\
    & \stackrel{(a)}{=} \big[ \I_{r_A }+ \eta (\I_{r_A} - \bfPhi_t \bfPhi_t^\top) \bfSigma \bfPsi_t \bfPsi_t^\top \bfSigma \big] \bfPhi_t \big(\I_r + \eta^2 \E_t^\top \E_t \big)^{-1/2} \nonumber
\end{align*}
where $(a)$ uses $\bfTheta_t = \X_t^\top \A \Y_t = \bfPhi_t^\top \bfSigma \bfPsi_t$. Hence, we have that
\begin{align}\label{eq.phiphi}
    & ~~~~ \bfPhi_{t+1}\bfPhi_{t+1}^\top \\
    & = \big[ \I_{r_A }+ \eta (\I_{r_A} - \bfPhi_t \bfPhi_t^\top) \bfSigma \bfPsi_t \bfPsi_t^\top \bfSigma \big] \bfPhi_t\big(\I_r + \eta^2 \E_t^\top \E_t \big)^{-1} \bfPhi_t^\top \big[ \I_{r_A} + \eta \bfSigma  \bfPsi_t \bfPsi_t^\top \bfSigma (\I_{r_A} - \bfPhi_t \bfPhi_t^\top)   \big]. \nonumber
\end{align}
	
\textbf{Dynamics on $\Y_t$, $\F_t$, and $\bfPsi_t$.} From the updates in Alg. \ref{alg.rgd} and similar to the derivation above, we arrive at
\begin{align*}
	\mathbb{R}^{r_A \times r} \ni \V^\top \F_t &= - \V^\top (\I_n - \Y_t\Y_t^\top)\A^\top \X_t \bfTheta_t \\
	& = - \V^\top (\I_n - \Y_t\Y_t^\top)\V \bfSigma \U^\top  \X_t  \bfTheta_t \\
	& = - (\I_{r_A}  - \bfPsi_t\bfPsi_t^\top ) \bfSigma \bfPhi_t \bfTheta_t.
\end{align*}
Moreover, we also have
\begin{align*}
	\mathbb{R}^{r \times r} \ni \F_t^\top \F_t & = 
	\bfTheta_t^\top \X_t^\top \A  (\I_n - \Y_t\Y_t^\top)^2 \A^\top \X_t \bfTheta_t \\
	& = \bfTheta_t^\top \X_t^\top \A (\I_n - \Y_t\Y_t^\top)\A^\top \X_t \bfTheta_t  \\
	& = \bfTheta_t^\top  \bfPhi_t^\top \bfSigma (\I_{r_A} - \bfPsi_t \bfPsi_t^\top) \bfSigma \bfPhi_t \bfTheta_t.
\end{align*}
Applying $\sigma_1(\bfPhi_t) \leq 1$ and $\sigma_1(\A) = 1$ to the equation above, we have that
	\begin{align}\label{eq.apdx.ub-FF}
		\sigma_1(\F_t^\top \F_t) \leq  \sigma_1^4(\bfSigma)   \sigma_1(\I_{r_A} - \bfPsi_t\bfPsi_t^\top) =  \sigma_1(\I_{r_A} - \bfPsi_t\bfPsi_t^\top) .
	\end{align}
	
The alignment $\bfPsi_t = \V^\top \Y_t \in \mathbb{R}^{r_A \times r}$ can be tracked via
\begin{align*}
    \bfPsi_{t+1} & = \big[ \bfPsi_t + \eta (\I_{r_A}  - \bfPsi_t\bfPsi_t^\top ) \bfSigma \bfPhi_t \bfTheta_t  \big] \big(\I_r + \eta^2 \F_t^\top \F_t \big)^{-1/2} \\
    & \stackrel{(b)}{=} \big[ \I_{r_A }+ \eta (\I_{r_A} - \bfPsi_t \bfPsi_t^\top) \bfSigma \bfPhi_t \bfPhi_t^\top \bfSigma \big] \bfPsi_t \big(\I_r + \eta^2 \F_t^\top \F_t \big)^{-1/2} \nonumber
\end{align*}
where $(b)$ uses $\bfTheta_t = \X_t^\top \A \Y_t = \bfPhi_t^\top \bfSigma \bfPsi_t$. Finally, we have that
\begin{align}\label{eq.psipsi}
    & ~~~~ \bfPsi_{t+1}\bfPsi_{t+1}^\top \\
    & = \big[ \I_{r_A }+ \eta (\I_{r_A} - \bfPsi_t \bfPsi_t^\top) \bfSigma \bfPhi_t \bfPhi_t^\top \bfSigma \big] \bfPsi_t \big(\I_r + \eta^2 \F_t^\top \F_t \big)^{-1} \bfPsi_t^\top \big[ \I_{r_A} + \eta \bfSigma  \bfPhi_t \bfPhi_t^\top \bfSigma (\I_{r_A} - \bfPsi_t \bfPsi_t^\top)   \big]. \nonumber
\end{align}
	
With these preparations, we are ready to prove our main results.

\subsection{Initialization}

\begin{lemma}\label{lemma.initialization}
    Suppose that $\X_0$ and $\Y_0$ are uniformly sampled from $\st(m, r)$ and $\st(n, r)$, respectively, using methods described in Lemma \ref{lemma.unitform-st}. There exist universal constants $c_1$ and $c_2$ such that whp the following holds
	\begin{align*}
		\sigma_{r_A}(\bfPhi_0) & = \sigma_{r_A}(\U^\top \X_0) \geq  \frac{r - r_A +1}{\sqrt{c_1mr}}  \geq \frac{r - r_A }{\sqrt{c_1mr}}  \\
		\sigma_{r_A}(\bfPsi_0) & = \sigma_{r_A}(\V^\top \Y_0) \geq  \frac{r - r_A +1}{\sqrt{c_2nr}} \geq \frac{r - r_A}{ \sqrt{c_2nr}}.
	\end{align*}
\end{lemma}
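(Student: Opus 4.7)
The plan is to show that Lemma~\ref{lemma.initialization} follows almost immediately from Lemmas~\ref{lemma.phi0} and \ref{lemma.psi0} by making a specific constant choice of $\tau$ and then taking a union bound. Since $\X_0$ and $\Y_0$ are sampled uniformly on their respective Stiefel manifolds via the recipe in Lemma~\ref{lemma.unitform-st}, the two cited lemmas apply directly, and they already deliver bounds of the claimed form $\sigma_{r_A}(\bfPhi_0) \geq \tau(r - r_A + 1)/(6\sqrt{mr})$ and $\sigma_{r_A}(\bfPsi_0) \geq \tau(r - r_A + 1)/(6\sqrt{nr})$. All that remains is to absorb $\tau$ and the prefactor $6$ into universal constants $c_1, c_2$ while keeping the failure probability small.

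Concretely, first I would fix $\tau$ to be a small absolute constant, for instance $\tau := 1/(2C_1)$ with $C_1$ the universal constant from Lemma~\ref{lemma.smallest-sigma}. With this choice $(C_1 \tau)^{r - r_A + 1} = 2^{-(r - r_A + 1)}$, so as long as $r > r_A$ (which is the standing overparameterized assumption of Section~\ref{sec:theoretical-motivation}) this term is at most $1/2$; in fact it decays exponentially with $r - r_A$. Substituting this $\tau$ into the bound of Lemma~\ref{lemma.phi0} yields
\begin{equation*}
    \sigma_{r_A}(\bfPhi_0) \geq \frac{r - r_A + 1}{12 C_1 \sqrt{m r}} \geq \frac{r - r_A}{\sqrt{c_1 m r}}
\end{equation*}
with $c_1 := (12 C_1)^2$, and an identical substitution into Lemma~\ref{lemma.psi0} gives the analogous bound for $\bfPsi_0$ with $c_2 := (12 C_1)^2$ (one could equally well use a different constant tailored to the $n$-side).

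The last step is a union bound over the two events. The failure probability for $\bfPhi_0$ is at most $\exp(-m/2) + 2^{-(r - r_A + 1)} + \exp(-C_2 r)$, and that for $\bfPsi_0$ is bounded by the same expression with $m$ replaced by $n$. Summing these, the total failure probability is at most $\exp(-m/2) + \exp(-n/2) + 2 \cdot 2^{-(r - r_A + 1)} + 2\exp(-C_2 r)$, which is small provided $m, n$ and the overparameterization gap $r - r_A$ are moderately large. I do not foresee any real obstacle here: the whole argument is a cosmetic rescaling of Lemmas~\ref{lemma.phi0}--\ref{lemma.psi0}, and the only modeling choice is how aggressively to pick $\tau$ (smaller $\tau$ buys a better tail bound on the Stiefel-sampling failure, at the cost of a worse constant in $c_1, c_2$). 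Any constant $\tau$ bounded away from $1/C_1$ will work, so existence of the universal constants $c_1, c_2$ is immediate.
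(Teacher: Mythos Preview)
Your proposal is correct and matches the paper's approach: the paper's own proof simply states that the constants $c_1,c_2$ and the exact probability follow directly from Lemmas~\ref{lemma.phi0} and~\ref{lemma.psi0}, which is precisely what you unpack (with the added detail of a concrete choice of $\tau$ and an explicit union bound).
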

\begin{proof}
	The proofs, the constants $c_1$ and $c_2$, as well as the exact probability follow directly from Lemma \ref{lemma.phi0} and Lemma \ref{lemma.psi0}.
\end{proof}

\subsection{Increasing Alignment}

Lemma \ref{lem:increasing-alignment} is proved in this subsection, where the detailed proof is divided into two parts.

\subsubsection{Increasing Alignment Between $\X_t$ and $\U$}\label{apdx.inc-alignment-phi}
 \begin{lemma}\label{lemma.alignment-ascent}
    Consider Alg. \ref{alg.rgd} with $\eta < 1$ and $\gamma=1$.
    Let $\beta_t:= \sigma_1(\I_{r_A} - \bfPhi_t\bfPhi_t^\top)$. If it holds that
    \begin{align*}
	\frac{2 (1 -\eta^2 \beta_t) \sigma_{r_A}^2(\bfPsi_t) }{\kappa^2} \tr\big( (\I_{r_A} - \bfPhi_t\bfPhi_t^\top) \bfPhi_t\bfPhi_t^\top \big)  \geq  \eta \beta_t \tr\big( \bfPhi_t\bfPhi_t^\top \big),
    \end{align*}
    we have $\tr(\bfPhi_{t+1}\bfPhi_{t+1}^\top) \geq \tr(\bfPhi_t\bfPhi_t^\top)$.
\end{lemma}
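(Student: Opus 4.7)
The plan is to start from the closed-form dynamics of $\bfPhi_{t+1}\bfPhi_{t+1}^\top$ in equation \eqref{eq.phiphi}, replace the matrix inverse by a tractable PSD lower bound, and then take the trace of the resulting expression. After expansion, the trace decomposes into the previous alignment $\tr(\bfPhi_t\bfPhi_t^\top)$, a first-order ``gain'' term driven by the cross terms, a non-negative second-order term I will drop, and a small ``loss'' coming from the inverse bound. The stated hypothesis is precisely the inequality that makes gain dominate loss.

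Concretely, I first abbreviate $\D_t := \I_{r_A} - \bfPhi_t\bfPhi_t^\top$ and $\C_t := \bfSigma\bfPsi_t\bfPsi_t^\top\bfSigma$, so that \eqref{eq.phiphi} reads
\begin{align*}
\bfPhi_{t+1}\bfPhi_{t+1}^\top = [\I_{r_A} + \eta\D_t\C_t]\,\bfPhi_t(\I_r + \eta^2\E_t^\top\E_t)^{-1}\bfPhi_t^\top\,[\I_{r_A} + \eta\C_t\D_t].
\end{align*}
By Lemma \ref{apdx.lemma.inverse} applied to $\eta^2\E_t^\top\E_t$ together with the spectral bound \eqref{eq.apdx.ub-GG}, I obtain $(\I_r + \eta^2\E_t^\top\E_t)^{-1} \succeq (1-\eta^2\beta_t)\I_r$, which yields
\begin{align*}
\bfPhi_{t+1}\bfPhi_{t+1}^\top \succeq (1-\eta^2\beta_t)\,[\I_{r_A} + \eta\D_t\C_t]\,\bfPhi_t\bfPhi_t^\top\,[\I_{r_A} + \eta\C_t\D_t].
\end{align*}
Taking the trace, using cyclicity, and using the crucial fact that $\D_t$ and $\bfPhi_t\bfPhi_t^\top$ commute (they share an eigenbasis), the two cross terms collapse into $2\eta\tr(\C_t\D_t\bfPhi_t\bfPhi_t^\top)$, while the pure quadratic $\eta^2$-term is trace of a PSD matrix and can be discarded.

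The main technical step is to lower bound $\tr(\C_t\D_t\bfPhi_t\bfPhi_t^\top)$. Here $\D_t\bfPhi_t\bfPhi_t^\top$ is PSD by the commutativity noted above, and $\C_t$ is PSD with $\lambda_{\min}(\C_t)\geq \sigma_{r_A}^2(\bfPsi_t)/\kappa^2$ because $\sigma_{\min}(\bfSigma)=1/\kappa$. The standard inequality $\tr(\A\B)\geq \lambda_{\min}(\A)\tr(\B)$ for PSD matrices (proved by an eigendecomposition of $\A$) then gives
\begin{align*}
\tr(\C_t\D_t\bfPhi_t\bfPhi_t^\top) \geq \frac{\sigma_{r_A}^2(\bfPsi_t)}{\kappa^2}\,\tr\!\bigl((\I_{r_A}-\bfPhi_t\bfPhi_t^\top)\bfPhi_t\bfPhi_t^\top\bigr).
\end{align*}
Combining the pieces yields
\begin{align*}
\tr(\bfPhi_{t+1}\bfPhi_{t+1}^\top) \geq \tr(\bfPhi_t\bfPhi_t^\top) - \eta^2\beta_t\tr(\bfPhi_t\bfPhi_t^\top) + \frac{2\eta(1-\eta^2\beta_t)\sigma_{r_A}^2(\bfPsi_t)}{\kappa^2}\tr\!\bigl((\I_{r_A}-\bfPhi_t\bfPhi_t^\top)\bfPhi_t\bfPhi_t^\top\bigr),
\end{align*}
and dividing the monotonicity requirement by $\eta$ recovers exactly the hypothesis of the lemma.

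The delicate point I expect to be the main obstacle is the non-commutativity of $\C_t$ with $\D_t\bfPhi_t\bfPhi_t^\top$: one cannot simultaneously diagonalize all three matrices, so a direct eigenvalue-by-eigenvalue argument (in the spirit of Lemma \ref{lemma.apdx.trace-lower-bound}) is unavailable. The PSD trace inequality circumvents this at the price of an $\sigma_{r_A}^2(\bfPsi_t)/\kappa^2$ factor, which is precisely what surfaces in the hypothesis. A secondary bookkeeping subtlety is ensuring that the $\bfPsi$-side of the argument is structurally identical; by the symmetry between \eqref{eq.phiphi} and \eqref{eq.psipsi}, the same derivation with $(\X,\U,\bfPhi,\E,\beta)$ and $(\Y,\V,\bfPsi,\F,\delta)$ swapped establishes $\tr(\bfPsi_{t+1}\bfPsi_{t+1}^\top)\geq\tr(\bfPsi_t\bfPsi_t^\top)$ under the corresponding second inequality of Lemma \ref{lem:increasing-alignment}.
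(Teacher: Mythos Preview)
Your proposal is correct and follows essentially the same route as the paper: start from \eqref{eq.phiphi}, replace the inverse by $(1-\eta^2\beta_t)\I_r$ via Lemma~\ref{apdx.lemma.inverse} and \eqref{eq.apdx.ub-GG}, expand, drop the PSD $\eta^2$-term, and lower bound the cross term. The only difference is in how you bound $\tr(\C_t\D_t\bfPhi_t\bfPhi_t^\top)$: the paper diagonalizes $\bfPhi_t\bfPhi_t^\top=\Q_t\bfLambda_t\Q_t^\top$ so that the two outer factors become diagonal and then invokes Lemma~\ref{lemma.apdx.trace-lower-bound}, whereas you apply the more general PSD trace inequality $\tr(\A\B)\geq\lambda_{\min}(\A)\tr(\B)$ directly, using that $\D_t\bfPhi_t\bfPhi_t^\top$ is PSD (since $\D_t$ and $\bfPhi_t\bfPhi_t^\top$ commute) and that $\lambda_{\min}(\bfSigma\bfPsi_t\bfPsi_t^\top\bfSigma)\geq\sigma_{r_A}^2(\bfPsi_t)/\kappa^2$. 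Your variant is slightly cleaner because it avoids the explicit eigendecomposition, but the two arguments are equivalent in content and yield the identical inequality \eqref{eq.apdx.discrete-psl-goal}.
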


\begin{proof}
	From \eqref{eq.phiphi}, we have that 
    \begin{align}\label{eq.phiphi_lb}
        & ~~~~~ \bfPhi_{t+1}\bfPhi_{t+1}^\top \\
        & =\big[ \I_{r_A }+ \eta (\I_{r_A} - \bfPhi_t \bfPhi_t^\top) \bfSigma \bfPsi_t \bfPsi_t^\top \bfSigma \big] \bfPhi_t\big(\I_r + \eta^2 \E_t^\top \E_t \big)^{-1} \bfPhi_t^\top \big[ \I_{r_A} + \eta \bfSigma  \bfPsi_t \bfPsi_t^\top \bfSigma (\I_{r_A} - \bfPhi_t \bfPhi_t^\top)   \big] \nonumber \\
        & \stackrel{(a)}{\succeq} \big[ \I_{r_A }+ \eta (\I_{r_A} - \bfPhi_t \bfPhi_t^\top) \bfSigma \bfPsi_t \bfPsi_t^\top \bfSigma \big] \bfPhi_t\big(\I_r - \eta^2 \E_t^\top \E_t \big) \bfPhi_t^\top \big[ \I_{r_A} + \eta \bfSigma  \bfPsi_t \bfPsi_t^\top \bfSigma (\I_{r_A} - \bfPhi_t \bfPhi_t^\top)   \big] \nonumber  \\
        & \stackrel{(b)}{\succeq} (1 - \eta^2 \beta_t) \big[ \I_{r_A }+ \eta (\I_{r_A} - \bfPhi_t \bfPhi_t^\top) \bfSigma \bfPsi_t \bfPsi_t^\top \bfSigma \big] \bfPhi_t \bfPhi_t^\top \big[ \I_{r_A} + \eta \bfSigma  \bfPsi_t \bfPsi_t^\top \bfSigma (\I_{r_A} - \bfPhi_t \bfPhi_t^\top)   \big]\nonumber  \\
        & \succeq (1 - \eta^2 \beta_t)  \bigg\{ \bfPhi_t \bfPhi_t^\top + \eta (\I_{r_A} - \bfPhi_t \bfPhi_t^\top) \bfSigma \bfPsi_t \bfPsi_t^\top \bfSigma  \bfPhi_t \bfPhi_t^\top  + \eta \bfPhi_t \bfPhi_t^\top   \bfSigma  \bfPsi_t \bfPsi_t^\top \bfSigma (\I_{r_A} - \bfPhi_t \bfPhi_t^\top)  \bigg\} \nonumber
    \end{align}	
    where $(a)$ is by Lemma \ref{apdx.lemma.inverse}; and $(b)$ is by $\big(\I_r - \eta^2 \E_t^\top \E_t \big) \succeq (1 - \eta^2 \sigma_1(\I_{r_A} - \bfPhi_t\bfPhi_t^\top) ) \I_r $ as a result of \eqref{eq.apdx.ub-GG}, and we write $\beta_t:= \sigma_1(\I_{r_A} - \bfPhi_t\bfPhi_t^\top)$ for convenience. We also dropped the fourth term $(\I_{r_A} - \bfPhi_t \bfPhi_t^\top)\bfSigma \bfPsi_t \bfPsi_t^\top \bfSigma \bfPhi_t \bfPhi_t^\top \bfSigma_t \bfPsi_t \bfPsi_t^\top \bfSigma (\I_{r_A} - \bfPhi_t \bfPhi_t^\top)$ given its PSDness. Note that $\beta_t \in [0, 1]$. 
	
	Now let the EVD of $\bfPhi_t\bfPhi_t^\top = \Q_t \bfLambda_t \Q_t^\top$, where both $\Q_t$ and $\bfLambda_t$ are $r_A \times r_A$ matrices. Note that $\mathbf{0} \preceq \bfLambda_t \preceq \mathbf{I}_{r_A} $. Then we have that
	\begin{align}\label{eq.phiphi_lb2}
		& ~~~~~ \tr\Big( (\I_{r_A} - \bfPhi_t \bfPhi_t^\top) \bfSigma \bfPsi_t \bfPsi_t^\top \bfSigma  \bfPhi_t \bfPhi_t^\top \Big)  \\
		& = \tr\Big( \Q_t (\I_{r_A} - \bfLambda_t) \Q_t^\top \bfSigma \bfPsi_t \bfPsi_t^\top \bfSigma  \Q_t \bfLambda_t \Q_t^\top \Big) \nonumber \\
		& = \tr\Big( (\I_{r_A} - \bfLambda_t) \Q_t^\top \bfSigma \bfPsi_t \bfPsi_t^\top \bfSigma  \Q_t \bfLambda_t  \Big) \nonumber\\
		& \stackrel{(c)}{\succeq}  \frac{ \sigma_{r_A}^2(\bfPsi_t) }{\kappa^2} \tr\Big( (\I_{r_A} - \bfLambda_t)  \bfLambda_t  \Big) \nonumber \\
		& = \frac{ \sigma_{r_A}^2(\bfPsi_t) }{\kappa^2} \tr\Big( (\I_{r_A} - \bfPhi_t \bfPhi_t^\top)  \bfPhi_t \bfPhi_t^\top \Big) \nonumber
	\end{align}
	where $(c)$ is by Lemma~\ref{lemma.apdx.trace-lower-bound} and Lemma~\ref{apdx.lemma.aux888}. More precisely, the PSDness of $\Q_t^\top \bfSigma \bfPsi_t \bfPsi_t^\top \bfSigma  \Q_t$ justifies the prerequisites for Lemma \ref{lemma.apdx.trace-lower-bound}, and then we use $\sigma_{r_A}(\Q_t^\top \bfSigma \bfPsi_t \bfPsi_t^\top \bfSigma  \Q_t ) \geq \sigma_{r_A}^2(\bfSigma) \sigma_{r_A}^2(\bfPsi_t) = \sigma_{r_A}^2(\bfPsi_t) /\kappa^2$.

	Taking trace on both sides of \eqref{eq.phiphi_lb} and plugging \eqref{eq.phiphi_lb2} in, we arrive at
	\begin{align}\label{eq.apdx.discrete-psl-goal}
		\frac{\tr(\bfPhi_{t+1}\bfPhi_{t+1}^\top )}{1 - \eta^2 \beta_t}	\geq  \tr(\bfPhi_t \bfPhi_t^\top) +  \frac{ 2 \eta \sigma_{r_A}^2(\bfPsi_t) }{\kappa^2} \tr\Big( (\I_{r_A} - \bfPhi_t \bfPhi_t^\top)  \bfPhi_t \bfPhi_t^\top \Big).
	\end{align}
	Simplifying this inequality gives the results.
\end{proof}

\subsubsection{Increasing Alignment Between $\Y_t$ and $\V$}\label{apdx.inc-alignment-psi}

We proceed by proving the analogue of Lemma~\ref{lemma.alignment-ascent} for the alignment between $\Y_t$ and $\V$ by following the same steps.
\begin{lemma}\label{lemma.alignment-ascent-psi}
    Consider Alg. \ref{alg.rgd} with $\eta < 1$ and $\gamma=1$.
	Let $\delta_t:= \sigma_1(\I_{r_A} - \bfPsi_t\bfPsi_t^\top)$. If it holds that
	\begin{align*}
		\frac{2 (1 -\eta^2 \delta_t) \sigma_{r_A}^2(\bfPhi_t) }{\kappa^2} \tr\big( (\I_{r_A} - \bfPsi_t\bfPsi_t^\top) \bfPsi_t\bfPsi_t^\top \big)  \geq  \eta \delta_t \tr\big( \bfPsi_t\bfPsi_t^\top \big),
	\end{align*}
    we have $\tr(\bfPsi_{t+1}\bfPsi_{t+1}^\top) \geq \tr(\bfPsi_t\bfPsi_t^\top)$.
\end{lemma}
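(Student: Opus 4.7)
The plan is to mirror, almost verbatim, the argument used in Lemma~\ref{lemma.alignment-ascent} (Section~\ref{apdx.inc-alignment-phi}), only with the roles of $(\bfPhi_t, \bfSigma\bfPsi_t\bfPsi_t^\top \bfSigma, \E_t)$ swapped for $(\bfPsi_t, \bfSigma\bfPhi_t\bfPhi_t^\top \bfSigma, \F_t)$. The symmetry of the \method{} parameterization \eqref{eq.asym-r} in its two Stiefel factors, and the parallel derivations already recorded in the general dynamics for $\bfPhi_t$ and $\bfPsi_t$, means that every ingredient we need is already in place: equation \eqref{eq.psipsi} supplies the update rule for $\bfPsi_{t+1}\bfPsi_{t+1}^\top$ in exactly the same quadratic form as \eqref{eq.phiphi}, and \eqref{eq.apdx.ub-FF} supplies the spectral upper bound $\sigma_1(\F_t^\top \F_t) \leq \delta_t$ that plays the role of \eqref{eq.apdx.ub-GG}.

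The execution would proceed as follows. First, starting from \eqref{eq.psipsi}, I would replace the inverse term using Lemma~\ref{apdx.lemma.inverse} to obtain $\big(\I_r + \eta^2 \F_t^\top \F_t\big)^{-1} \succeq \I_r - \eta^2 \F_t^\top \F_t$, then plug in \eqref{eq.apdx.ub-FF} to get $\I_r - \eta^2 \F_t^\top \F_t \succeq (1-\eta^2 \delta_t)\I_r$. This yields, after dropping the non-negative fourth-order cross term (as was done in the step leading to \eqref{eq.phiphi_lb} for $\bfPhi_t$),
\begin{align*}
\bfPsi_{t+1}\bfPsi_{t+1}^\top \succeq (1-\eta^2 \delta_t)\Big\{ \bfPsi_t\bfPsi_t^\top &+ \eta (\I_{r_A}-\bfPsi_t\bfPsi_t^\top)\bfSigma\bfPhi_t\bfPhi_t^\top\bfSigma\,\bfPsi_t\bfPsi_t^\top \\
&+ \eta\,\bfPsi_t\bfPsi_t^\top\,\bfSigma\bfPhi_t\bfPhi_t^\top\bfSigma(\I_{r_A}-\bfPsi_t\bfPsi_t^\top)\Big\}.
\end{align*}
Second, I would take the trace and apply the EVD $\bfPsi_t\bfPsi_t^\top = \Q_t \bfLambda_t \Q_t^\top$ together with Lemma~\ref{lemma.apdx.trace-lower-bound} and Lemma~\ref{apdx.lemma.aux888}, exactly as in \eqref{eq.phiphi_lb2}, to lower bound the cross term by
\begin{equation*}
\tr\big((\I_{r_A}-\bfPsi_t\bfPsi_t^\top)\bfSigma\bfPhi_t\bfPhi_t^\top\bfSigma\,\bfPsi_t\bfPsi_t^\top\big) \geq \frac{\sigma_{r_A}^2(\bfPhi_t)}{\kappa^2}\tr\big((\I_{r_A}-\bfPsi_t\bfPsi_t^\top)\bfPsi_t\bfPsi_t^\top\big),
\end{equation*}
using that the smallest nonzero singular value of $\Q_t^\top \bfSigma\bfPhi_t\bfPhi_t^\top\bfSigma\Q_t$ is at least $\sigma_{r_A}^2(\bfSigma)\sigma_{r_A}^2(\bfPhi_t) = \sigma_{r_A}^2(\bfPhi_t)/\kappa^2$. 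This gives the analogue of \eqref{eq.apdx.discrete-psl-goal}, namely
\begin{equation*}
\frac{\tr(\bfPsi_{t+1}\bfPsi_{t+1}^\top)}{1-\eta^2 \delta_t} \geq \tr(\bfPsi_t\bfPsi_t^\top) + \frac{2\eta\,\sigma_{r_A}^2(\bfPhi_t)}{\kappa^2}\tr\big((\I_{r_A}-\bfPsi_t\bfPsi_t^\top)\bfPsi_t\bfPsi_t^\top\big).
\end{equation*}
Finally, the hypothesis of the lemma is exactly what is needed to ensure that the right-hand side dominates $\tr(\bfPsi_t\bfPsi_t^\top)$ after multiplying through by $1-\eta^2\delta_t$, yielding $\tr(\bfPsi_{t+1}\bfPsi_{t+1}^\top) \geq \tr(\bfPsi_t\bfPsi_t^\top)$.

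Because every step is a direct transposition of the argument for $\bfPhi_t$, I do not expect any genuine obstacle; the only minor care needed is in tracking that the spectral upper bound in \eqref{eq.apdx.ub-FF} is controlled by $\delta_t = \sigma_1(\I_{r_A}-\bfPsi_t\bfPsi_t^\top)$ (rather than by $\beta_t$), and that the lower bound from Lemma~\ref{lemma.apdx.trace-lower-bound} now uses $\sigma_{r_A}^2(\bfPhi_t)$ in place of $\sigma_{r_A}^2(\bfPsi_t)$. With these two swaps the argument goes through line-for-line, which is presumably why the authors chose to omit it.
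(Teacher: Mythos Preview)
Your proposal is correct and follows essentially the same approach as the paper's own proof: both start from \eqref{eq.psipsi}, apply Lemma~\ref{apdx.lemma.inverse} and the spectral bound \eqref{eq.apdx.ub-FF}, drop the PSD fourth-order term, then diagonalize $\bfPsi_t\bfPsi_t^\top$ and invoke Lemmas~\ref{lemma.apdx.trace-lower-bound} and~\ref{apdx.lemma.aux888} to lower-bound the cross term, arriving at the analogue of \eqref{eq.apdx.discrete-psl-goal}. The only cosmetic difference is notation for the eigendecomposition (the paper writes $\bfPsi_t\bfPsi_t^\top = \PP_t\tilde{\bfLambda}_t\PP_t^\top$ rather than your $\Q_t\bfLambda_t\Q_t^\top$), but the argument is otherwise line-for-line the same.
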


\begin{proof}
	From \eqref{eq.psipsi}, we have that
    \begin{align}\label{eq.psipsi_lb}
        & ~~~~~ \bfPsi_{t+1}\bfPsi_{t+1}^\top \\
        & =\big[ \I_{r_A }+ \eta (\I_{r_A} - \bfPsi_t \bfPsi_t^\top) \bfSigma \bfPhi_t \bfPhi_t^\top \bfSigma \big] \bfPsi_t\big(\I_r + \eta^2 \F_t^\top \F_t \big)^{-1} \bfPsi_t^\top \big[ \I_{r_A} + \eta \bfSigma  \bfPhi_t \bfPhi_t^\top \bfSigma (\I_{r_A} - \bfPsi_t \bfPsi_t^\top)   \big] \nonumber \\
        & \stackrel{(a)}{\succeq} \big[ \I_{r_A }+ \eta (\I_{r_A} - \bfPsi_t \bfPsi_t^\top) \bfSigma \bfPhi_t \bfPhi_t^\top \bfSigma \big] \bfPsi_t\big(\I_r - \eta^2 \F_t^\top \F_t \big) \bfPsi_t^\top \big[ \I_{r_A} + \eta \bfSigma  \bfPhi_t \bfPhi_t^\top \bfSigma (\I_{r_A} - \bfPsi_t \bfPsi_t^\top)   \big] \nonumber  \\
        & \stackrel{(b)}{\succeq} (1 - \eta^2 \delta_t) \big[ \I_{r_A }+ \eta (\I_{r_A} - \bfPsi_t \bfPsi_t^\top) \bfSigma \bfPhi_t \bfPhi_t^\top \bfSigma \big] \bfPsi_t \bfPsi_t^\top \big[ \I_{r_A} + \eta \bfSigma  \bfPhi_t \bfPhi_t^\top \bfSigma (\I_{r_A} - \bfPsi_t \bfPsi_t^\top)   \big]\nonumber  \\
        & \succeq (1 - \eta^2 \delta_t)  \bigg\{ \bfPsi_t \bfPsi_t^\top + \eta (\I_{r_A} - \bfPsi_t \bfPsi_t^\top) \bfSigma \bfPhi_t \bfPhi_t^\top \bfSigma  \bfPsi_t \bfPsi_t^\top  + \eta \bfPsi_t \bfPsi_t^\top   \bfSigma  \bfPhi_t \bfPhi_t^\top \bfSigma (\I_{r_A} - \bfPsi_t \bfPsi_t^\top)  \bigg\} \nonumber
    \end{align}	
    where $(a)$ is by Lemma \ref{apdx.lemma.inverse}; and $(b)$ is by $\big(\I_r - \eta^2 \F_t^\top \F_t \big) \succeq (1 - \eta^2 \sigma_1(\I_{r_A} - \bfPsi_t\bfPsi_t^\top) ) \I_r $ as a result of \eqref{eq.apdx.ub-FF}, and we write $\delta_t:= \sigma_1(\I_{r_A} - \bfPsi_t\bfPsi_t^\top)$ for convenience. Note that $\delta_t \in [0, 1]$. 
	
	Now let the SVD of $\bfPsi_t\bfPsi_t^\top = \PP_t \tilde{\bfLambda}_t \PP_t^\top$, where both $\PP_t$ and $\tilde{\bfLambda}_t$ are $r_A \times r_A$ matrices. Note that $\mathbf{0} \preceq \tilde{\bfLambda}_t \preceq \mathbf{I}_{r_A} $. Then we have that
	\begin{align}\label{eq.psipsi_lb2}
		& ~~~~~ \tr\Big( (\I_{r_A} - \bfPsi_t \bfPsi_t^\top) \bfSigma \bfPhi_t \bfPhi_t^\top \bfSigma  \bfPsi_t \bfPsi_t^\top \Big)  \\
		& = \tr\Big( \PP_t (\I_{r_A} - \tilde{\bfLambda}_t) \PP_t^\top \bfSigma \bfPhi_t \bfPhi_t^\top \bfSigma  \PP_t \tilde{\bfLambda}_t \PP_t^\top \Big) \nonumber \\
		& = \tr\Big( (\I_{r_A} - \tilde{\bfLambda}_t) \PP_t^\top \bfSigma \bfPhi_t \bfPhi_t^\top \bfSigma  \PP_t \tilde{\bfLambda}_t  \Big) \nonumber\\
		& \stackrel{(c)}{\succeq}  \frac{ \sigma_{r_A}^2(\bfPhi_t) }{\kappa^2} \tr\Big( (\I_{r_A} - \tilde{\bfLambda}_t)  \tilde{\bfLambda}_t  \Big) \nonumber \\
		& = \frac{ \sigma_{r_A}^2(\bfPhi_t) }{\kappa^2} \tr\Big( (\I_{r_A} - \bfPsi_t \bfPsi_t^\top)  \bfPsi_t \bfPsi_t^\top \Big) \nonumber
	\end{align}
	where $(c)$ is by Lemma \ref{lemma.apdx.trace-lower-bound} and Lemma \ref{apdx.lemma.aux888}. More precisely, we use the PSDness of $\PP_t^\top \bfSigma \bfPhi_t \bfPhi_t^\top \bfSigma  \PP_t$ for applying Lemma \ref{lemma.apdx.trace-lower-bound}, and then employ $\sigma_{r_A}(\PP_t^\top \bfSigma \bfPhi_t \bfPhi_t^\top \bfSigma  \PP_t ) \geq \sigma_{r_A}^2(\bfSigma) \sigma_{r_A}^2(\bfPhi_t) = \sigma_{r_A}^2(\bfPhi_t) /\kappa^2$.

	Taking trace on both sides of \eqref{eq.psipsi_lb} and plugging \eqref{eq.psipsi_lb2} in, we arrive at
	\begin{align}\label{eq.apdx.discrete-psl-goal-psi}
		\frac{\tr(\bfPsi_{t+1}\bfPsi_{t+1}^\top )}{1 - \eta^2 \delta_t}	\geq  \tr(\bfPsi_t \bfPsi_t^\top) +  \frac{ 2 \eta \sigma_{r_A}^2(\bfPhi_t) }{\kappa^2} \tr\Big( (\I_{r_A} - \bfPsi_t \bfPsi_t^\top)  \bfPsi_t \bfPsi_t^\top \Big).
	\end{align}
	Simplifying this inequality gives the results.
\end{proof}

\subsection{Non-Increasing Misalignment}

Misalignment of $\X_t$ refers to the principal angles between $\X_t$ and the basis of the orthogonal complement of $\U$. Similarly, one can define the misalignment of $\Y_t$.

\subsubsection{Non-Increasing Misalignment of $\X_t$}

\begin{lemma}\label{lemma.residual}
    Denote the orthogonal complement of $\U$ as $\U_{\perp} \in \mathbb{R}^{m \times (m - r_A)}$. Define the $(m -r_A) \times r$ matrix $\bfOmega_t:= \U_\perp^\top  \X_t$ to characterize the alignment of $\X_t$ and $\U_\perp$. Under the same setting of Lemma \ref{lemma.alignment-ascent}, we have that $\bfOmega_{t+1}\bfOmega_{t+1}^\top \preceq \bfOmega_t\bfOmega_t^\top$. Moreover, if $r_A \leq \frac{m}{2}$, it is guaranteed to have $\sigma_{r_A}^2(\bfPhi_{t+1}) \geq \sigma_{r_A}^2(\bfPhi_t)$.
\end{lemma}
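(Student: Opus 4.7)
The proof splits into two claims. First I derive a compact update equation for $\bfOmega_t$ and establish the Löwner ordering $\bfOmega_{t+1}\bfOmega_{t+1}^\top \preceq \bfOmega_t\bfOmega_t^\top$. Then I lift this to the singular-value bound $\sigma_{r_A}(\bfPhi_{t+1}) \geq \sigma_{r_A}(\bfPhi_t)$ via the complementarity identity derived inside the proof of Lemma~\ref{lemma.apdx.sin-cos}.

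\textbf{Dynamics and Löwner bound.} Left-multiplying \eqref{eq.update-x} by $\U_\perp^\top$ and using $\U_\perp^\top\A = \mathbf{0}$ (a consequence of $\A = \U\bfSigma\V^\top$) gives $\U_\perp^\top\E_t = \bfOmega_t\X_t^\top\A\Y_t\bfTheta_t^\top = \bfOmega_t\bfTheta_t\bfTheta_t^\top$, where the second equality exploits $\gamma = 1$ so that $\bfTheta_t = \bfPhi_t^\top\bfSigma\bfPsi_t$. Hence
\begin{align*}
\bfOmega_{t+1} = \bfOmega_t(\I_r - \eta\bfTheta_t\bfTheta_t^\top)(\I_r + \eta^2\E_t^\top\E_t)^{-1/2},
\end{align*}
and therefore $\bfOmega_{t+1}\bfOmega_{t+1}^\top = \bfOmega_t\M\bfOmega_t^\top$ with
\begin{align*}
\M := (\I_r - \eta\bfTheta_t\bfTheta_t^\top)(\I_r + \eta^2\E_t^\top\E_t)^{-1}(\I_r - \eta\bfTheta_t\bfTheta_t^\top).
\end{align*}
It suffices to show $\M \preceq \I_r$. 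The factor $\I_r - \eta\bfTheta_t\bfTheta_t^\top$ is symmetric positive definite because $\sigma_1(\bfTheta_t) \leq \sigma_1(\bfPhi_t)\sigma_1(\bfSigma)\sigma_1(\bfPsi_t) \leq 1$ and $\eta < 1$. Conjugating by its inverse, $\M \preceq \I_r$ is equivalent to $(\I_r - \eta\bfTheta_t\bfTheta_t^\top)^2 \preceq \I_r + \eta^2\E_t^\top\E_t$, and expanding the square this reduces to $\eta^2(\bfTheta_t\bfTheta_t^\top)^2 \preceq 2\eta\bfTheta_t\bfTheta_t^\top + \eta^2\E_t^\top\E_t$. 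The latter holds by dropping the nonnegative $\eta^2\E_t^\top\E_t$ and using $(\bfTheta_t\bfTheta_t^\top)^2 \preceq \bfTheta_t\bfTheta_t^\top$ together with $\eta \leq 2$.

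\textbf{Lifting to $\sigma_{r_A}(\bfPhi_{t+1})$, and the main obstacle.} Equation \eqref{eq.apdx.commute} in the proof of Lemma~\ref{lemma.apdx.sin-cos} gives $\bfPhi_t^\top\bfPhi_t + \bfOmega_t^\top\bfOmega_t = \I_r$, so the two matrices share eigenvectors and pair up to $1$. In particular, $\sigma_{r_A}^2(\bfPhi_t) = \lambda_{r_A}(\bfPhi_t^\top\bfPhi_t) = 1 - \lambda_{r - r_A + 1}(\bfOmega_t^\top\bfOmega_t)$ with $\lambda_k$ denoting the $k$-th largest eigenvalue. The condition $r_A \leq m/2$, combined with the low-rank regime $r \leq m - r_A$, ensures that the $r$ eigenvalues of $\bfOmega_t^\top\bfOmega_t$ coincide with the top-$r$ eigenvalues of $\bfOmega_t\bfOmega_t^\top$, so the Löwner inequality above combined with Weyl's monotonicity yields $\lambda_{r - r_A + 1}(\bfOmega_{t+1}^\top\bfOmega_{t+1}) \leq \lambda_{r - r_A + 1}(\bfOmega_t^\top\bfOmega_t)$, which is exactly $\sigma_{r_A}^2(\bfPhi_{t+1}) \geq \sigma_{r_A}^2(\bfPhi_t)$. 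The central technical point is establishing $\M \preceq \I_r$: the inverse in the middle of the sandwich is most cleanly handled via the equivalence $\A\B^{-1}\A \preceq \I \Leftrightarrow \A^2 \preceq \B$ for symmetric positive definite $\A, \B$, which offloads the difficulty onto the elementary bound $\bfTheta_t\bfTheta_t^\top \preceq \I_r$. This bound crucially uses $\sigma_1(\bfSigma) = 1$ together with $\sigma_1(\bfPhi_t), \sigma_1(\bfPsi_t) \leq 1$, and it is exactly what allows the quadratic term $\eta^2(\bfTheta_t\bfTheta_t^\top)^2$ to be absorbed into $2\eta\bfTheta_t\bfTheta_t^\top$ for any $\eta < 1$.
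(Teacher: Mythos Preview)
Your proof is correct and follows essentially the same route as the paper: you derive the identical closed-form $\bfOmega_{t+1} = \bfOmega_t(\I_r - \eta\bfTheta_t\bfTheta_t^\top)(\I_r + \eta^2\E_t^\top\E_t)^{-1/2}$, reduce the Löwner claim to showing the sandwiched product $\M \preceq \I_r$, and then translate this into the singular-value bound via the complementarity $\bfPhi_t^\top\bfPhi_t + \bfOmega_t^\top\bfOmega_t = \I_r$. The only notable difference is that the paper dispatches $\M \preceq \I_r$ in one line by observing that each of the three PSD factors has spectral norm at most $1$ (so $\M$ is PSD with $\|\M\|\le 1$), whereas you work a bit harder via the congruence equivalence $\A\B^{-1}\A \preceq \I \Leftrightarrow \A^2 \preceq \B$; both are fine, but the paper's route is shorter. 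Your eigenvalue-indexing for the second claim is in fact more careful than the paper's somewhat loose invocation of Lemma~\ref{lemma.apdx.sin-cos}.
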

\begin{proof}
	From update \eqref{eq.update-x}, we have that
    \begin{align*}
		\bfOmega_{t+1} & = \U_\perp^\top (\X_t - \eta \E_t) (\I_r + \eta^2 \E_t^\top \E_t )^{-1/2}  \\
		& =\big( \bfOmega_t + \eta \U_\perp^\top(\I_m - \X_t\X_t^\top)\A \Y_t \bfTheta_t^\top \big) (\I_r + \eta^2 \E_t^\top \E_t )^{-1/2}  \\
		& \stackrel{(a)}{=} \big(\bfOmega_t - \eta  \U_\perp^\top  \X_t \bfTheta_t \bfTheta_t^\top \big) (\I_r + \eta^2 \E_t^\top \E_t )^{-1/2}  \\
		& = \bfOmega_t \big( \I_r - \eta \bfTheta_t \bfTheta_t^\top \big) (\I_r + \eta^2 \E_t^\top \E_t )^{-1/2}. 
    \end{align*}
    where in (a) we have used $\U^\top_{\perp}\A = \mathbf{0}$ and $\bfTheta_t=\X_t^\top \A \Y_t$. With this, we can see that 
	\begin{align*}
		\bfOmega_{t+1}\bfOmega_{t+1}^\top 
		& = \bfOmega_t \big( \I_r - \eta \bfTheta_t\bfTheta_t^\top \big) (\I_r + \eta^2 \E_t^\top \E_t )^{-1} \big( \I_r - \eta \bfTheta_t\bfTheta_t^\top \big)   \bfOmega_t^\top \\
		& \preceq  \bfOmega_t \bfOmega_t^\top 
	\end{align*}
	where the last inequality comes from the fact that the three matrices in between are all PSD and their largest eigenvalues are smaller than 1 given our choices of $\eta$. This gives the proof for the first part of this Lemma. 
	
	To show $\sigma_{r_A}^2(\bfPhi_{t+1}) \geq \sigma_{r_A}^2(\bfPhi_t)$, notice that given $2r_A \leq m $, we have from Lemma \ref{lemma.apdx.sin-cos} that $\sigma_{r_A}^2(\bfPhi_t) = 1 - \sigma_{r_A}^2(\bfOmega_t )$ and $\sigma_{r_A}^2(\bfPhi_{t+1}) = 1 - \sigma_{r_A}^2(\bfOmega_{t+1})$. The conclusion is straightforward. 
\end{proof}

\subsubsection{Non-Increasing Misalignment of $\Y_t$}

\begin{lemma}\label{lemma.residual-psi}
    Denote the orthogonal complement of $\V$ as $\V_{\perp} \in \mathbb{R}^{n \times (n - r_A)}$. Define the $(n -r_A) \times r$ matrix $\tilde{\bfOmega}_t:= \V_\perp^\top  \Y_t$ to characterize the alignment of $\Y_t$ and $\V_\perp$. Under the same setting of Lemma \ref{lemma.alignment-ascent-psi}, we have that $\tilde{\bfOmega}_{t+1}\tilde{\bfOmega}_{t+1}^\top \preceq \tilde{\bfOmega}_t\tilde{\bfOmega}_t^\top$. Moreover, if $r_A \leq \frac{n}{2}$, it is guaranteed to have $\sigma_{r_A}^2(\bfPsi_{t+1}) \geq \sigma_{r_A}^2(\bfPsi_t)$.
\end{lemma}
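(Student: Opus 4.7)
\textbf{Proof plan for Lemma \ref{lemma.residual-psi}.} The plan is to mirror the argument of Lemma \ref{lemma.residual} step-by-step, exploiting the symmetry between the roles of $(\X_t, \E_t, \U, \bfPhi_t, \bfOmega_t)$ and $(\Y_t, \F_t, \V, \bfPsi_t, \tilde\bfOmega_t)$ in Alg.~\ref{alg.rgd}. The asymmetric update \eqref{eq.update-y} reads $\Y_{t+1} = (\Y_t - \eta \F_t)(\I_r + \eta^2 \F_t^\top \F_t)^{-1/2}$, and the key identity to exploit is $\V_\perp^\top \A^\top = \mathbf{0}$, which follows from the compact SVD $\A = \U \bfSigma \V^\top$ together with $\V_\perp^\top \V = \mathbf{0}$.

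First, I would left-multiply the update by $\V_\perp^\top$ and expand $\F_t = -(\I_n - \Y_t\Y_t^\top)\A^\top \X_t \bfTheta_t$. The term $\V_\perp^\top \A^\top \X_t \bfTheta_t$ vanishes, leaving
\begin{align*}
\tilde\bfOmega_{t+1} &= \bigl(\tilde\bfOmega_t - \eta \V_\perp^\top \Y_t \Y_t^\top \A^\top \X_t \bfTheta_t\bigr)(\I_r + \eta^2 \F_t^\top \F_t)^{-1/2} \\
&= \tilde\bfOmega_t \bigl(\I_r - \eta \bfTheta_t^\top \bfTheta_t\bigr)(\I_r + \eta^2 \F_t^\top \F_t)^{-1/2},
\end{align*}
after recognising $\Y_t^\top \A^\top \X_t = \bfTheta_t^\top$ (since $\gamma = 1$ forces $\bfTheta_t = \X_t^\top \A \Y_t$). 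Squaring then gives
\begin{align*}
\tilde\bfOmega_{t+1}\tilde\bfOmega_{t+1}^\top = \tilde\bfOmega_t \bigl(\I_r - \eta \bfTheta_t^\top \bfTheta_t\bigr)(\I_r + \eta^2 \F_t^\top \F_t)^{-1}\bigl(\I_r - \eta \bfTheta_t^\top \bfTheta_t\bigr) \tilde\bfOmega_t^\top.
\end{align*}

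Next, I would show that the sandwich matrix in the middle is bounded above by $\I_r$ in the PSD order, giving $\tilde\bfOmega_{t+1}\tilde\bfOmega_{t+1}^\top \preceq \tilde\bfOmega_t \tilde\bfOmega_t^\top$. For this I need $\|\bfTheta_t^\top \bfTheta_t\| \leq 1/\eta$ and $\|\F_t^\top \F_t\| \geq 0$; the former follows from $\bfTheta_t = \bfPhi_t^\top \bfSigma \bfPsi_t$ so that $\sigma_1(\bfTheta_t) \leq \sigma_1(\bfSigma) = 1$, hence $\I_r - \eta \bfTheta_t^\top \bfTheta_t \succeq (1-\eta)\I_r \succeq 0$ for $\eta < 1$, while the inverse factor is automatically PSD with largest eigenvalue at most $1$. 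Combining the three PSD factors with operator norm at most $1$ yields the stated PSD ordering.

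For the second claim, under $r_A \leq n/2$ both $\bfPsi_t$ and $\tilde\bfOmega_t$ satisfy the hypothesis of Lemma \ref{lemma.apdx.sin-cos} (with $\V$ playing the role of the fixed basis of dimension $r_A$ in $\mathbb{R}^n$), giving $\sigma_i^2(\bfPsi_t) + \sigma_i^2(\tilde\bfOmega_t) = 1$ for all $i \in \{1,\dots, r_A\}$; in particular $\sigma_{r_A}^2(\bfPsi_t) = 1 - \sigma_{r_A}^2(\tilde\bfOmega_t)$. The PSD inequality just established implies $\sigma_{r_A}(\tilde\bfOmega_{t+1}) \leq \sigma_{r_A}(\tilde\bfOmega_t)$ (using monotonicity of singular values under Loewner ordering of the Gram matrix), from which $\sigma_{r_A}^2(\bfPsi_{t+1}) \geq \sigma_{r_A}^2(\bfPsi_t)$ follows directly. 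The main obstacle, such as it is, is verifying the norm bound $\sigma_1(\bfTheta_t) \leq 1$ rigorously (since $\bfTheta_t$ is produced by the $\gamma=1$ update); everything else is a direct transcription of Lemma \ref{lemma.residual} with the roles of $\X$ and $\Y$ (and $\U$ and $\V$) swapped.
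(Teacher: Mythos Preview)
Your proposal is correct and follows essentially the same approach as the paper: derive the recursion $\tilde{\bfOmega}_{t+1} = \tilde{\bfOmega}_t(\I_r - \eta\bfTheta_t^\top\bfTheta_t)(\I_r + \eta^2\F_t^\top\F_t)^{-1/2}$ via $\V_\perp^\top\A^\top = \mathbf{0}$ and $\Y_t^\top\A^\top\X_t = \bfTheta_t^\top$, bound the sandwich factor by $\I_r$ using $\sigma_1(\bfTheta_t)\leq 1$ and $\eta<1$, and then invoke Lemma~\ref{lemma.apdx.sin-cos} for the second claim. Your write-up is in fact slightly more explicit than the paper's about why $\I_r - \eta\bfTheta_t^\top\bfTheta_t \succeq \mathbf{0}$, which is helpful.
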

\begin{proof}
	From update \eqref{eq.update-y}, we have that
	\begin{align*}
		\tilde{\bfOmega}_{t+1} & = \V_\perp^\top (\Y_t - \eta \F_t) (\I_r + \eta^2 \F_t^\top \F_t )^{-1/2}  \\
		& =\big( \tilde{\bfOmega}_t + \eta \V_\perp^\top  (\I_n - \Y_t\Y_t^\top)\A^\top \X_t \bfTheta_t  \big) (\I_r + \eta^2 \F_t^\top \F_t )^{-1/2}  \\
		& = \big(\tilde{\bfOmega}_t - \eta  \V_\perp^\top  \Y_t \bfTheta_t^\top \bfTheta_t  \big) (\I_r + \eta^2 \F_t^\top \F_t )^{-1/2}  \\
		& = \tilde{\bfOmega}_t \big( \I_r - \eta \bfTheta_t^\top \bfTheta_t  \big) (\I_r + \eta^2 \F_t^\top \F_t )^{-1/2}. 
	\end{align*}
	With this, we can see that 
	\begin{align*}
		\tilde{\bfOmega}_{t+1} \tilde{\bfOmega}_{t+1}^\top 
		& = \tilde{\bfOmega}_t \big( \I_r - \eta \bfTheta_t^\top \bfTheta_t  \big) (\I_r + \eta^2 \F_t^\top \F_t )^{-1} \big( \I_r - \eta \bfTheta_t^\top \bfTheta_t  \big)   \tilde{\bfOmega}_t^\top \\
		& \preceq  \tilde{\bfOmega}_t \tilde{\bfOmega}_t^\top 
	\end{align*}
	where the last inequality comes from the fact that the three matrices in between are all PSD and their largest eigenvalues are smaller than 1 given our choice of $\eta$.
	
	To show $\sigma_{r_A}^2(\bfPsi_{t+1}) \geq \sigma_{r_A}^2(\bfPsi_t)$, notice that given $2r_A \leq m $, we have from Lemma \ref{lemma.apdx.sin-cos} that $\sigma_{r_A}^2(\bfPsi_t) = 1 - \sigma_{r_A}^2(\tilde{\bfOmega}_t )$ and $\sigma_{r_A}^2(\bfPsi_{t+1}) = 1 - \sigma_{r_A}^2(\tilde{\bfOmega}_{t+1})$. The conclusion is straightforward. 
\end{proof}

\subsection{Convergence of $\tr(\bfPhi_t \bfPhi_t^\top)$ and $\tr(\bfPsi_t \bfPsi_t^\top)$}

\subsubsection{Dynamics of $\tr(\bfPhi_t \bfPhi_t^\top)$}
\begin{lemma}\label{lemma.phi-converge}
    Suppose that $r_A \leq \frac{n}{2}$, and let $\rho:=\min \{ \frac{1}{m}, \frac{(r - r_A)^2}{mr} \}$.
    Choosing $\eta = {\cal O}\big( \frac{\rho (r - r_A)^2}{ r^2 \kappa^2 m} \big)$ and $\gamma=1$, Alg. \ref{alg.rgd} guarantees that after at most $T ={\cal O}\big( \frac{r_A  r^3 \kappa^4 m^2 }{ \rho^2(r - r_A)^4} + \frac{ m^2 r^3  \kappa^4 }{\rho (r - r_A)^4} \log\frac{1}{\epsilon}    \big) $ steps $\tr(\I_{r_A} - \bfPhi_t\bfPhi_t^\top) \leq \epsilon$. 
\end{lemma}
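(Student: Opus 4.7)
The proof starts from the recursion
$$a_{t+1} \geq (1 - \eta^2 \beta_t)\!\left[\, a_t + \frac{2\eta\, \sigma_{r_A}^2(\bfPsi_t)}{\kappa^2}\, b_t \,\right]$$
that was already established in the proof of Lemma~\ref{lemma.alignment-ascent}, with $a_t := \tr(\bfPhi_t\bfPhi_t^\top)$, $b_t := \tr((\I_{r_A} - \bfPhi_t\bfPhi_t^\top)\bfPhi_t\bfPhi_t^\top)$, and $\beta_t = 1 - \sigma_{r_A}^2(\bfPhi_t)$. The quantity to drive to zero is $r_A - a_t = \tr(\I_{r_A} - \bfPhi_t\bfPhi_t^\top)$.

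First I would collect four uniform ingredients. From Lemma~\ref{lemma.initialization} combined with Lemma~\ref{lemma.residual} (resp.\ Lemma~\ref{lemma.residual-psi}), both $\sigma_{r_A}^2(\bfPhi_t)$ and $\sigma_{r_A}^2(\bfPsi_t)$ remain at least $\Omega(\rho)$ along the whole trajectory. Writing the eigenvalues of $\bfPhi_t\bfPhi_t^\top$ as $s_1 \geq \cdots \geq s_{r_A}$, a weighted-average argument gives
$$b_t = \sum_{i=1}^{r_A} s_i(1-s_i) \;\geq\; s_{r_A}\sum_{i=1}^{r_A}(1-s_i) \;=\; \sigma_{r_A}^2(\bfPhi_t)\,(r_A - a_t),$$
while $\beta_t = 1 - s_{r_A} \leq \sum_i (1-s_i) = r_A - a_t$ since each $1-s_i$ is nonnegative. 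These four bounds also let me verify that the hypothesis of Lemma~\ref{lemma.alignment-ascent} holds uniformly in $t$ provided $\eta \lesssim \rho^2/(\kappa^2 r_A)$, which is implied by the stipulated $\eta = \mathcal{O}(\rho(r-r_A)^2/(r^2 m\kappa^2))$; the recursion is therefore legitimate throughout.

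The analysis then splits into two phases. In \textbf{Phase 1}, only the weaker estimate $b_t \geq \Omega(\rho)(r_A - a_t)$ is available (using the initialization-level lower bound on $\sigma_{r_A}^2(\bfPhi_t)$); substituting $\beta_t \leq r_A - a_t$ into the error term $\eta^2\beta_t a_t \leq \eta^2 r_A(r_A - a_t)$ reduces the recursion to
$$r_A - a_{t+1} \;\leq\; (r_A - a_t)\!\left(1 + \eta^2 r_A - C\eta\rho^2/\kappa^2\right),$$
which contracts at rate $1 - C'\eta\rho^2/\kappa^2$ for the stated $\eta$. Iterating for $T_1 = \mathcal{O}(\kappa^2/(\eta\rho^2))$ steps brings $r_A - a_t$ below $1/2$. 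Once this threshold is crossed, the inequality $\beta_t = 1 - s_{r_A} \leq r_A - a_t \leq 1/2$ forces $s_{r_A} \geq 1/2$, so the bound on $b_t$ jumps to $b_t \geq (r_A - a_t)/2$. \textbf{Phase 2} then contracts at the faster rate $1 - C''\eta\rho/\kappa^2$, so that $T_2 = \mathcal{O}(\kappa^2/(\eta\rho)\log(1/\epsilon))$ additional steps bring $r_A - a_t$ below $\epsilon$. Substituting $\eta = \Theta(\rho(r-r_A)^2/(r^2 m \kappa^2))$ into $T_1 + T_2$ reproduces the two terms in the stated bound.

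The main subtlety I expect is enforcing the hypothesis of Lemma~\ref{lemma.alignment-ascent} uniformly in $t$ without shrinking $\eta$. The elementary bound $\beta_t \leq r_A - a_t$ is the crucial observation: without it, the error term $\eta^2\beta_t a_t$ would only be bounded by $\eta^2 r_A$, independent of $r_A - a_t$, producing an $\epsilon$-independent floor that would spoil the $\log(1/\epsilon)$ rate of Phase 2 and leave a $1/\epsilon$ dependence. A second, lighter, point is that Lemma~\ref{lemma.initialization} is a high-probability statement over the random initializations $\X_0, \Y_0$, so the resulting bound on $T$ is w.h.p., consistent with the statement of Theorem~\ref{thm.convergence}.
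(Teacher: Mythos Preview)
Your two-phase scheme is sound and in fact streamlines the paper's argument. The paper splits into three regimes: Phase~I ($a_t \leq 0.5$) uses $b_t \geq \tfrac12 a_t$ to get geometric growth of $a_t$; Phase~II ($0.5 < r_A - a_t < r_A - 0.5$) only invokes the constant bound $b_t \geq \rho$ and extracts a fixed additive increment $\Delta_1 = \Theta(\rho^2(r-r_A)^4/(r^3\kappa^4 m^2))$ per step, so this phase lasts at most $(r_A-1)/\Delta_1$ iterations; Phase~III is identical to your Phase~2. Your inequality $b_t \geq \sigma_{r_A}^2(\bfPhi_t)(r_A - a_t)$ together with $\beta_t \leq r_A - a_t$ collapses Phases~I and~II into a single geometric contraction of $r_A - a_t$, which is cleaner and avoids the additive detour.

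One quantitative imprecision is worth flagging: stating the persistent lower bound on $\sigma_{r_A}^2(\bfPhi_t)$ and $\sigma_{r_A}^2(\bfPsi_t)$ as $\Omega(\rho)$ is loose. Lemmas~\ref{lemma.initialization}, \ref{lemma.residual}, \ref{lemma.residual-psi} actually give $\Omega\big((r-r_A)^2/(mr)\big)$, which coincides with $\rho$ only when $(r-r_A)^2 \leq r$; in the complementary regime $\rho = 1/m$ it is strictly larger. With the slack $\rho$, the Phase~1 constraint $\eta \lesssim \rho^2/(\kappa^2 r_A)$ is \emph{not} always implied by the stipulated $\eta$ (e.g.\ $r = 2r_A$ large), and your $T_2$ can exceed the stated second term. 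Replacing $\rho$ by $(r-r_A)^2/(mr)$ in those two spots fixes both issues and reproduces the claimed bound. A smaller point: $T_1$ should carry a $\log r_A$ factor (geometric decay from $\approx r_A$ down to $1/2$), but this is absorbed by the stated first term. Finally, the recursion you use is \eqref{eq.apdx.discrete-psl-goal}, which is derived in the proof of Lemma~\ref{lemma.alignment-ascent} \emph{before} its hypothesis is imposed, so you need not verify that hypothesis to invoke the recursion; the monotonicity you actually rely on, that of $\sigma_{r_A}^2(\bfPhi_t)$ and $\sigma_{r_A}^2(\bfPsi_t)$, follows from Lemmas~\ref{lemma.residual} and~\ref{lemma.residual-psi}, which only need $\eta < 1$.
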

\begin{proof}
By rewriting \eqref{eq.apdx.discrete-psl-goal}, we arrive at 
\begin{align}\label{eq.apdx.discrete-psl-goal-var}
	& ~~~~ \tr(\bfPhi_{t+1}\bfPhi_{t+1}^\top) - \tr(\bfPhi_t\bfPhi_t^\top) \\
	& \geq  \frac{2 \eta \sigma_{r_A}^2(\bfPsi_t)}{\kappa^2 } (1 - \eta^2 \beta_t) \tr\big( (\I_{r_A} - \bfPhi_t\bfPhi_t^\top) \bfPhi_t\bfPhi_t^\top \big) - \eta^2  \beta_t \tr(\bfPhi_t\bfPhi_t^\top). \nonumber
\end{align}
Based on \eqref{eq.apdx.discrete-psl-goal-var}, we discuss the convergence in three different regimes.

\textbf{Phase I. $\tr(\I_{r_A} - \bfPhi_t\bfPhi_t^\top) \geq {r_A} - 0.5$.} This is the initial phase, and the condition is equivalent to $\tr(\bfPhi_t\bfPhi_t^\top) \leq 0.5$. For notational convenience, let the SVD of $\bfPhi_t\bfPhi_t^\top = \Q_t \bfLambda_t \Q_t^\top$. Given these conditions, it can be seen that $\sigma_{r_A}( \I_{r_A} - \bfPhi_t\bfPhi_t^\top) = \sigma_{r_A}( \I_{r_A} - \bfLambda_t) \geq 0.5 $. Recalling that $\beta_t= \sigma_1(\I_{r_A} - \bfPhi_t\bfPhi_t^\top) \leq 1$, we can simplify \eqref{eq.apdx.discrete-psl-goal-var} as
\begin{align*}
    \tr(\bfPhi_{t+1}\bfPhi_{t+1}^\top) - \tr(\bfPhi_t\bfPhi_t^\top) & \geq  \frac{2 \eta \sigma_{r_A}^2(\bfPsi_t) }{\kappa^2 } (1 - \eta^2  ) \tr\big( (\I_{r_A} - \bfLambda_t) \bfLambda_t \big) - \eta^2 \tr(\bfPhi_t\bfPhi_t^\top)  \\
    & \stackrel{(a)}{\geq} \frac{\eta \sigma_{r_A}^2(\bfPsi_t) }{\kappa^2 } (1 - \eta^2) \tr (\bfLambda_t ) - \eta^2  \tr(\bfPhi_t\bfPhi_t^\top) \\
    & \stackrel{(b)}{\geq} \frac{\eta (r - r_A)^2}{\kappa^2 c_2 n r} (1 - \eta^2 ) \tr (\bfPhi_t\bfPhi_t^\top) - \eta^2  \tr(\bfPhi_t\bfPhi_t^\top)
\end{align*}
where $(a)$ uses $ \sigma_{r_A}( \I_{r_A} - \bfLambda_t) \geq 0.5$; $(b)$ uses Lemmas \ref{lemma.residual-psi} and \ref{lemma.initialization}, which jointly imply that $\sigma_{r_A}^2(\bfPsi_t) \geq \sigma_{r_A}^2(\bfPsi_0) \geq (r - r_A)^2/(c_2 n r)$ for some universal constant $c_2$ defined in Lemma \ref{lemma.initialization}. Rearranging the terms, we arrive at
\begin{align*}
	\tr(\bfPhi_{t+1}\bfPhi_{t+1}^\top) \geq   \bigg(1 +\frac{\eta (r - r_A)^2}{\kappa^2 c_2 n r} (1 - \eta^2 )  - \eta^2  \bigg) \tr(\bfPhi_t\bfPhi_t^\top)
\end{align*}
which is linearly increasing once the term in parentheses is greater than $1$. This amounts to choosing a small enough $\eta$, i.e., $\eta \leq {\cal O}\big(\frac{(r - r_A)^2}{\kappa^2 n r}\big)$.

\textbf{Phase II. $0.5 <  \tr(\I_{r_A} - \bfPhi_t\bfPhi_t^\top) < {r_A}- 0.5$.} Suppose that $\tr\big( (\I_{r_A} - \bfLambda_t) \bfLambda_t \big) \geq  \rho $, for some $\rho > 0$ to be discussed shortly. Let $\eta = {\cal O}\big( \frac{\rho (r - r_A)^2}{ r^2 \kappa^2 m} \big)$ and $ \eta \leq 0.5$,
it is straightforward to have
\begin{align}
    \tr(\bfPhi_{t+1}\bfPhi_{t+1}^\top) - \tr(\bfPhi_t\bfPhi_t^\top) & \geq \frac{2 \eta (r - r_A)^2}{\kappa^2 c_2 n r} (1 - \eta^2) \tr\big( (\I_{r_A} - \bfLambda_t) \bfLambda_t \big) - \eta^2 r_A \\
    & \geq \frac{2 \eta (r - r_A)^2}{\kappa^2 c_2 m r} (1 - \eta^2) \tr\big( (\I_{r_A} - \bfLambda_t) \bfLambda_t \big) - \eta^2 r \nonumber \\
    & \geq {\cal O}\Big( \frac{ \rho^2 (r - r_A)^4}{  r^3 \kappa^4 m^2 } \Big) := \Delta_1. \nonumber
\end{align}
Note that the ${\cal O}(\cdot)$ notation ignores the dependence on constants including $c_1$ and $c_2$.
This means that per step, $\tr(\bfPhi_t\bfPhi_t^\top) $ increases at least by $\Delta_1$. Consequently, after at most $(r_A - 1) / \Delta_1 = {\cal O} \big( r_A  r^3 \kappa^4 m^2 / (\rho^2(r - r_A)^4) \big)$ iterations, RGD leaves Phase II.

Next, we show that $\rho \geq {\cal O}(\min\{ \frac{1}{m}, \frac{(r - r_A)^2}{mr} \})$. 
Notice that $\tr\big( (\I_{r_A} - \bfLambda_t) \bfLambda_t \big) \geq \sum_{i=1}^{r_A} \sigma_i^2(\bfPhi_t) (1 - \sigma_i^2(\bfPhi_t)) \geq  \sigma_{r_A}^2(\bfPhi_t) (1 - \sigma_{r_A}^2(\bfPhi_t)) 
\geq {\cal O}(\min\{ \frac{1}{m}, \frac{(r - r_A)^2}{mr} \})$, where the last inequality comes from the facts that i) for $x \in [a, b]$ with $0 < a <  0.5< b < 1$, the smallest value of $x(1-x)$ is $\min\{a(1-a), b(1-b)\}$; and, ii) $\sigma_{r_A}^2(\bfPhi_t)$ belongs to interval $[a,b]$ with $a =  {\cal O}\big( \frac{(r - r_A)^2}{mr})$ and $ b = \frac{r_A - 0.5}{r_A} = 1 - \frac{1}{2 r_A} \leq 1 - \frac{1}{m}$. Lemmas \ref{lemma.residual} and \ref{lemma.initialization} are adopted to calculate $a$, that is, $\sigma_{r_A}^2(\bfPhi_t) \geq \sigma_{r_A}^2(\bfPhi_0) = {\cal O}((r-r_A)^2/mr)$.

\textbf{Phase III. $\tr(\I_{r_A} - \bfPhi_t\bfPhi_t^\top) \leq 0.5$.} This is a regime near the optimum. An implication of this phase is that $\tr(\bfPhi_t\bfPhi_t^\top) \geq  r_A - 0.5$. Given that the singular values of $\bfPhi_t\bfPhi_t^\top$ belong to $[0, 1]$, it can be seen that $\sigma_{r_A}( \bfPhi_t\bfPhi_t^\top) = \sigma_{r_A}( \bfLambda_t) \geq 0.5 $. Together with $\beta_t \leq 0.5$ in this scenario, we can simplify \eqref{eq.apdx.discrete-psl-goal-var} as
\begin{align*}
    & ~~~~~ \tr(\bfPhi_{t+1}\bfPhi_{t+1}^\top) - \tr(\bfPhi_t\bfPhi_t^\top) 
	\\ 
    & \geq  \frac{2 \eta \sigma_{r_A}^2(\bfPsi_t)}{\kappa^2 } \bigg(1 - \frac{\eta^2}{2} \bigg) \tr\big( (\I_{r_A} - \bfLambda_t) \bfLambda_t \big) - \eta^2 \beta_t \tr(\bfPhi_t\bfPhi_t^\top)  \nonumber \\
    & \stackrel{(c)}{\geq}  \frac{ \eta (r - r_A)^2 }{c_2 n r \kappa^2} \bigg(1 - \frac{\eta^2}{2} \bigg) \tr (\I_{r_A} - \bfLambda_t)  - \eta^2 \beta_t r_A  \nonumber \\
    & =  \frac{\eta (r - r_A)^2 }{c_2 n r  \kappa^2}  \bigg(1 - \frac{\eta^2}{2} \bigg) \tr (\I_{r_A} - \bfPhi_t\bfPhi_t^\top)  -  \eta^2 \beta_t r_A   \nonumber  \\
    & \stackrel{(d)}{\geq} \frac{\eta (r - r_A)^2}{c_2 n r \kappa^2}  \bigg(1 - \frac{\eta^2}{2} \bigg) \tr (\I_{r_A} - \bfPhi_t\bfPhi_t^\top)  -  \eta^2 r_A \tr (\I_{r_A} - \bfPhi_t\bfPhi_t^\top) \nonumber
\end{align*}
where $(c)$ uses $\sigma_{r_A}( \bfLambda_t) \geq 0.5 $; and $(d)$ comes from $\beta_t \leq \tr (\I_{r_A} - \bfPhi_t\bfPhi_t^\top) $.
This further implies that
\begin{align*}
	& ~~~~~ \tr(\I_{r_A} - \bfPhi_{t+1}\bfPhi_{t+1}^\top) - \tr(\I_{r_A} - \bfPhi_t\bfPhi_t^\top) \\
	& \leq  - \frac{\eta ( r - r_A)^2 }{c_2 n r \kappa^2}  \bigg(1 - \frac{\eta^2}{2} \bigg) \tr (\I_{r_A} - \bfPhi_t\bfPhi_t^\top)  + \eta^2 r_A \tr (\I_{r_A} - \bfPhi_t\bfPhi_t^\top).
\end{align*}
Reorganizing the terms, we arrive at
\begin{align}\label{eq.apdx.phase3}
	\tr(\I_{r_A} - \bfPhi_{t+1}\bfPhi_{t+1}^\top) \leq \bigg( 1 - \frac{\eta (r - r_A)^2 }{c_2 n r \kappa^2}  \bigg(1 - \frac{\eta^2}{2} \bigg)  + \eta^2 r_A  \bigg) \tr (\I_{r_A} - \bfPhi_t\bfPhi_t^\top).
\end{align}
This indicates a linear rate until we achieve optimality once $\eta$ is chosen sufficiently small.

Note that our choice of $\eta$ ensures the conditions in Lemma \ref{lemma.alignment-ascent} are satisfied, indicating that an increase of $\tr(\bfPhi_t\bfPhi_t^\top)$ per iteration is guaranteed. This means that $\tr(\bfPhi_t\bfPhi_t^\top)$ traverses Phase I, II, and III consecutively. Combining these three phases together gives the claimed complexity bound.
\end{proof}

\subsubsection{Dynamics of $\tr(\bfPsi_t \bfPsi_t^\top)$}
\begin{lemma}\label{lemma.psi-converge}
    Suppose that $r_A \leq \frac{n}{2}$, and let $\rho:= \min \{ \frac{1}{m}, \frac{(r - r_A)^2}{mr} \}$.
    Choosing $\eta = {\cal O}\big( \frac{\rho (r - r_A)^2}{ r^2 \kappa^2 m} \big)$ and $\gamma=1$, Alg.\ref{alg.rgd} guarantees that after at most $T ={\cal O}\big( \frac{r_A  r^3 \kappa^4 m^2 }{ \rho^2(r - r_A)^4} + \frac{ m^2 r^3  \kappa^4 }{\rho (r - r_A)^4} \log\frac{1}{\epsilon}    \big) $ steps $\tr(\I_{r_A} - \bfPsi_t\bfPsi_t^\top) \leq \epsilon$. 
\end{lemma}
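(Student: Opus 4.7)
The plan is to mirror the three-phase argument used in the proof of Lemma \ref{lemma.phi-converge}, swapping the roles of $\bfPhi_t$ and $\bfPsi_t$ throughout. The starting point is inequality \eqref{eq.apdx.discrete-psl-goal-psi}, which after rearrangement takes exactly the same shape as \eqref{eq.apdx.discrete-psl-goal-var} but with $\sigma_{r_A}^2(\bfPhi_t)$ (rather than $\sigma_{r_A}^2(\bfPsi_t)$) playing the role of the geometric multiplier and $\delta_t$ (rather than $\beta_t$) measuring the gap to the manifold. Because Lemmas \ref{lemma.alignment-ascent-psi} and \ref{lemma.residual-psi} are already the direct $\bfPsi$-analogues of Lemmas \ref{lemma.alignment-ascent} and \ref{lemma.residual}, all the machinery needed to carry out the argument is already in place.

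I would then run the same three-phase analysis. In \textbf{Phase I}, when $\tr(\bfPsi_t\bfPsi_t^\top)\le 0.5$, I would use $\sigma_{r_A}(\I_{r_A}-\bfPsi_t\bfPsi_t^\top)\ge 0.5$ to absorb one factor of $\bfPsi_t\bfPsi_t^\top$ into a trace lower bound, and then apply the key estimate $\sigma_{r_A}^2(\bfPhi_t)\ge \sigma_{r_A}^2(\bfPhi_0)\ge (r-r_A)^2/(c_1 m r)$, which follows from combining Lemma \ref{lemma.residual} (non-increasing misalignment of $\X_t$, whose hypotheses are ensured by our choice of $\eta$) with the initialization bound in Lemma \ref{lemma.initialization}. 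This yields linear growth as long as $\eta\le \mathcal{O}\big((r-r_A)^2/(\kappa^2 m r)\big)$. In \textbf{Phase II}, with $0.5<\tr(\I_{r_A}-\bfPsi_t\bfPsi_t^\top)<r_A-0.5$, I would lower-bound $\tr((\I_{r_A}-\tilde\bfLambda_t)\tilde\bfLambda_t)$ by $\rho$ using the same singular-value pinching argument (the singular values of $\bfPsi_t\bfPsi_t^\top$ lie in $[\sigma_{r_A}^2(\bfPsi_0),\,(r_A-0.5)/r_A]$), which gives a per-step increase of at least $\Delta_1={\cal O}\big(\rho^2(r-r_A)^4/(r^3\kappa^4 m^2)\big)$ and hence traversal of this phase in $\mathcal{O}(r_A/\Delta_1)$ steps. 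In \textbf{Phase III}, when $\tr(\I_{r_A}-\bfPsi_t\bfPsi_t^\top)\le 0.5$, so $\sigma_{r_A}(\bfPsi_t\bfPsi_t^\top)\ge 0.5$ and $\delta_t\le \tr(\I_{r_A}-\bfPsi_t\bfPsi_t^\top)$, the inequality reduces to the contraction
\[
\tr(\I_{r_A}-\bfPsi_{t+1}\bfPsi_{t+1}^\top)\le \Bigl(1-\tfrac{\eta(r-r_A)^2}{c_1 m r\,\kappa^2}\bigl(1-\tfrac{\eta^2}{2}\bigr)+\eta^2 r_A\Bigr)\tr(\I_{r_A}-\bfPsi_t\bfPsi_t^\top),
\]
producing a linear rate to $\epsilon$ in $\mathcal{O}\big(m^2 r^3\kappa^4/(\rho(r-r_A)^4)\log(1/\epsilon)\big)$ iterations. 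Summing the three phase lengths yields the stated complexity.

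The main obstacle I anticipate is bookkeeping rather than conceptual: I must verify that the chosen learning rate $\eta=\mathcal{O}\big(\rho(r-r_A)^2/(r^2\kappa^2 m)\big)$ simultaneously satisfies (i) the hypothesis of Lemma \ref{lemma.alignment-ascent-psi} so that $\tr(\bfPsi_t\bfPsi_t^\top)$ marches monotonically through Phases I--III, and (ii) the hypothesis of Lemma \ref{lemma.alignment-ascent} at every iteration so that Lemma \ref{lemma.residual} keeps $\sigma_{r_A}^2(\bfPhi_t)$ lower bounded by its initialization value; this coupling is the only place the two dynamics interact. Since the assumption $m\ge n$ makes the $c_1 m r$ bound (used here for $\sigma_{r_A}^2(\bfPhi_t)$) no smaller in effective magnitude than the $c_2 n r$ bound used in Lemma \ref{lemma.phi-converge}, the final rate is identical, matching the claim.
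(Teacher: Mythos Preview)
Your proposal is correct and follows essentially the same approach as the paper: a three-phase analysis starting from \eqref{eq.apdx.discrete-psl-goal-psi}, using Lemmas \ref{lemma.residual} and \ref{lemma.initialization} to lower bound $\sigma_{r_A}^2(\bfPhi_t)$ by $(r-r_A)^2/(c_1 m r)$, and arriving at the identical Phase~III contraction inequality. The one minor imprecision is that in Phase~II you should bound only the smallest eigenvalue $\sigma_{r_A}^2(\bfPsi_t)$ (not all singular values of $\bfPsi_t\bfPsi_t^\top$) into the interval $[\sigma_{r_A}^2(\bfPsi_0),\,(r_A-0.5)/r_A]$, the upper endpoint following from $r_A\,\sigma_{r_A}^2(\bfPsi_t)\le \tr(\bfPsi_t\bfPsi_t^\top)<r_A-0.5$; this does not affect the argument.
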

\begin{proof}
By rewriting \eqref{eq.apdx.discrete-psl-goal-psi}, we arrive at 
\begin{align}\label{eq.apdx.discrete-psl-goal-var-psi}
	& ~~~~~ \tr(\bfPsi_{t+1}\bfPsi_{t+1}^\top) - \tr(\bfPsi_t\bfPsi_t^\top) \\
	& \geq \frac{2 \eta \sigma_{r_A}^2(\bfPhi_t)}{\kappa^2 } (1 - \eta^2 \delta_t) \tr\big( (\I_{r_A} - \bfPsi_t\bfPsi_t^\top) \bfPsi_t\bfPsi_t^\top \big) - \eta^2  \delta_t \tr(\bfPsi_t\bfPsi_t^\top). \nonumber
\end{align}
Based on \eqref{eq.apdx.discrete-psl-goal-var-psi}, we discuss the convergence in three different regimes.

\textbf{Phase I. $\tr(\I_{r_A} - \bfPsi_t\bfPsi_t^\top) \geq {r_A} - 0.5$.} This is the initial phase, and the condition is equivalent to $\tr(\bfPsi_t\bfPsi_t^\top) \leq 0.5$. For notational convenience let the SVD of $\bfPsi_t\bfPsi_t^\top = \PP_t \tilde{\bfLambda}_t \PP_t^\top$. Given these conditions, it can be seen that $\sigma_{r_A}( \I_{r_A} - \bfPsi_t\bfPsi_t^\top) = \sigma_{r_A}( \I_{r_A} - \tilde{\bfLambda}_t) \geq 0.5 $. Together with $\delta_t \leq 1$ (recall that $\delta_t= \sigma_1(\I_{r_A} - \bfPsi_t\bfPsi_t^\top)$), we can simplify \eqref{eq.apdx.discrete-psl-goal-var-psi} as
\begin{align*}
    \tr(\bfPsi_{t+1}\bfPsi_{t+1}^\top) - \tr(\bfPsi_t\bfPsi_t^\top) & \geq  \frac{2 \eta \sigma_{r_A}^2(\bfPhi_t) }{\kappa^2 } (1 - \eta^2  ) \tr\big( (\I_{r_A} - \tilde{\bfLambda}_t) \tilde{\bfLambda}_t \big) - \eta^2 \tr(\bfPsi_t\bfPsi_t^\top)  \\
    & \stackrel{(a)}{\geq} \frac{\eta \sigma_{r_A}^2(\bfPhi_t) }{\kappa^2 } (1 - \eta^2) \tr (\tilde{\bfLambda}_t ) - \eta^2  \tr(\bfPsi_t\bfPsi_t^\top) \\
    & \stackrel{(b)}{\geq} \frac{\eta (r - r_A)^2}{\kappa^2 c_1 m r} (1 - \eta^2 ) \tr (\bfPsi_t\bfPsi_t^\top) - \eta^2  \tr(\bfPsi_t\bfPsi_t^\top)
\end{align*}
where $(a)$ uses $ \sigma_{r_A}( \I_{r_A} - \tilde{\bfLambda}_t) \geq 0.5$; $(b)$ uses Lemmas \ref{lemma.residual} and \ref{lemma.initialization}, which jointly imply that $\sigma_{r_A}^2(\bfPhi_t) \geq \sigma_{r_A}^2(\bfPhi_0) \geq (r - r_A)^2/(c_1 mr)$ for some universal constant $c_1$ in defined in Lemma \ref{lemma.initialization}. Rearranging the terms, we arrive at
\begin{align*}
	\tr(\bfPsi_{t+1}\bfPsi_{t+1}^\top) \geq   \bigg(1 +\frac{\eta (r - r_A)^2}{\kappa^2 c_1 m r} (1 - \eta^2 )  - \eta^2  \bigg) \tr(\bfPsi_t\bfPsi_t^\top)
\end{align*}
which is linearly increasing once the term in parentheses is greater than $1$. This amounts to choosing a small enough $\eta$, i.e., $\eta \leq {\cal O}\big(\frac{(r - r_A)^2}{\kappa^2 m r}\big)$. 

\textbf{Phase II. $0.5 <  \tr(\I_{r_A} - \bfPsi_t\bfPsi_t^\top) < {r_A}- 0.5$.} Suppose that $\tr\big( (\I_{r_A} - \bfLambda_t) \bfLambda_t \big) \geq  \rho$, for some $\rho$ to be discussed shortly. Choosing $ \eta \leq 0.5$, and $\eta = {\cal O}\big( \frac{ \rho (r - r_A)^2}{ r^2 \kappa^2 m } \big)$, it is straightforward to have
\begin{align}
    \tr(\bfPsi_{t+1}\bfPsi_{t+1}^\top) - \tr(\bfPsi_t\bfPsi_t^\top) & \geq \frac{2 \eta (r - r_A)^2}{\kappa^2 c_1 m r} (1 - \eta^2) \tr\big( (\I_{r_A} - \tilde{\bfLambda}_t) \tilde{\bfLambda}_t \big) - \eta^2 r_A \\
    & \geq \frac{2 \eta (r - r_A)^2}{\kappa^2 c_1 m r} (1 - \eta^2) \tr\big( (\I_{r_A} - \tilde{\bfLambda}_t) \tilde{\bfLambda}_t \big) - \eta^2 r \nonumber \\
    & \geq {\cal O} \Big( \frac{ \rho^2 (r - r_A)^4}{ r^3 \kappa^4 m^2  } \Big) := \Delta_2. \nonumber 
\end{align}
Note that the ${\cal O}(\cdot)$ notation ignores the dependence on constants including $c_1$ and $c_2$.
This means that per step, $\tr(\bfPsi_t\bfPsi_t^\top) $ at least increases by $\Delta_2$. Consequently, after at most $(r_A - 1) / \Delta_2 = {\cal O}( r_A r^3  \kappa^4 m^2 /\rho^2(r - r_A)^4)$ iterations, RGD leaves Phase II.

Next, we show that $\rho \geq {\cal O}(\min\{ \frac{1}{n}, \frac{(r - r_A)^2}{nr} \}) \geq {\cal O}(\min\{ \frac{1}{m}, \frac{(r - r_A)^2}{mr} \})$. 
Notice that $\tr\big( (\I_{r_A} - \tilde{\bfLambda}_t) \tilde{\bfLambda}_t \big) \geq \sum_{i=1}^{r_A} \sigma_i^2(\bfPsi_t) (1 - \sigma_i^2(\bfPsi_t)) 
\geq  \sigma_{r_A}^2(\bfPsi_t) (1 - \sigma_{r_A}^2(\bfPsi_t)) 
\geq {\cal O}(\min\{ \frac{1}{n}, \frac{(r - r_A)^2}{n r} \})$, where the last inequality comes from the facts that i) for $x \in [a, b]$ with $0 < a <  0.5< b < 1$, the smallest value of $x(1-x)$ is $\min\{a(1-a), b(1-b)\}$; and, ii) $\sigma_{r_A}^2(\bfPsi_t)$ belongs to interval $[a,b]$ with $a =  {\cal O}(\frac{(r - r_A)^2}{n r})$ and $ b = \frac{r_A - 0.5}{r_A} = 1 - \frac{1}{2 r_A} \leq 1 - \frac{1}{n}$. Lemmas \ref{lemma.residual-psi} and \ref{lemma.initialization} are adopted to calculate $a$, that is, $a = \sigma_{r_A}^2(\bfPsi_t) \geq \sigma_{r_A}^2(\bfPsi_0) = {\cal O}((r-r_A)^2/nr)$.

\textbf{Phase III. $\tr(\I_{r_A} - \bfPsi_t\bfPsi_t^\top) \leq 0.5$.} This is a regime near the optimum. An implication of this phase is that $\tr(\bfPsi_t\bfPsi_t^\top) \geq  r_A - 0.5$. Given that the singular values of $\bfPsi_t\bfPsi_t^\top$ belong to $[0, 1]$, it can be seen that $\sigma_{r_A}( \bfPsi_t\bfPsi_t^\top) = \sigma_{r_A}( \tilde{\bfLambda}_t) \geq 0.5 $. Together with $\delta_t \leq 0.5$ in this scenario, we can simplify \eqref{eq.apdx.discrete-psl-goal-var} as
\begin{align*}
    & ~~~~~ \tr(\bfPsi_{t+1}\bfPsi_{t+1}^\top) - \tr(\bfPsi_t\bfPsi_t^\top) 
	\\ 
    & \geq  \frac{2 \eta \sigma_{r_A}^2(\bfPhi_t)}{\kappa^2 } \bigg(1 - \frac{\eta^2}{2} \bigg) \tr\big( (\I_{r_A} - \tilde{\bfLambda}_t) \tilde{\bfLambda}_t \big) - \eta^2 \delta_t \tr(\bfPsi_t\bfPsi_t^\top)  \nonumber \\
    & \stackrel{(c)}{\geq} \frac{ \eta (r - r_A)^2 }{c_1 m r \kappa^2} \bigg(1 - \frac{\eta^2}{2} \bigg) \tr (\I_{r_A} - \tilde{\bfLambda}_t)  - \eta^2 \delta_t r_A  \nonumber \\
    & =  \frac{\eta (r - r_A)^2 }{c_1 m r  \kappa^2}  \bigg(1 - \frac{\eta^2}{2} \bigg) \tr (\I_{r_A} - \bfPsi_t\bfPsi_t^\top)  -  \eta^2 \delta_t r_A   \nonumber  \\
    & \stackrel{(d)}{\geq} \frac{\eta (r - r_A)^2}{c_1 m r \kappa^2}  \bigg(1 - \frac{\eta^2}{2} \bigg) \tr (\I_{r_A} - \bfPsi_t\bfPsi_t^\top)  -  \eta^2 r_A \tr (\I_{r_A} - \bfPsi_t\bfPsi_t^\top) \nonumber
\end{align*}
where $(c)$ comes from $\sigma_{r_A}( \tilde{\bfLambda}_t) \geq 0.5 $, as well as $\sigma_{r_A}^2(\bfPhi_t) \geq \sigma_{r_A}^2(\bfPhi_0) \geq (r - r_A)^2/(c_1 mr)$; and $(d)$ uses $\delta_t \leq \tr (\I_{r_A} - \bfPsi_t\bfPsi_t^\top) $.
This further implies that
\begin{align*}
	& ~~~~~ \tr(\I_{r_A} - \bfPsi_{t+1}\bfPsi_{t+1}^\top) - \tr(\I_{r_A} - \bfPsi_t\bfPsi_t^\top) \\
	& \leq  - \frac{\eta (r - r_A)^2}{c_1 m r \kappa^2}  \bigg(1 - \frac{\eta^2}{2} \bigg) \tr (\I_{r_A} - \bfPsi_t\bfPsi_t^\top)  + \eta^2 r_A \tr (\I_{r_A} - \bfPsi_t\bfPsi_t^\top).
\end{align*}
Reorganizing the terms, we arrive at
\begin{align*}
	\tr(\I_{r_A} - \bfPsi_{t+1}\bfPsi_{t+1}^\top) \leq \bigg( 1 - \frac{\eta (r - r_A)^2}{c_1 m r \kappa^2}  \bigg(1 - \frac{\eta^2}{2} \bigg)  + \eta^2 r_A  \bigg) \tr (\I_{r_A} - \bfPsi_t\bfPsi_t^\top).
\end{align*}
This indicates a linear rate until we achieve optimality once $\eta$ is chosen sufficiently small.

Note that our choice of $\eta$ ensures the conditions in Lemma \ref{lemma.alignment-ascent-psi} are satisfied. In other words, increasing $\tr(\bfPsi_t\bfPsi_t^\top)$ across $t$ is guaranteed. This means that $\tr(\bfPsi_t\bfPsi_t^\top)$ will traverse Phase I, II, and III consecutively. Combining these three phases together gives the claimed complexity bound.
\end{proof}

\subsection{Convergence of $\bfTheta_t$}

\begin{lemma}\label{lemma.required-acc}
    Suppose that at iteration $t$, Alg. \ref{alg.rgd} with $\gamma=1$ satisfies $\tr(\I_{r_A} - \bfPhi_t\bfPhi_t^\top) \leq \rho_1$ and $\tr(\I_{r_A} - \bfPsi_t\bfPsi_t^\top) \leq \rho_2$. It is guaranteed to have $f(\X_t, \Y_t, \bfTheta_t) = {\cal O}(\rho_1 + \rho_2)$.
\end{lemma}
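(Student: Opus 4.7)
Since $\gamma = 1$, the update rule gives $\bfTheta_t = \X_t^\top \A \Y_t$, so the residual inside $f$ can be rewritten as
\begin{equation*}
    \X_t\bfTheta_t\Y_t^\top - \A \;=\; \X_t\X_t^\top\A\Y_t\Y_t^\top - \A.
\end{equation*}
The key idea is to split this into two error terms that are each controlled by one of the two misalignment quantities $\tr(\I_{r_A} - \bfPhi_t\bfPhi_t^\top)$ and $\tr(\I_{r_A} - \bfPsi_t\bfPsi_t^\top)$. Adding and subtracting $\X_t\X_t^\top\A$ gives the decomposition
\begin{equation*}
    \X_t\X_t^\top\A\Y_t\Y_t^\top - \A \;=\; (\X_t\X_t^\top - \I_m)\A \;+\; \X_t\X_t^\top\A(\Y_t\Y_t^\top - \I_n),
\end{equation*}
after which the triangle inequality reduces the task to bounding each Frobenius-norm term separately.

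For the first term, I would substitute the SVD $\A = \U\bfSigma\V^\top$, use that $\I_m - \X_t\X_t^\top$ is an orthogonal projector (hence idempotent), and exploit $\V^\top\V = \I_{r_A}$ together with the identity $\U^\top(\I_m - \X_t\X_t^\top)\U = \I_{r_A} - \bfPhi_t\bfPhi_t^\top$ to obtain
\begin{equation*}
    \|(\I_m - \X_t\X_t^\top)\A\|_\fro^2 \;=\; \tr\bigl(\bfSigma(\I_{r_A} - \bfPhi_t\bfPhi_t^\top)\bfSigma\bigr) \;\leq\; \sigma_1^2(\bfSigma)\,\tr(\I_{r_A} - \bfPhi_t\bfPhi_t^\top) \;\leq\; \rho_1,
\end{equation*}
where the last step uses the normalization $\sigma_1(\bfSigma) = 1$. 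For the second term I would factor out $\X_t\X_t^\top$, bound its spectral norm by $1$, and then apply the analogous computation on the right:
\begin{equation*}
    \|\X_t\X_t^\top\A(\I_n - \Y_t\Y_t^\top)\|_\fro^2 \;\leq\; \|\A(\I_n - \Y_t\Y_t^\top)\|_\fro^2 \;=\; \tr\bigl(\bfSigma(\I_{r_A} - \bfPsi_t\bfPsi_t^\top)\bfSigma\bigr) \;\leq\; \rho_2.
\end{equation*}

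Combining the two estimates via $\|u + v\|_\fro \leq \|u\|_\fro + \|v\|_\fro$ and squaring yields
\begin{equation*}
    f(\X_t,\Y_t,\bfTheta_t) \;=\; \tfrac{1}{2}\|\X_t\bfTheta_t\Y_t^\top - \A\|_\fro^2 \;\leq\; \tfrac{1}{2}(\sqrt{\rho_1} + \sqrt{\rho_2})^2 \;\leq\; \rho_1 + \rho_2,
\end{equation*}
which is the desired ${\cal O}(\rho_1 + \rho_2)$ bound. There is no real obstacle here; the proof is essentially an algebraic manipulation, with the only subtle point being the choice to add and subtract $\X_t\X_t^\top\A$ so that each resulting summand aligns with exactly one of the two given misalignment bounds. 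The fact that $\bfTheta_t = \X_t^\top\A\Y_t$ (from the $\gamma=1$ update) is what makes this clean decomposition possible.
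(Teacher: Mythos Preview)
Your proof is correct and follows essentially the same approach as the paper: substitute $\bfTheta_t = \X_t^\top\A\Y_t$, split the residual into two pieces via a telescoping intermediate term, and control each piece using the SVD of $\A$ together with the idempotence of the projectors $\I_m - \X_t\X_t^\top$ and $\I_n - \Y_t\Y_t^\top$. The only cosmetic difference is that the paper adds and subtracts $\A\Y_t\Y_t^\top$ (putting the $\Y_t\Y_t^\top$ factor on the first term) whereas you add and subtract $\X_t\X_t^\top\A$; the two choices are symmetric and lead to the same bound.
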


\begin{proof}
Recall that $\gamma=1$ implies $\bfTheta_t = \X_t^\top \A \Y_t$, we thus have that
\begin{align*}
	\| \X_t \bfTheta_t \Y_t^\top - \A \|_\fro & = \| \X_t \X_t^\top \A \Y_t \Y_t^\top - \A \|_\fro \\
	& = \|  \X_t \X_t^\top \A \Y_t \Y_t^\top -  \A \Y_t\Y_t^\top +  \A \Y_t\Y_t^\top - \A \|_\fro \\
	& \leq \|  (\X_t \X_t^\top - \I_m)  \A \Y_t\Y_t^\top \|_\fro + \|  \A (\Y_t\Y_t^\top - \I_n) \|_\fro  \\
	& \stackrel{(a)}{\leq} \|  (\X_t \X_t^\top - \I_m) \U \|_\fro \| \bfSigma \V^\top \Y_t\Y_t^\top \| + \|\U \bfSigma \| \| \V^\top (\Y_t\Y_t^\top - \I_n) \|_\fro \\
	& \leq \| \bfSigma \| \| (\I_m - \X_t\X_t^\top) \U \|_\fro + \| \bfSigma \| \| \V^\top (\I_n - \Y_t\Y_t^\top)  \|_\fro
\end{align*}
where $(a)$ uses the compact SVD of $\A = \U \bfSigma \V^\top$. Now we have that 
\begin{align*}
	\| (\I_m - \X_t\X_t^\top) \U \|_\fro^2 & = \tr\Big(  \U^\top (\I_m - \X_t\X_t^\top) (\I_m - \X_t\X_t^\top) \U  \Big) \\
	& = \tr(\I_{r_A} - \bfPhi_t\bfPhi_t^\top) \leq \rho_1.
\end{align*}
Similarly, we also have
\begin{align*}
	\| \V^\top (\I_n - \Y_t\Y_t^\top) \|_\fro^2 & = \tr\Big(  \V^\top (\I_n - \Y_t\Y_t^\top) (\I_n - \Y_t\Y_t^\top) \V  \Big) \\
	& = \tr(\I_{r_A} - \bfPsi_t\bfPsi_t^\top) \leq \rho_2.
\end{align*}
Combining these inequalities, we have that 
\begin{align*}
	\| \X_t \bfTheta_t \Y_t^\top - \A \|_\fro^2 \stackrel{(b)}{\leq} 2 \| \bfSigma \|^2 \| (\I_m - \X\X^\top) \U \|_\fro^2 + 2 \| \bfSigma \|^2 \| \V^\top (\I_n - \Y_t\Y_t^\top) \|_\fro^2 = {\cal O}(\rho_1 + \rho_2)
\end{align*}
where $(b)$ uses $(a+b)^2\leq 2a^2 + 2b^2$. This finishes the proof. 
\end{proof}

\subsection{Proof of Theorem \ref{thm.convergence}}
\begin{proof}
	The proof is straightforward. We apply Lemma \ref{lemma.phi-converge} and Lemma \ref{lemma.psi-converge} to show that within ${\cal O}\big( \frac{ m^2 r^3  r_A  \kappa^4 }{\rho^2 (r - r_A)^4}  + \frac{ m^2 r^3  \kappa^4 }{\rho (r - r_A)^4} \log\frac{1}{\epsilon} \big)$ iterations, we have $\tr(\I_{r_A} - \bfPsi_t\bfPsi_t^\top) \leq \epsilon$ and $\tr(\I_{r_A} - \bfPhi_t\bfPhi_t^\top) \leq \epsilon$. Then, Lemma \ref{lemma.required-acc} is adopted to reach the conclusion.
\end{proof}

\section{Theoretical Extensions to Symmetric Matrix Factorization}
\label{appdx:proofs-symmetric-problem}

\begin{wrapfigure}{r}{0.53\textwidth}
    \vspace{-0.8cm}
    \begin{minipage}{0.53\textwidth}
        \begin{algorithm}[H]
        \caption{RGD for \method{}-parameterized \eqref{eq.sym-r}}\label{alg.rgd-sym}
        \begin{algorithmic}
        \Require Learning rates $\eta$, $\gamma$; sample $\X_0$ uniformly from $\st(m,r)$.
        \For {$t=0,\dots,T-1$}
            	\State Find $\bfTheta_t$ via \eqref{eq.update-theta-sym}
            	\State Obtain Riemannian gradient $\G_t$ using \eqref{eq.rgrad-x-sym}
                \State Update $\X_{t+1}$ with \eqref{eq.update-x-sym}
        \EndFor
        \end{algorithmic}
        \end{algorithm}
    \end{minipage}
\end{wrapfigure}

This section extends our theoretical results to symmetric matrix factorization problems. While slightly deviating from the LoRA application, we still include these results because we believe the improvement of the $\epsilon$ and $\kappa$ dependence is important in its own right.

Let $\B \in \mathbb{R}^{m \times m}$ be a symmetric positive semi-definite matrix with $\rank(\B) = r_B $. Its compact SVD is given by $\B= \U\bfSigma \U^\top$, where $\U \in \mathbb{R}^{m \times {r_B}}$ and $\bfSigma \in \mathbb{R}^{r_B \times r_B}$. Without loss of generality, we assume $\sigma_1(\bfSigma) =1$ and $\sigma_{r_B}(\bfSigma) =1/\kappa$ with $\kappa$ denoting the condition number.  

\subsection{Detailed Setups and Main Results}

In the same spirit of the asymmetric problems \eqref{eq.asym-r}, the standard BM factorization is reformulated as
\begin{align}\label{eq.sym-r}
	\min_{\X, \bfTheta} f_{\text{sym}}(\X, \bfTheta) = \frac{1}{2} \lVert \X \bfTheta \X^\top - \B \rVert_\fro^2 , ~~~ \text{s.t.} ~~~~ \X \in \st{(m, r)}	, \bfTheta \in \mathbb{R}^{r \times r}.
\end{align}
We again consider the overparameterized setting where $r > r_B$. 

The algorithm to solve \eqref{eq.sym-r} shares the same principles as Alg. \ref{alg.rgd}. At iteration $t$, we first apply GD to update $\bfTheta$ using $\nabla_{\bfTheta} f(\X_t, \bfTheta_{t-1})$, which gives
\begin{subequations}
\begin{align}\label{eq.update-theta-sym}
    \bfTheta_t = (1 - \gamma) \bfTheta_{t-1} + \gamma \X_t^\top \B \X_t.
\end{align}
Next, the Riemannian gradient (of $\X_t$) calculated at $(\X_t, \bfTheta_t)$ is given by
\begin{align}\label{eq.rgrad-x-sym}
    \G_t  = -(\I_m - \X_t\X_t^\top)\B \X_t \X_t^\top \B \X_t.
\end{align}
With $\G_t$ and polar retraction, the update on $\X_t$ is given by
\begin{align}\label{eq.update-x-sym}
    \X_{t+1} = (\X_t - \eta \G_t) (\I_r + \eta^2 \G_t^\top \G_t )^{-1/2}. 	
\end{align}
\end{subequations}
The detailed algorithm is summarized in Alg. \ref{alg.rgd-sym}.

\textbf{Main Results for the Overparameterized and Symmetric Problem.} We show that the \method{} parameterization optimized with Alg. \ref{alg.rgd-sym} can exponentially improve the dependence of $\epsilon$ compared with GD applied on BM. 
\begin{theorem}\label{thm.sym-op}
    Suppose that $r_B \leq \frac{m}{2}$, and let $\rho:= \min \{ \frac{1}{m}, \frac{(r - r_B)^2}{mr} \}$. Choosing $\eta = {\cal O}\big( \frac{\rho (r - r_B)^2}{ r^2 \kappa^2 m} \big)$ and $\gamma=1$, Alg. \ref{alg.rgd-sym} guarantees $ f_{\text{sym}}(\X_T, \bfTheta_T) \leq \epsilon$ for all $T \geq {\cal O}\big( \frac{r_B  r^3 \kappa^4 m^2 }{ \rho^2(r - r_B)^4} + \frac{ m r^2 \kappa^4 }{\rho (r - r_B)^2} \log\frac{1}{\epsilon} \big) $.
\end{theorem}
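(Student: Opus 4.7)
The plan mirrors the three-phase argument behind Theorem \ref{thm.convergence}, but now tracking only the single alignment matrix $\bfPhi_t := \U^\top \X_t \in \mathbb{R}^{r_B \times r}$. With $\gamma = 1$ the update rule gives $\bfTheta_t = \X_t^\top \B \X_t = \bfPhi_t^\top \bfSigma \bfPhi_t$, so the Riemannian gradient simplifies to $\G_t = -(\I_m - \X_t \X_t^\top) \B \X_t \bfTheta_t$, yielding
\begin{align*}
\bfPhi_{t+1} = \bigl[\I_{r_B} + \eta (\I_{r_B} - \bfPhi_t \bfPhi_t^\top) \bfSigma \bfPhi_t \bfPhi_t^\top \bfSigma\bigr] \bfPhi_t \bigl(\I_r + \eta^2 \G_t^\top \G_t\bigr)^{-1/2}.
\end{align*}
This is structurally identical to \eqref{eq.phiphi} with $\bfPsi_t$ replaced by $\bfPhi_t$, and the same computation yields the counterpart of \eqref{eq.apdx.ub-GG}, namely $\sigma_1(\G_t^\top \G_t) \leq \sigma_1(\I_{r_B} - \bfPhi_t \bfPhi_t^\top)$, which is what absorbs the quartic nature of the symmetric gradient.

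First I would prove an \emph{increasing alignment} lemma: $\tr(\bfPhi_{t+1} \bfPhi_{t+1}^\top) \geq \tr(\bfPhi_t \bfPhi_t^\top)$ under the stated learning rate, by lower-bounding $\bfPhi_{t+1} \bfPhi_{t+1}^\top$ via Lemma \ref{apdx.lemma.inverse} and then applying Lemmas \ref{lemma.apdx.trace-lower-bound} and \ref{apdx.lemma.aux888} as in Lemma \ref{lemma.alignment-ascent}, to obtain
\begin{align*}
\frac{\tr(\bfPhi_{t+1} \bfPhi_{t+1}^\top)}{1 - \eta^2 \beta_t} \geq \tr(\bfPhi_t \bfPhi_t^\top) + \frac{2 \eta \sigma_{r_B}^2(\bfPhi_t)}{\kappa^2} \tr\bigl((\I_{r_B} - \bfPhi_t \bfPhi_t^\top) \bfPhi_t \bfPhi_t^\top\bigr).
\end{align*}
The companion \emph{non-increasing misalignment} bound on $\bfOmega_t := \U_\perp^\top \X_t$ transfers verbatim from Lemma \ref{lemma.residual} because $\U_\perp^\top \B = \mathbf{0}$ in the symmetric setting as well. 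Using $r_B \leq m/2$, Lemma \ref{lemma.apdx.sin-cos} then gives $\sigma_{r_B}(\bfPhi_t) \geq \sigma_{r_B}(\bfPhi_0) \geq (r - r_B)/\sqrt{c_1 m r}$ with high probability via Lemma \ref{lemma.initialization}.

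The three-phase argument then proceeds as in Lemma \ref{lemma.phi-converge}. Phase I ($\tr(\bfPhi_t \bfPhi_t^\top) \leq 0.5$) gives geometric growth; in Phase II the per-iteration increment is lower bounded by $\Delta = \Omega\bigl(\rho^2 (r - r_B)^4 / (r^3 \kappa^4 m^2)\bigr)$, so Phase II is exited within ${\cal O}(r_B/\Delta)$ steps, producing the first term of the stated complexity. The sharper $\log(1/\epsilon)$ dependence, compared to Theorem \ref{thm.convergence}, comes from Phase III: since there is only one factor, the descent coefficient $\sigma_{r_B}^2(\bfPhi_t)$ can be bounded below by $1/2$ directly (from $\tr(\bfPhi_t \bfPhi_t^\top) \geq r_B - 0.5$ and Lemma \ref{lemma.apdx.sin-cos}), rather than by the weaker initialization bound $(r - r_A)^2/(c_2 n r)$ that the asymmetric proof must use for $\sigma_{r_A}^2(\bfPsi_t)$. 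This trims a factor of roughly $m r/(r - r_B)^2$ off the linear-convergence denominator.

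Finally, to convert a bound on $\tr(\I_{r_B} - \bfPhi_t \bfPhi_t^\top)$ into one on $f_{\text{sym}}$, $\gamma = 1$ gives $\X_t \bfTheta_t \X_t^\top = \X_t \X_t^\top \B \X_t \X_t^\top$, and the decomposition $\X_t \X_t^\top \B \X_t \X_t^\top - \B = \X_t \X_t^\top \B (\X_t \X_t^\top - \I_m) + (\X_t \X_t^\top - \I_m) \B$ combined with $\|\X_t \X_t^\top\| = 1$ yields $\| \X_t \X_t^\top \B \X_t \X_t^\top - \B \|_\fro \leq 2 \| \bfSigma \| \sqrt{\tr(\I_{r_B} - \bfPhi_t \bfPhi_t^\top)}$. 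Setting $\epsilon' = \Theta(\epsilon)$ and combining with the iteration count from the three-phase analysis gives Theorem \ref{thm.sym-op}. I expect the main bookkeeping hurdle to be Phase II, where establishing $\tr((\I_{r_B} - \bfLambda_t)\bfLambda_t) \geq \rho = \Omega(\min\{1/m, (r - r_B)^2/(mr)\})$ requires combining the initialization lower bound on $\sigma_{r_B}(\bfPhi_t)$ with the phase-dependent upper bound on its largest squared singular value, precisely analogous to the argument following \eqref{eq.apdx.discrete-psl-goal-var} but without the added complication of simultaneously tracking a second factor.
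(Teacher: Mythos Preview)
Your proposal is correct and follows essentially the same route as the paper: the single-factor alignment dynamics, the increasing-alignment and non-increasing-misalignment lemmas, the three-phase analysis with the Phase~III sharpening from $\sigma_{r_B}^2(\bfPhi_t)\geq 1/2$, and the final bound via the telescoping decomposition of $\X_t\X_t^\top\B\X_t\X_t^\top-\B$ all match the paper's argument. One cosmetic remark: the Phase~III lower bound $\sigma_{r_B}^2(\bfPhi_t)\geq 1/2$ does not actually need Lemma~\ref{lemma.apdx.sin-cos}; it follows directly from $\tr(\bfPhi_t\bfPhi_t^\top)\geq r_B-0.5$ together with $\sigma_i^2(\bfPhi_t)\leq 1$.
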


Note that a lower bound of ${\Omega}(\frac{\kappa^2}{\sqrt{\epsilon}})$ applies to GD on the overparameterized matrix factorization problem with BM parameterization \citep{xiong2023over}. This shows that our \method{} parameterization \textit{exponentially} improves the $\epsilon$ dependence, transforming a sublinear rate into a linear one. Moreover, it can be seen that the convergence of Alg. \ref{alg.rgd-sym} improves with overparameterization, since $r = c r_B$ induces a tighter dependence on $r$ compared to $r = r_B + c$ for proper $c > 0$.

\subsection{Proof of Theorem \ref{thm.sym-op}}

\subsubsection{Dynamics}
The alignment matrix is defined as $\bfPhi_t := \U^\top \X_t \in \mathbb{R}^{r_B \times r} $. Next we derive several equations and inequalities that help to establish the main theorem. Note that the choice of learning rate $\gamma = 1$ will be leveraged in some equations.
We start with
\begin{align*}
	\mathbb{R}^{r \times r} \ni \G_t^\top \G_t & =  \X_t^\top \B \X_t\X_t^\top \B  (\I_m - \X_t\X_t^\top)^2 \B \X_t \X_t^\top \B \X_t \\
	& = \X_t^\top \B \X_t\X_t^\top \B (\I_m - \X_t\X_t^\top) \B \X_t \X_t^\top \B \X_t \\
	& = \bfTheta_t \bfPhi_t^\top \bfSigma (\I_{r_B} - \bfPhi_t \bfPhi_t^\top) \bfSigma \bfPhi_t \bfTheta_t
\end{align*}
where we use $\mathbb{R}^{r \times r}\ni \bfTheta_t= \X_t^\top \B \X_t = \bfPhi_t^\top \bfSigma \bfPhi_t$. Applying $\sigma_1(\bfPhi_t) \leq 1$ and $\sigma_1(\B) = 1$ to the equation above, we have that
\begin{align}\label{eq.apdx.ub-GG-sym}
    \sigma_1(\G_t^\top \G_t) \leq  \sigma_1^4(\bfSigma)   \sigma_1(\I_{r_B} - \bfPhi_t\bfPhi_t^\top) =  \sigma_1(\I_{r_B} - \bfPhi_t\bfPhi_t^\top) .
\end{align}

We also have that 
\begin{align*}
    \bfPhi_{t+1} & = \big[ \bfPhi_t + \eta (\I_{r_B} - \bfPhi_t \bfPhi_t^\top) \bfSigma \bfPhi_t \bfTheta_t \big]\big(\I_r + \eta^2 \G_t^\top \G_t \big)^{-1/2} \\
    & = \big[ \I_{r_B}+ \eta (\I_{r_B} - \bfPhi_t \bfPhi_t^\top) \bfSigma \bfPhi_t \bfPhi_t^\top \bfSigma \big] \bfPhi_t \big(\I_r + \eta^2 \G_t^\top \G_t \big)^{-1/2}. \nonumber
\end{align*}
This gives that
\begin{align}\label{eq.phiphi-sym}
    & ~~~~ \bfPhi_{t+1}\bfPhi_{t+1}^\top \\
    & = \big[ \I_{r_B }+ \eta (\I_{r_B} - \bfPhi_t \bfPhi_t^\top) \bfSigma \bfPhi_t \bfPhi_t^\top \bfSigma \big] \bfPhi_t\big(\I_r + \eta^2 \G_t^\top \G_t \big)^{-1} \bfPhi_t^\top \big[ \I_{r_B} + \eta \bfSigma  \bfPhi_t \bfPhi_t^\top \bfSigma (\I_{r_B} - \bfPhi_t \bfPhi_t^\top)   \big]. \nonumber
\end{align}
	
With these preparations, we are ready to prove our main results.

\subsubsection{Initialization}
\begin{lemma}\label{lemma.initialization-sym}
    Suppose that $\X_0$ is uniformly sampled from $\st(m, r)$  using methods described in Lemma \ref{lemma.unitform-st}. 
    There exists a universal constant $c_1$ such that whp the following hods
	\begin{align*}
		\sigma_{r_B}(\bfPhi_0) & = \sigma_{r_B}(\U^\top \X_0) \geq  \frac{r - r_B +1}{\sqrt{c_1mr}}  \geq \frac{r - r_B }{\sqrt{c_1mr}}.
	\end{align*}
\end{lemma}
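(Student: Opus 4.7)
The plan is to observe that Lemma \ref{lemma.initialization-sym} is a direct specialization of the already-established Lemma \ref{lemma.phi0} to the symmetric setting, so no new probabilistic machinery is required. Indeed, in the symmetric problem \eqref{eq.sym-r} the matrix $\U \in \st(m, r_B)$ that appears in the compact SVD $\B = \U \bfSigma \U^\top$ is a fixed semi-orthogonal matrix, and $\X_0$ is uniformly sampled from $\st(m, r)$ with $r > r_B$. This is exactly the setup of Lemma \ref{lemma.phi0} with $r_A$ replaced by $r_B$.

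First I would invoke Lemma \ref{lemma.phi0} verbatim to obtain, for any $\tau \geq 0$, the bound
\begin{align*}
    \sigma_{r_B}(\bfPhi_0) = \sigma_{r_B}(\U^\top \X_0) \geq \frac{\tau(r - r_B + 1)}{6 \sqrt{mr}}
\end{align*}
with probability at least $1 - \exp(-m/2) - (C_1 \tau)^{r - r_B + 1} - \exp(-C_2 r)$, where $C_1, C_2 > 0$ are the universal constants from Lemmas \ref{lemma.largest-sigma} and \ref{lemma.smallest-sigma}.

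Next I would fix $\tau$ at a universal constant strictly smaller than $1/C_1$, e.g.\ $\tau = 1/(2C_1)$, so that the failure term $(C_1\tau)^{r - r_B + 1}$ decays geometrically in the gap $r - r_B$. Defining $c_1 := (6/\tau)^2$ then gives the claimed bound
\begin{align*}
    \sigma_{r_B}(\bfPhi_0) \geq \frac{r - r_B + 1}{\sqrt{c_1 m r}} \geq \frac{r - r_B}{\sqrt{c_1 m r}},
\end{align*}
and the high-probability statement follows by collecting the three failure terms (each exponentially small in $m$, $r - r_B$, and $r$ respectively) into a single $1 - o(1)$ event.

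There is no substantive obstacle here: the only point to be slightly careful about is that the symmetric SVD produces $\U$ with orthonormal columns in $\st(m, r_B)$, which is exactly the input required by Lemma \ref{lemma.phi0}, and that the overparameterization hypothesis $r > r_B$ matches the $r > r_A$ hypothesis of Lemma \ref{lemma.phi0}. Once these are verified the result is immediate, and the universal constant $c_1$ is manufactured entirely from the absolute constants $C_1, C_2$ produced by Lemmas \ref{lemma.largest-sigma} and \ref{lemma.smallest-sigma} together with the numerical factor $6$ in Lemma \ref{lemma.phi0}.
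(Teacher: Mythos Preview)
Your proposal is correct and follows essentially the same approach as the paper, which simply states that the result, the constant $c_1$, and the exact probability follow directly from Lemma~\ref{lemma.phi0}. You have merely spelled out in more detail how $\tau$ is chosen and $c_1$ is defined, which the paper leaves implicit.
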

\begin{proof}
	The proofs, as well as the constants $c_1$, and the exact probability, follow directly from Lemma \ref{lemma.phi0}.
\end{proof}

\subsubsection{Increasing Alignment in Symmetric Problems}

\begin{lemma}\label{lemma.alignment-ascent-sym}
    Let $\beta_t:= \sigma_1(\I_{r_B} - \bfPhi_t\bfPhi_t^\top)$. Assuming $\eta < 1$, $\gamma=1$, and
	\begin{align*}
		\frac{2 (1 -\eta^2 \beta_t) \sigma_{r_B}^2(\bfPhi_t) }{\kappa^2} \tr\big( (\I_{r_B} - \bfPhi_t\bfPhi_t^\top) \bfPhi_t\bfPhi_t^\top \big)  \geq  \eta \beta_t \tr\big( \bfPhi_t\bfPhi_t^\top \big)
	\end{align*}
	holds, we have $\tr(\bfPhi_{t+1}\bfPhi_{t+1}^\top) \geq \tr(\bfPhi_t\bfPhi_t^\top)$.
\end{lemma}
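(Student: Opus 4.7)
The proof will be the symmetric transcription of the argument for Lemma \ref{lemma.alignment-ascent} in Section \ref{apdx.inc-alignment-phi}, where throughout $\bfPsi_t$ is replaced by $\bfPhi_t$ and \eqref{eq.apdx.ub-GG} by \eqref{eq.apdx.ub-GG-sym}. The plan is to start from the recursion \eqref{eq.phiphi-sym}, apply Lemma \ref{apdx.lemma.inverse} to obtain $(\I_r + \eta^2 \G_t^\top \G_t)^{-1} \succeq \I_r - \eta^2 \G_t^\top \G_t$, and then invoke the spectral bound \eqref{eq.apdx.ub-GG-sym} to convert the middle factor into a scalar: $\I_r - \eta^2 \G_t^\top \G_t \succeq (1 - \eta^2 \beta_t)\I_r$. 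Substituting this in produces the PSD lower bound
\begin{align*}
    \bfPhi_{t+1}\bfPhi_{t+1}^\top \succeq (1 - \eta^2 \beta_t)\big[\I_{r_B} + \eta(\I_{r_B} - \bfPhi_t \bfPhi_t^\top)\bfSigma \bfPhi_t \bfPhi_t^\top \bfSigma\big] \bfPhi_t\bfPhi_t^\top \big[\I_{r_B} + \eta \bfSigma \bfPhi_t \bfPhi_t^\top \bfSigma(\I_{r_B} - \bfPhi_t \bfPhi_t^\top)\big].
\end{align*}

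Next, I would expand the product on the right-hand side and discard the (PSD) quartic-in-$\eta$ term to retain three tractable contributions. Taking traces and introducing the EVD $\bfPhi_t\bfPhi_t^\top = \Q_t \bfLambda_t \Q_t^\top$, the two linear-in-$\eta$ cross terms are transposes of each other and therefore combine into a single trace
\begin{align*}
    2\eta \tr\big((\I_{r_B} - \bfLambda_t)\,\Q_t^\top \bfSigma \bfPhi_t \bfPhi_t^\top \bfSigma \Q_t\,\bfLambda_t\big).
\end{align*}
Lemma \ref{lemma.apdx.trace-lower-bound} then applies to the PSD matrix $\Q_t^\top \bfSigma \bfPhi_t \bfPhi_t^\top \bfSigma \Q_t$, whose smallest eigenvalue is at least $\sigma_{r_B}^2(\bfSigma)\sigma_{r_B}^2(\bfPhi_t) = \sigma_{r_B}^2(\bfPhi_t)/\kappa^2$ by Lemma \ref{apdx.lemma.aux888}. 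Collecting terms and dividing by $1 - \eta^2 \beta_t$, I would obtain the discrete ``ascent'' inequality
\begin{align*}
    \frac{\tr(\bfPhi_{t+1}\bfPhi_{t+1}^\top)}{1 - \eta^2 \beta_t} \geq \tr(\bfPhi_t\bfPhi_t^\top) + \frac{2\eta\,\sigma_{r_B}^2(\bfPhi_t)}{\kappa^2}\tr\big((\I_{r_B} - \bfPhi_t\bfPhi_t^\top)\bfPhi_t\bfPhi_t^\top\big),
\end{align*}
which under the stated hypothesis of the lemma rearranges directly into $\tr(\bfPhi_{t+1}\bfPhi_{t+1}^\top) \geq \tr(\bfPhi_t\bfPhi_t^\top)$.

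The only conceptual difference from the asymmetric case is that the ``sandwich'' around $\bfPhi_t\bfPhi_t^\top$ now has $\bfPhi_t$ on both sides, so the singular-value lower bound that feeds Lemma \ref{lemma.apdx.trace-lower-bound} depends on $\sigma_{r_B}^2(\bfPhi_t)$ instead of $\sigma_{r_B}^2(\bfPsi_t)$, matching the hypothesis. I do not anticipate a genuine obstacle: the bound $\sigma_1(\G_t^\top \G_t) \leq \beta_t$ is the exact symmetric counterpart of \eqref{eq.apdx.ub-GG}, so the PSD manipulations carry over verbatim. The only place meriting slight care is verifying that after dropping the quartic term and simplifying, the residual cross terms in the trace are nonnegative enough to preserve the direction of the inequality; this is handled automatically since the two linear cross terms are adjoints and sum to $2\eta$ times a real trace of a product of two PSD matrices.
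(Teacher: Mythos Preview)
Your proposal is correct and follows essentially the same approach as the paper. The only cosmetic difference is that the paper substitutes the thin SVD $\bfPhi_t = \Q_t\bfLambda_t\PP_t^\top$ and writes the inner matrix as $\bfS_t\bfLambda_t^2\bfS_t$ with $\bfS_t := \Q_t^\top\bfSigma\Q_t$, whereas you use the EVD of $\bfPhi_t\bfPhi_t^\top$ and keep the inner matrix as $\Q_t^\top\bfSigma\bfPhi_t\bfPhi_t^\top\bfSigma\Q_t$; these are the same object, and both routes feed into Lemma~\ref{lemma.apdx.trace-lower-bound} with the identical eigenvalue bound $\sigma_{r_B}^2(\bfPhi_t)/\kappa^2$.
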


\begin{proof}
From \eqref{eq.phiphi-sym}, we have that
    \begin{align*}
        & ~~~~~ \bfPhi_{t+1}\bfPhi_{t+1}^\top \\
        & =\big[ \I_{r_B }+ \eta (\I_{r_B} - \bfPhi_t \bfPhi_t^\top) \bfSigma \bfPhi_t \bfPhi_t^\top \bfSigma \big] \bfPhi_t\big(\I_r + \eta^2 \G_t^\top \G_t \big)^{-1} \bfPhi_t^\top \big[ \I_{r_B} + \eta \bfSigma  \bfPhi_t \bfPhi_t^\top \bfSigma (\I_{r_B} - \bfPhi_t \bfPhi_t^\top)   \big] \nonumber \\
        & \stackrel{(a)}{\succeq} \big[ \I_{r_B }+ \eta (\I_{r_B} - \bfPhi_t \bfPhi_t^\top) \bfSigma \bfPhi_t \bfPhi_t^\top \bfSigma \big] \bfPhi_t\big(\I_r - \eta^2 \G_t^\top \G_t \big) \bfPhi_t^\top \big[ \I_{r_B} + \eta \bfSigma  \bfPhi_t \bfPhi_t^\top \bfSigma (\I_{r_B} - \bfPhi_t \bfPhi_t^\top)   \big]  \\
        & \stackrel{(b)}{\succeq} (1 - \eta^2 \beta_t) \big[ \I_{r_B }+ \eta (\I_{r_B} - \bfPhi_t \bfPhi_t^\top) \bfSigma \bfPhi_t \bfPhi_t^\top \bfSigma \big] \bfPhi_t \bfPhi_t^\top \big[ \I_{r_B} + \eta \bfSigma  \bfPhi_t \bfPhi_t^\top \bfSigma (\I_{r_B} - \bfPhi_t \bfPhi_t^\top)   \big]
    \end{align*}	
    where $(a)$ is by Lemma \ref{apdx.lemma.inverse}; and $(b)$ is by $\big(\I_r - \eta^2 \G_t^\top \G_t \big) \succeq (1 - \eta^2 \sigma_1(\I_{r_B} - \bfPhi_t\bfPhi_t^\top) ) \I $ as a result of \eqref{eq.apdx.ub-GG-sym}, and we write $\beta_t:= \sigma_1(\I_{r_B} - \bfPhi_t\bfPhi_t^\top)$ for convenience. Note that $\beta_t \in [0, 1]$.
	
    Now let the thin SVD of $\bfPhi_t$ be $\Q_t\bfLambda_t\PP_t^\top$, where $\Q_t \in \mathbb{R}^{r_B \times r_B}$, $\bfLambda_t \in \mathbb{R}^{r_B \times r_B}$, and $\PP_t \in \mathbb{R}^{r \times r_B }$. This gives that 
    \begin{align*}
        & ~~~~~ \bfPhi_{t+1}\bfPhi_{t+1}^\top \\
        & \stackrel{(c)}{\succeq} (1 - \eta^2 \beta_t) \Q_t \big[ \I_{r_B} + \eta (\I_{r_B} - \bfLambda_t^2) \bfS_t \bfLambda_t^2  \bfS_t   \big] \bfLambda_t^2 \big[ \I_{r_B} + \eta \bfS_t \bfLambda_t^2  \bfS_t  (\I_{r_B} - \bfLambda_t^2)   \big]  \Q_t^\top \nonumber \\
        & \succeq (1 - \eta^2 \beta_t) \Q_t \big[ \bfLambda_t^2 + \eta (\I_{r_B} - \bfLambda_t^2) \bfS_t \bfLambda_t^2 \bfS_t \bfLambda_t^2 +  \eta \bfLambda_t^2 \bfS_t \bfLambda_t^2  \bfS_t (\I_{r_B} - \bfLambda_t^2)\big]  \Q_t^\top
    \end{align*}	
    where in $(c)$ we denote $\bfS_t:= \Q_t^\top \bfSigma \Q_t $ which is PD. Noticing that $1 - \eta^2 \beta_t \geq 0$, and taking the trace on both sides, we have that
	\begin{align}\label{eq.apdx.discrete-psl-goal-sym}
		 & ~~~~\frac{1}{1 - \eta^2 \beta_t }\tr(\bfPhi_{t+1}\bfPhi_{t+1}^\top)\\
		 & \geq \tr(\bfPhi_t\bfPhi_t^\top) +  \eta \tr\big( (\I_{r_B} - \bfLambda_t^2) \bfS_t \bfLambda_t^2 \bfS_t \bfLambda_t^2 + \bfLambda_t^2  \bfS_t \bfLambda_t^2  \bfS_t (\I_{r_B} - \bfLambda_t^2) \big) \nonumber  \\
		 & \stackrel{(d)}{\geq} \tr(\bfPhi_t\bfPhi_t^\top) + \frac{2 \eta \sigma_{r_B}(\bfLambda_t^2) }{\kappa^2 } \tr\big( (\I_{r_B} - \bfLambda_t^2) \bfLambda_t^2 \big) \nonumber \\
		 & = \tr(\bfPhi_t\bfPhi_t^\top) + \frac{2 \eta \sigma_{r_B}(\bfLambda_t^2)}{\kappa^2} \tr\big( (\I_{r_B} - \bfPhi_t\bfPhi_t^\top) \bfPhi_t\bfPhi_t^\top \big) \nonumber
	\end{align}
	where $(d)$ is by Lemma \ref{lemma.apdx.trace-lower-bound} and Lemma \ref{apdx.lemma.aux888}. More precisely, we use $\sigma_{r_B}(\bfS_t \bfLambda_t^2 \bfS_t) \geq \sigma_{r_B}^2(\bfS_t) \sigma_{r_B}(\bfLambda_t^2) = \sigma_{r_B}(\bfLambda_t^2) /\kappa^2$.
	 Simplifying the inequality finishes the proof.
\end{proof}

\subsubsection{Non-Increasing Misalignment in Symmetric Problems}
\begin{lemma}\label{lemma.residual-sym}
	Denote the orthogonal complement of $\U$ be $\U_{\perp} \in \mathbb{R}^{m \times (m - r_B)}$. Define the $(m -r_B) \times r$ matrix $\bfPsi_t:= \U_\perp^\top  \X_t$ to characterize the alignment of $\X_t$ and $\U_\perp$. Under the same setting of Lemma \ref{lemma.alignment-ascent-sym}, we have that $\bfPsi_{t+1}\bfPsi_{t+1}^\top \preceq \bfPsi_t\bfPsi_t^\top$. Moreover, if $r_B \leq \frac{m}{2}$, it is guaranteed to have $\sigma_{r_B}^2(\bfPhi_{t+1}) \geq \sigma_{r_B}^2(\bfPhi_t)$.
\end{lemma}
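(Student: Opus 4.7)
The plan is to follow the template of Lemma~\ref{lemma.residual}: derive a sandwich-form recursion for $\bfPsi_{t+1}\bfPsi_{t+1}^\top$, argue the middle factor is $\preceq \I_r$, and then translate this PSD inequality into a lower bound on $\sigma_{r_B}(\bfPhi_{t+1})$ via Lemma~\ref{lemma.apdx.sin-cos}.

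First I would compute $\U_\perp^\top \G_t$ by using that $\B = \U\bfSigma\U^\top$ implies $\U_\perp^\top \B = \mathbf{0}$. The $(\I_m - \X_t\X_t^\top)$ factor in $\G_t$ then simplifies: the $\U_\perp^\top \B$ piece vanishes and what remains is
\begin{align*}
\U_\perp^\top \G_t \;=\; \U_\perp^\top \X_t \X_t^\top \B \X_t \X_t^\top \B \X_t \;=\; \bfPsi_t \bfTheta_t^2,
\end{align*}
where I used $\bfTheta_t = \X_t^\top \B \X_t$ at $\gamma=1$. Left-multiplying update \eqref{eq.update-x-sym} by $\U_\perp^\top$ then gives $\bfPsi_{t+1} = \bfPsi_t(\I_r - \eta \bfTheta_t^2)(\I_r + \eta^2 \G_t^\top\G_t)^{-1/2}$, so
\begin{align*}
\bfPsi_{t+1}\bfPsi_{t+1}^\top \;=\; \bfPsi_t \M_t \bfPsi_t^\top, \qquad \M_t := (\I_r - \eta\bfTheta_t^2)(\I_r + \eta^2 \G_t^\top\G_t)^{-1}(\I_r - \eta\bfTheta_t^2).
\end{align*}

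The next step is to check that $\M_t \preceq \I_r$. Since $\sigma_1(\bfTheta_t) \leq \sigma_1(\B) = 1$, the PSD matrix $\I_r - \eta\bfTheta_t^2$ has operator norm at most one for $\eta \leq 1$, and $(\I_r + \eta^2 \G_t^\top\G_t)^{-1} \preceq \I_r$ is automatic. Hence $\M_t$ is a symmetric PSD sandwich of the form $\PP \Q \PP$ with $\|\PP\|,\|\Q\|\leq 1$, so $\|\M_t\|\leq 1$ and $\M_t \preceq \I_r$. This immediately yields $\bfPsi_{t+1}\bfPsi_{t+1}^\top \preceq \bfPsi_t\bfPsi_t^\top$, the first claim.

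For the second assertion, I would invoke Lemma~\ref{lemma.apdx.sin-cos} on the identity $\bfPhi_t^\top \bfPhi_t + \bfPsi_t^\top\bfPsi_t = \I_r$ (whose summands commute), exactly as in the proof of Lemma~\ref{lemma.residual}. The condition $2 r_B \leq m$ ensures $\bfPsi_t$ has enough rows for the singular-value pairing to reach index $r_B$; combining this pairing with the monotonicity of singular values under the PSD inequality $\bfPsi_{t+1}\bfPsi_{t+1}^\top \preceq \bfPsi_t\bfPsi_t^\top$ (Weyl) yields $\sigma_{r_B}^2(\bfPhi_{t+1}) \geq \sigma_{r_B}^2(\bfPhi_t)$. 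The main technical subtlety is the index bookkeeping here, since $\bfPhi_t$ is $r_B \times r$ while $\bfPsi_t$ is $(m-r_B) \times r$; conceptually, the argument is a verbatim translation of the asymmetric case, with the only substantive difference being that the factor $\bfTheta_t\bfTheta_t^\top$ appearing in the asymmetric recursion is replaced by $\bfTheta_t^2$ (which coincides with $\bfTheta_t\bfTheta_t^\top$ because $\bfTheta_t$ is symmetric in the symmetric setting).
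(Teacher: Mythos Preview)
Your proposal is correct and follows essentially the same route as the paper's proof: the paper likewise derives $\bfPsi_{t+1} = \bfPsi_t(\I_r - \eta\bfTheta_t^2)(\I_r + \eta^2\G_t^\top\G_t)^{-1/2}$ using $\U_\perp^\top\B = \mathbf{0}$ and $\bfTheta_t = \X_t^\top\B\X_t$, bounds the inner factor by $\I_r$ via the same PSD/operator-norm reasoning, and then appeals to Lemma~\ref{lemma.apdx.sin-cos} under $2r_B \leq m$ for the second claim. Your added remark that $\bfTheta_t\bfTheta_t^\top = \bfTheta_t^2$ by symmetry is the only (harmless) elaboration beyond the paper's argument.
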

\begin{proof}
	From update \eqref{eq.update-x-sym}, we have that
	\begin{align*}
		\bfPsi_{t+1} & = \U_\perp^\top (\X_t - \eta \G_t) (\I_r + \eta^2 \G_t^\top \G_t )^{-1/2}  \\
		& =\big( \bfPsi_t + \eta \U_\perp^\top (\I_m - \X_t\X_t^\top)\B \X_t \X_t^\top \B \X_t \big) (\I_r + \eta^2 \G_t^\top \G_t )^{-1/2}  \\
		& = \big(\bfPsi_t - \eta  \U_\perp^\top  \X_t \bfTheta_t^2 \big) (\I_r + \eta^2 \G_t^\top \G_t )^{-1/2}  \\
		& = \bfPsi_t \big( \I_r - \eta \bfTheta_t^2 \big) (\I_r + \eta^2 \G_t^\top \G_t )^{-1/2}. 
	\end{align*}
	With this, we can see that 
	\begin{align*}
		\bfPsi_{t+1}\bfPsi_{t+1}^\top 
		& = \bfPsi_t \big( \I_r - \eta \bfTheta_t^2 \big) (\I_r + \eta^2 \G_t^\top \G_t )^{-1} \big( \I_r - \eta \bfTheta_t^2 \big)   \bfPsi_t^\top \\
		& \preceq  \bfPsi_t \bfPsi_t^\top 
	\end{align*}
    where the last inequality comes from the fact that the three matrices in between are all PSD and their largest eigenvalues are all smaller than 1 given our choices of $\eta$. This gives the proof for the first part of this lemma. 
	
    To show $\sigma_{r_B}^2(\bfPhi_{t+1}) \geq \sigma_{r_B}^2(\bfPhi_t)$, notice that given $2r_B \leq m $, we have from Lemma \ref{lemma.apdx.sin-cos} that $\sigma_{r_B}^2(\bfPhi_t) = 1 - \sigma_{r_B}^2(\bfOmega_t )$ and $\sigma_{r_B}^2(\bfPhi_{t+1}) = 1 - \sigma_{r_B}^2(\bfPsi_{t+1})$. The conclusion is straightforward.
\end{proof}

\subsubsection{Convergence}
The proof of Theorem \ref{thm.sym-op} is a direct result of the following Lemmas.

\begin{lemma}
	Suppose that $r_B \leq \frac{m}{2}$, and let $\rho:= \min \{ \frac{1}{m}, \frac{(r - r_B)^2}{mr} \}$.
	Choosing $\eta = {\cal O}\big( \frac{\rho (r - r_B)^2}{ r^2 \kappa^2 m} \big)$ and $\gamma=1$, we have that after at most ${\cal O}\big( \frac{r_B  r^3 \kappa^4 m^2 }{ \rho^2(r - r_B)^4} + \frac{ m r^2 \kappa^4 }{\rho (r - r_B)^2} \log\frac{1}{\epsilon}    \big) $ steps, $\tr(\I_{r_B} - \bfPhi_t\bfPhi_t^\top) \leq \epsilon$. 
\end{lemma}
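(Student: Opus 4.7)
The plan is to mimic the three-phase analysis of Lemma \ref{lemma.phi-converge} for the asymmetric case, with the crucial observation that in the symmetric problem the ``driver'' of the alignment's increase and the ``quantity'' being tracked are both $\sigma_{r_B}^2(\bfPhi_t)$ itself. This self-referential structure, absent in the asymmetric case, is what allows us to improve the Phase III iteration complexity compared to Lemma \ref{lemma.phi-converge}.

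Starting from \eqref{eq.apdx.discrete-psl-goal-sym} in the proof of Lemma \ref{lemma.alignment-ascent-sym}, I first rearrange to obtain the per-iteration progress
\begin{align*}
    \tr(\bfPhi_{t+1}\bfPhi_{t+1}^\top) - \tr(\bfPhi_t\bfPhi_t^\top)
    \geq \frac{2\eta \sigma_{r_B}^2(\bfPhi_t)}{\kappa^2}(1-\eta^2\beta_t)\tr\big((\I_{r_B}-\bfPhi_t\bfPhi_t^\top)\bfPhi_t\bfPhi_t^\top\big) - \eta^2 \beta_t \tr(\bfPhi_t\bfPhi_t^\top),
\end{align*}
and throughout the argument I use Lemma \ref{lemma.residual-sym} to inherit $\sigma_{r_B}^2(\bfPhi_t) \geq \sigma_{r_B}^2(\bfPhi_0)$ and Lemma \ref{lemma.initialization-sym} to get $\sigma_{r_B}^2(\bfPhi_0) \geq (r-r_B)^2/(c_1 mr)$ whp.

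In \textbf{Phase I}, the regime $\tr(\bfPhi_t\bfPhi_t^\top) \leq 0.5$ forces $\sigma_1(\bfLambda_t^2) \leq 0.5$ (since the eigenvalues of $\bfPhi_t\bfPhi_t^\top$ lie in $[0,1]$), hence $\sigma_{r_B}(\I - \bfLambda_t^2) \geq 0.5$ and $\tr((\I-\bfLambda_t^2)\bfLambda_t^2) \geq 0.5\tr(\bfLambda_t^2)$. This yields a multiplicative update of the form $\tr(\bfPhi_{t+1}\bfPhi_{t+1}^\top) \geq (1 + c_I\eta - \eta^2)\tr(\bfPhi_t\bfPhi_t^\top)$ with $c_I = \Theta((r-r_B)^2/(\kappa^2 c_1 mr))$, guaranteeing linear escape once $\eta \leq \mathcal{O}((r-r_B)^2/(\kappa^2 mr))$.
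In \textbf{Phase II}, I combine the initialization bound with $\sigma_{r_B}^2(\bfPhi_t) \in [(r-r_B)^2/(c_1 mr),\, 1 - 1/m]$ to lower bound $\tr((\I-\bfLambda_t^2)\bfLambda_t^2) \geq \sigma_{r_B}^2(\bfPhi_t)(1-\sigma_{r_B}^2(\bfPhi_t)) \geq \rho$, which mirrors the Phase II argument in Lemma \ref{lemma.phi-converge}; choosing $\eta = \mathcal{O}(\rho(r-r_B)^2/(r^2\kappa^2 m))$ then gives an additive increment of $\Delta = \Theta(\rho^2(r-r_B)^4/(r^3 \kappa^4 m^2))$ per step, so Phase II finishes after $(r_B - 1)/\Delta = \mathcal{O}(r_B r^3\kappa^4 m^2/(\rho^2(r-r_B)^4))$ iterations.

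\textbf{Phase III}, $\tr(\I_{r_B}-\bfPhi_t\bfPhi_t^\top)\leq 0.5$, is where the symmetric case strictly improves over the asymmetric one: now $\sigma_{r_B}^2(\bfPhi_t) \geq 0.5$ and $\sigma_{r_B}(\bfLambda_t^2) \geq 0.5$ both follow from the phase assumption. Plugging these into the recursion and using $\beta_t \leq \tr(\I_{r_B}-\bfPhi_t\bfPhi_t^\top)$ and $\tr(\bfPhi_t\bfPhi_t^\top) \leq r_B$ yields
\begin{align*}
    \tr(\I_{r_B} - \bfPhi_{t+1}\bfPhi_{t+1}^\top) \leq \Big(1 - \tfrac{\eta}{2\kappa^2}(1-\eta^2/2) + \eta^2 r_B\Big) \tr(\I_{r_B} - \bfPhi_t\bfPhi_t^\top),
\end{align*}
i.e., a linear contraction whose rate coefficient is \emph{independent} of the initialization-based bound on $\sigma_{r_B}^2(\bfPhi_t)$. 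With our choice of $\eta$, the contraction factor is $1 - \Theta(\eta/\kappa^2)$, so the number of iterations to reach $\tr(\I_{r_B}-\bfPhi_t\bfPhi_t^\top) \leq \epsilon$ is $\mathcal{O}(\kappa^2/\eta)\log(1/\epsilon) = \mathcal{O}(mr^2\kappa^4/(\rho(r-r_B)^2))\log(1/\epsilon)$. Summing the three phases (and verifying that our choice of $\eta$ satisfies the hypotheses of Lemma \ref{lemma.alignment-ascent-sym}, so that monotone alignment growth carries the iterate through Phases I $\to$ II $\to$ III in order) yields the claimed complexity. The main obstacle is the bookkeeping required to ensure that the transition between phases is valid and that the multiplicative contraction in Phase III is not spoiled by the additive $\eta^2 r_B$ term; this is handled by checking that $\eta \leq \mathcal{O}(1/(r_B\kappa^2))$, which is implied by $\eta = \mathcal{O}(\rho(r-r_B)^2/(r^2\kappa^2 m))$ since $\rho \leq 1/m$.
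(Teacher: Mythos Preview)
Your proposal is correct and follows essentially the same three-phase argument as the paper's proof: the same rearranged recursion from \eqref{eq.apdx.discrete-psl-goal-sym}, the same Phase I/II bounds via Lemmas \ref{lemma.residual-sym} and \ref{lemma.initialization-sym}, and the same Phase III observation that $\sigma_{r_B}(\bfLambda_t^2) \geq 0.5$ can be used twice to obtain a contraction rate independent of the initialization, yielding the improved $\mathcal{O}\big(mr^2\kappa^4/(\rho(r-r_B)^2)\big)\log(1/\epsilon)$ term. Your remark that the symmetric case improves over Lemma \ref{lemma.phi-converge} precisely because the ``driver'' $\sigma_{r_B}^2(\bfPhi_t)$ coincides with the tracked quantity is exactly the mechanism the paper exploits.
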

\begin{proof}
By rewriting \eqref{eq.apdx.discrete-psl-goal-sym}, we arrive at 
\begin{align}\label{eq.apdx.discrete-psl-goal-var-sym}
	& ~~~~~ \tr(\bfPhi_{t+1}\bfPhi_{t+1}^\top) - \tr(\bfPhi_t\bfPhi_t^\top)  \\
	& \geq  \frac{2 \eta \sigma_{r_B}(\bfLambda_t^2)}{\kappa^2 } (1 - \eta^2 \beta_t) \tr\big( (\I_{r_B} - \bfLambda_t^2) \bfLambda_t^2 \big) - \eta^2  \beta_t \tr(\bfPhi_t\bfPhi_t^\top). \nonumber
\end{align}
Based on \eqref{eq.apdx.discrete-psl-goal-var-sym}, we discuss the convergence in three different regimes.

\textbf{Phase I. $\tr(\I_{r_B} - \bfPhi_t\bfPhi_t^\top) \geq {r_B} - 0.5$.} This is the initial phase, and the condition is equivalent to $\tr(\bfPhi_t\bfPhi_t^\top) \leq 0.5$. Given these conditions, it can be seen that $\sigma_{\min}( \I_{r_B} - \bfPhi_t\bfPhi_t^\top) = \sigma_{\min}( \I_{r_B} - \bfLambda_t^2) \geq 0.5 $. Together with $\beta_t \leq 1$ (recall that $\beta_t:= \sigma_1(\I_{r_B} - \bfPhi_t\bfPhi_t^\top)$), we can simplify \eqref{eq.apdx.discrete-psl-goal-var-sym} as
\begin{align*}
    \tr(\bfPhi_{t+1}\bfPhi_{t+1}^\top) - \tr(\bfPhi_t\bfPhi_t^\top) & \geq  \frac{2 \eta \sigma_{r_B}(\bfLambda_t^2) }{\kappa^2 } (1 - \eta^2  ) \tr\big( (\I_{r_B} - \bfLambda_t^2) \bfLambda_t^2 \big) - \eta^2 \tr(\bfPhi_t\bfPhi_t^\top)  \\
    & \stackrel{(a)}{\geq} \frac{\eta \sigma_{r_B}(\bfLambda_t^2) }{\kappa^2 } (1 - \eta^2  ) \tr (\bfLambda_t^2 ) - \eta^2  \tr(\bfPhi_t\bfPhi_t^\top) \\
    & \stackrel{(b)}{\geq} \frac{\eta (r - r_B)^2}{\kappa^2 c_1 m r} (1 - \eta^2 ) \tr (\bfPhi_t\bfPhi_t^\top) - \eta^2  \tr(\bfPhi_t\bfPhi_t^\top)
\end{align*}
where $(a)$ uses $ \sigma_{\min}( \I_{r_B} - \bfLambda_t^2) \geq 0.5$; $(b)$ uses Lemmas \ref{lemma.residual-sym} and \ref{lemma.initialization-sym}, which implies that $\sigma_{r_B}(\bfLambda_t^2) \geq \sigma_{r_B}^2(\bfPhi_0)  \geq (r-r_B)^2/(c_1 m r)$ for some universal constant $c_1$ defined in Lemma \ref{lemma.initialization-sym}. Rearranging the terms, we arrive at
\begin{align*}
	\tr(\bfPhi_{t+1}\bfPhi_{t+1}^\top) \geq   \bigg(1 +\frac{\eta (r - r_B)^2}{\kappa^2 c_1 m r} (1 - \eta^2 )  - \eta^2  \bigg) \tr(\bfPhi_t\bfPhi_t^\top)
\end{align*}
which is linearly increasing once the term in parentheses is greater than $1$. This amounts to choosing a small enough $\eta$, e.g., $\eta \leq {\cal O}(\frac{(r - r_B)^2}{\kappa^2 m r} )$.

\textbf{Phase II. $0.5 <  \tr(\I_{r_B} - \bfPhi_t\bfPhi_t^\top) < {r_B}- 0.5$.} Suppose that $\tr\big( (\I_{r_B} - \bfLambda_t^2) \bfLambda_t^2 \big) \geq  \rho$, for some $\rho$ to be discussed shortly. Choosing $\eta \leq 0.5$, and $\eta = {\cal O}\big(\frac{\rho (r - r_B)^2 }{r^2 \kappa^2 m} \big)$, it is straightforward to have
\begin{align*}
    \tr(\bfPhi_{t+1}\bfPhi_{t+1}^\top) - \tr(\bfPhi_t\bfPhi_t^\top) & \geq \frac{2 \eta (r - r_B)^2}{\kappa^2 c_1 m r} (1 - \eta^2) \tr\big( (\I_{r_B} - \bfLambda_t^2) \bfLambda_t^2 \big) - \eta^2 r_B \\
    & \geq \frac{2 \eta (r - r_B)^2}{\kappa^2 c_1 m r} (1 - \eta^2) \tr\big( (\I_{r_B} - \bfLambda_t^2) \bfLambda_t^2 \big) - \eta^2 r \nonumber \\ 
    & \geq {\cal O}\Big( \frac{ \rho^2 (r - r_B)^4}{ r^3 \kappa^4 m^2 } \Big):= \Delta. \nonumber
\end{align*}
This means that per step, $\tr(\bfPhi_t\bfPhi_t^\top) $ increases at least by $\Delta$. Consequently, after at most $(r_B - 1) / \Delta = {\cal O}(r_B r^3  \kappa^4 m^2 /\rho^2 (r - r_B)^4 )$ iterations, RGD leaves Phase II.

Next, we show that $\rho \geq {\cal O}(\min\{ \frac{1}{m}, \frac{(r - r_B)^2}{mr} \})$. 
Notice that $\tr\big( (\I_{r_B} - \bfLambda_t^2) \bfLambda_t^2 \big) \geq \sum_{i=1}^{r_B} \sigma_i^2(\bfPhi_t) (1 - \sigma_i^2(\bfPhi_t)) \geq  \sigma_{r_B}^2(\bfPhi_t) (1 - \sigma_{r_B}^2(\bfPhi_t)) 
\geq {\cal O}(\min\{ \frac{1}{m}, \frac{(r - r_B)^2}{mr} \})$, where the last inequality comes from the facts that i) for $x \in [a, b]$ with $0 < a <  0.5< b < 1$, the smallest value of $x(1-x)$ is $\min\{a(1-a), b(1-b)\}$; and, ii) $\sigma_{r_B}^2(\bfPhi_t)$ belongs to interval $[a,b]$ with $a =  {\cal O}\big( \frac{(r - r_B)^2}{mr})$ and $ b = \frac{r_B - 0.5}{r_B} = 1 - \frac{1}{2 r_B} \leq 1 - \frac{1}{m}$. Lemmas \ref{lemma.residual-sym} and \ref{lemma.initialization-sym} are adopted to calculate $a$, that is, $\sigma_{r_B}^2(\bfPhi_t) \geq \sigma_{r_B}^2(\bfPhi_0) = {\cal O}((r-r_B)^2/mr)$.

\textbf{Phase III. $\tr(\I_{r_B} - \bfPhi_t\bfPhi_t^\top) \leq 0.5$.} This is a regime near the optimum. An implication of this phase is that $\tr(\bfPhi_t\bfPhi_t^\top) \geq  r_B - 0.5$. Given that the singular values of $\bfPhi_t\bfPhi_t^\top$ belong to $[0, 1]$, it can be seen that $\sigma_{r_B}( \bfPhi_t\bfPhi_t^\top) = \sigma_{r_B}( \bfLambda_t^2) \geq 0.5 $. Together with $\beta_t \leq 0.5$ in this scenario, we can simplify \eqref{eq.apdx.discrete-psl-goal-var-sym} as
\begin{align*}
    & ~~~~~ \tr(\bfPhi_{t+1}\bfPhi_{t+1}^\top) - \tr(\bfPhi_t\bfPhi_t^\top) 
	\\ 
    & \geq  \frac{2 \eta \sigma_{r_B}(\bfLambda_t^2)}{\kappa^2 } \bigg(1 - \frac{\eta^2}{2} \bigg) \tr\big( (\I_{r_B} - \bfLambda_t^2) \bfLambda_t^2 \big) - \eta^2 \beta_t \tr(\bfPhi_t\bfPhi_t^\top)  \nonumber \\
    & \stackrel{(c)}{\geq}  \frac{ \eta}{2 \kappa^2} \bigg(1 - \frac{\eta^2}{2} \bigg) \tr (\I_{r_B} - \bfLambda_t^2)  - \eta^2 \beta_t r_B  \nonumber \\
    & =  \frac{\eta}{2 \kappa^2}  \bigg(1 - \frac{\eta^2}{2} \bigg) \tr (\I_{r_B} - \bfPhi_t\bfPhi_t^\top)  -  \eta^2 \beta_t r_B   \nonumber  \\
    & \stackrel{(d)}{\geq} \frac{\eta}{2 \kappa^2}  \bigg(1 - \frac{\eta^2}{2} \bigg) \tr (\I_{r_B} - \bfPhi_t\bfPhi_t^\top)  -  \eta^2 r_B \tr (\I_{r_B} - \bfPhi_t\bfPhi_t^\top) \nonumber
\end{align*}
where $(c)$ applies $\sigma_{r_B}( \bfLambda_t^2) \geq 0.5$ twice; and $(d)$ follows from $\beta_t \leq \tr (\I_{r_B} - \bfPhi_t\bfPhi_t^\top) $.
This further implies that
\begin{align*}
	& ~~~~~ \tr(\I_{r_B} - \bfPhi_{t+1}\bfPhi_{t+1}^\top) - \tr(\I_{r_B} - \bfPhi_t\bfPhi_t^\top) \\
	& \leq  - \frac{\eta}{2 \kappa^2}  \bigg(1 - \frac{\eta^2}{2} \bigg) \tr (\I_{r_B} - \bfPhi_t\bfPhi_t^\top)  + \eta^2 r_B \tr (\I_{r_B} - \bfPhi_t\bfPhi_t^\top).
\end{align*}
Reorganizing the terms, we arrive at
\begin{align*}
	\tr(\I_{r_B} - \bfPhi_{t+1}\bfPhi_{t+1}^\top) \leq \bigg( 1 - \frac{\eta}{2 \kappa^2}  \bigg(1 - \frac{\eta^2}{2} \bigg)  + \eta^2 r_B  \bigg) \tr (\I_{r_B} - \bfPhi_t\bfPhi_t^\top).
\end{align*}
This indicates a linear rate until we achieve optimality once $\eta$ is chosen sufficiently small.

Note that our choice of $\eta$ ensures the conditions in Lemma \ref{lemma.alignment-ascent-sym} are satisfied, guaranteeing an increase of $\tr(\bfPhi_t\bfPhi_t^\top)$ in every iteration. This means that the $\tr(\bfPhi_t\bfPhi_t^\top)$ will traverse Phase I, II, and III consecutively. Combining these three phases together gives the claimed complexity bound.
\end{proof}

\begin{lemma}
    Alg. \ref{alg.rgd-sym} with $\gamma=1$ guarantees that $f(\X_t, \bfTheta_t) = {\cal O}(\epsilon)$ if $\tr(\I_{r_B} - \bfPhi_t\bfPhi_t^\top) \leq \epsilon$ is satisfied.
\end{lemma}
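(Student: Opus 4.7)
The plan is to mirror the argument of Lemma \ref{lemma.required-acc} from the asymmetric case, exploiting the simplification that the choice $\gamma=1$ yields the closed form $\bfTheta_t = \X_t^\top \B \X_t$. Substituting this into the loss, $\X_t \bfTheta_t \X_t^\top = \PP_t \B \PP_t$ where $\PP_t := \X_t\X_t^\top$ is the orthogonal projector onto the column span of $\X_t$; so the task reduces to controlling $\|\PP_t \B \PP_t - \B\|_\fro$ in terms of the alignment quantity $\tr(\I_{r_B} - \bfPhi_t\bfPhi_t^\top)$.

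First I would perform the telescoping decomposition
\begin{align*}
\X_t\bfTheta_t\X_t^\top - \B = \PP_t \B \PP_t - \B = (\PP_t - \I_m)\B\PP_t + \B(\PP_t - \I_m),
\end{align*}
apply the triangle inequality for $\|\cdot\|_\fro$ together with the submultiplicative bound $\|\M\N\|_\fro \leq \|\M\|_\fro \|\N\|$, and use $\|\PP_t\| \leq 1$ and symmetry of $\B$ to arrive at
\begin{align*}
\|\X_t\bfTheta_t\X_t^\top - \B\|_\fro \leq 2 \| (\I_m - \PP_t) \B \|_\fro.
\end{align*}

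Next I would plug in the compact SVD $\B = \U\bfSigma\U^\top$ and use $\sigma_1(\bfSigma) = 1$ to obtain
\begin{align*}
\|(\I_m - \PP_t)\B\|_\fro^2 \leq \|\bfSigma\|^2 \, \| (\I_m - \X_t\X_t^\top) \U \|_\fro^2 = \tr\bigl(\U^\top(\I_m - \X_t\X_t^\top)\U\bigr) = \tr(\I_{r_B} - \bfPhi_t\bfPhi_t^\top),
\end{align*}
where the middle equality uses idempotence of $\I_m - \X_t\X_t^\top$. Combining the two inequalities with the hypothesis $\tr(\I_{r_B} - \bfPhi_t\bfPhi_t^\top) \leq \epsilon$ yields $\|\X_t\bfTheta_t\X_t^\top - \B\|_\fro^2 \leq 4\epsilon$, hence $f_{\text{sym}}(\X_t, \bfTheta_t) \leq 2\epsilon = {\cal O}(\epsilon)$.

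There is no real obstacle here: the result is a direct consequence of $\gamma=1$ collapsing $\bfTheta_t$ into the closed-form projection coefficient, and the only ingredient required is translating spectral alignment into a functional error bound. If anything warrants care, it is just maintaining consistency of the Frobenius/spectral norm bookkeeping when splitting the sandwich $\PP_t \B \PP_t - \B$, which is the same sleight-of-hand used in the asymmetric counterpart Lemma \ref{lemma.required-acc}.
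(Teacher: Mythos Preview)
The proposal is correct and follows essentially the same approach as the paper's proof: both use $\gamma=1$ to write $\bfTheta_t = \X_t^\top\B\X_t$, perform the same telescoping $\PP_t\B\PP_t-\B = (\PP_t-\I_m)\B\PP_t + \B(\PP_t-\I_m)$, insert the SVD $\B=\U\bfSigma\U^\top$, and reduce to $\|(\I_m-\X_t\X_t^\top)\U\|_\fro^2 = \tr(\I_{r_B}-\bfPhi_t\bfPhi_t^\top)\leq\epsilon$, arriving at the identical bound $\|\X_t\bfTheta_t\X_t^\top-\B\|_\fro^2\leq 4\epsilon$. The only cosmetic difference is that you invoke symmetry of $\B$ upfront to merge the two pieces into $2\|(\I_m-\PP_t)\B\|_\fro$, whereas the paper bounds each piece separately via the SVD before combining.
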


\begin{proof}
We ignore the subscript in $\X_t$ and $\bfTheta_t = \X_t^\top \B \X_t$ for simplicity. We have that
\begin{align*}
	\| \X \bfTheta \X^\top - \B \|_\fro & = \| \X \X^\top \B \X \X^\top - \B \|_\fro \\
	& = \|  \X \X^\top \B \X \X^\top -  \B \X\X^\top +  \B \X\X^\top - \B \|_\fro \\
	& \leq \|  (\X \X^\top - \I_m)  \B \X\X^\top \|_\fro + \|  \B (\X\X^\top - \I_m) \|_\fro  \\
	& \stackrel{(a)}{\leq} \|  (\X \X^\top - \I_m) \U \|_\fro \| \bfSigma \U^\top \X\X^\top \| + \|\U \bfSigma \| \| \U^\top (\X\X^\top - \I_m) \|_\fro \\
	& \stackrel{(b)}{\leq} 2 \| \bfSigma \| \| (\I_m - \X\X^\top) \U \|_\fro.
\end{align*}

Now since we have that 
\begin{align*}
	\| (\I_m - \X\X^\top) \U \|_\fro^2 & = \tr\Big(  \U^\top (\I_m - \X\X^\top) (\I_m - \X\X^\top) \U  \Big) \\
	& = \tr(\I_{r_B} - \bfPhi_t\bfPhi_t^\top) \leq \epsilon
\end{align*}

Combining above inequalities, we have that 
\begin{align*}
	\| \X \bfTheta \X^\top - \B \|_\fro^2 \leq 4 \| \bfSigma \|^2 \| (\I_m - \X\X^\top) \U \|_\fro^2 \leq 4\epsilon.
\end{align*}
This finishes the proof. 
\end{proof}

\section{Additional Experiments}
\label{appdx:additional-exps}
\subsection{Additional Analysis on Stable Rank} \label{appdx:stable-rank}

In addition to the low stable rank in commonsense reasoning tasks shown in Fig. \ref{fig:stable-rank-boxplot}, we analyze the \textit{official} LoRA checkpoints from \cite{hu_lora_2022} for DeBERTa-XXL.\footnote{Available at \url{https://github.com/microsoft/LoRA}.} 
The results are shown in Fig.~\ref{fig:stable-rank-official-checkpoints}. It demonstrates that for most tasks the median stable rank of LoRA is in $[1.30, 1.40]$ even if $r$ is chosen as $16$ in LoRA. This implies that for half of the layers the stable rank is at the lower end, leading to low directional diversity as outlined in the main text. 

We also provide more comprehensive results on the relationship between the stable and nominal rank for LoRA and \method{} fine-tuned on commonsense reasoning tasks. Fig.~\ref{fig:appdx-stable-rank} confirms the argument made in the main text that LoRA suffers from a low effective rank which is addressed by our \method{} parameterization. This applies to all commonsense reasoning datasets considered with the exception of ARC-e and ARC-c. Moreover, the stable rank of \method{} increases with $r$ on all datasets, yet the trend is unclear for LoRA.

In Fig.~\ref{fig:stable-rank-dynamics}, we show the evolution of $\sr(\Delta \W)$ over the course of training for \method{} and LoRA at $r=32$. While the stable rank for LoRA does not deviate significantly from the stable rank early in training, \method{}'s stable rank appears to dynamically vary with the layer type: more often than not the MLP layers (up and down projection) appear to require a larger stable rank, while the stable rank of the value projection layer often comes out at the lowest.

\begin{figure}
     \centering
     \begin{subfigure}{0.32\textwidth}
        \includegraphics[width=\linewidth]{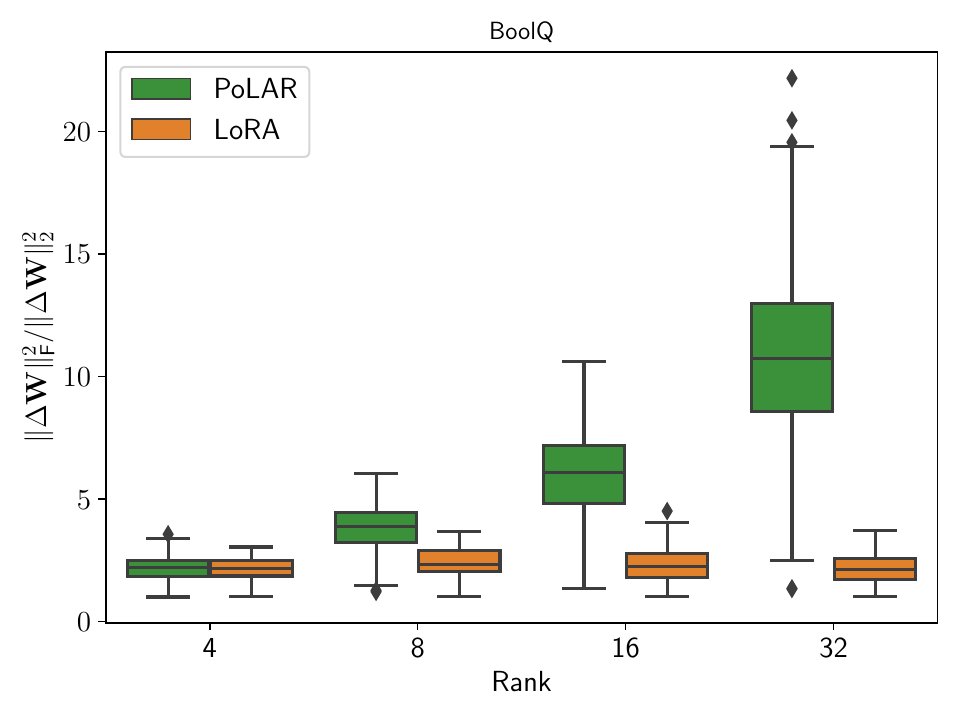}
        \caption{BoolQ}
     \end{subfigure}
     \hfill
     \begin{subfigure}{0.32\textwidth}
         \includegraphics[width=\linewidth]{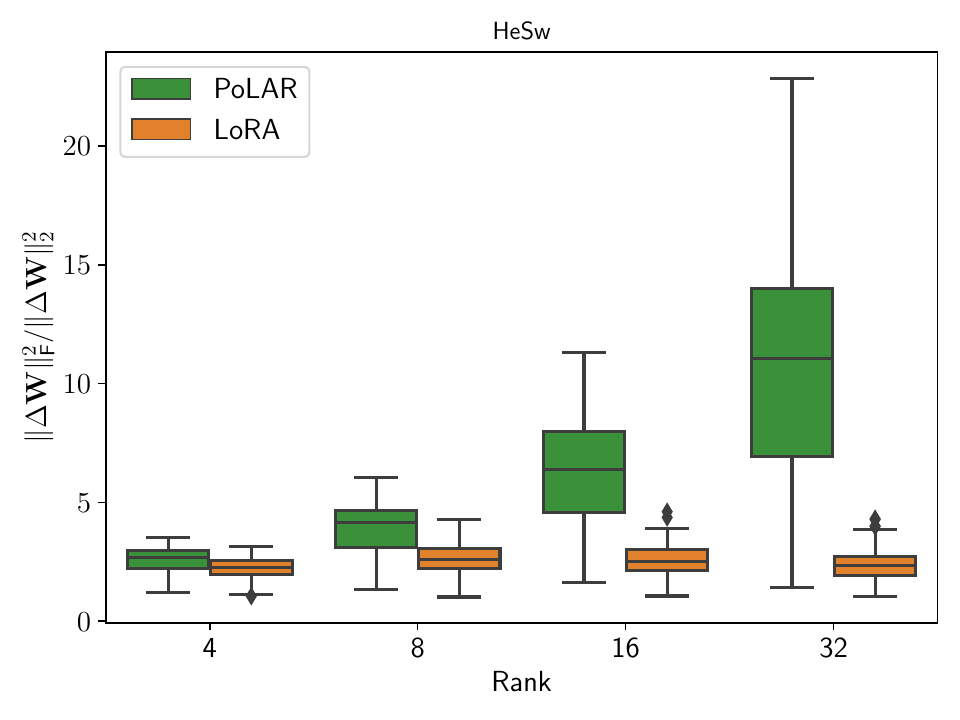}
         \caption{HellaSwag}
     \end{subfigure}
        \begin{subfigure}{0.32\textwidth}
         \includegraphics[width=\linewidth]{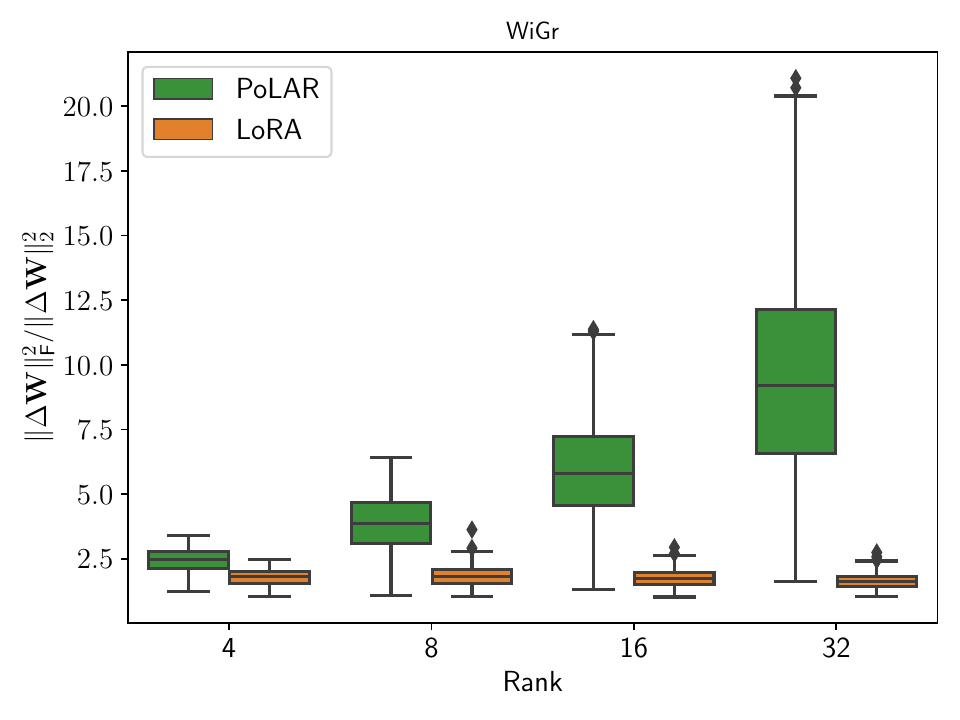}
         \caption{WinoGrande}
     \end{subfigure}

     \centering
     \begin{subfigure}{0.32\textwidth}
        \includegraphics[width=\linewidth]{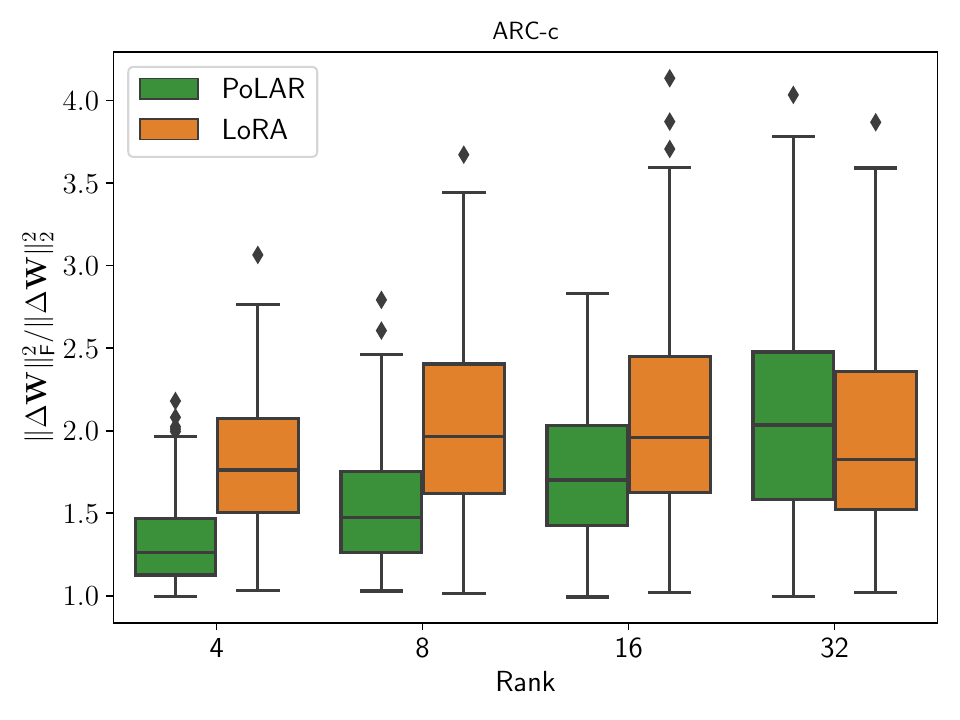}
        \caption{ARC-Challenge}
     \end{subfigure}
     \hfill
     \begin{subfigure}{0.32\textwidth}
         \includegraphics[width=\linewidth]{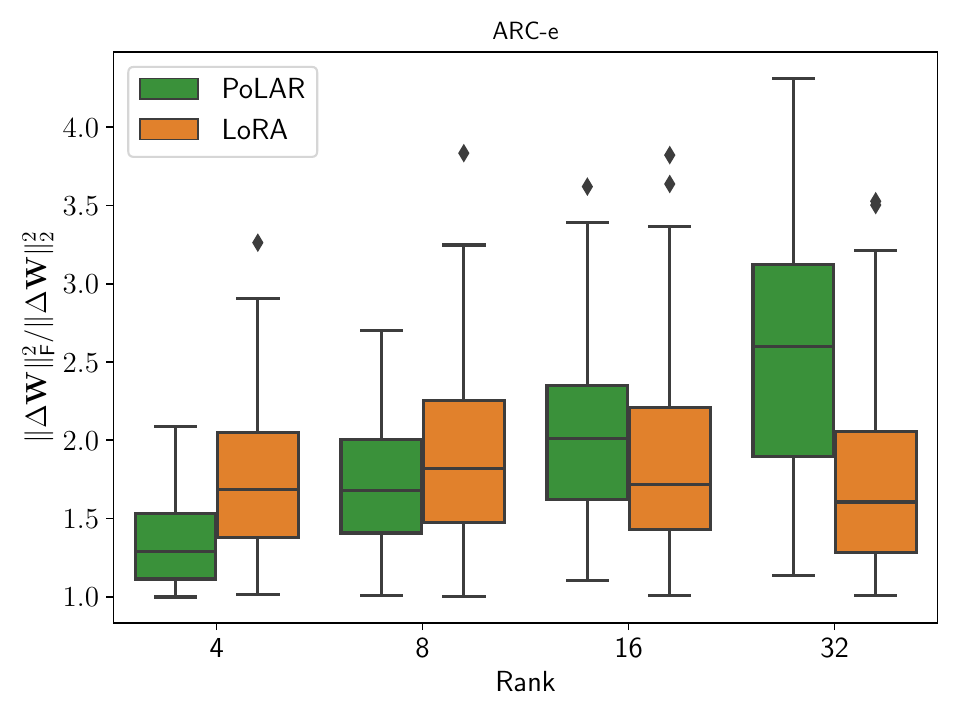}
         \caption{ARC-Easy}
     \end{subfigure}
        \begin{subfigure}{0.32\textwidth}
         \includegraphics[width=\linewidth]{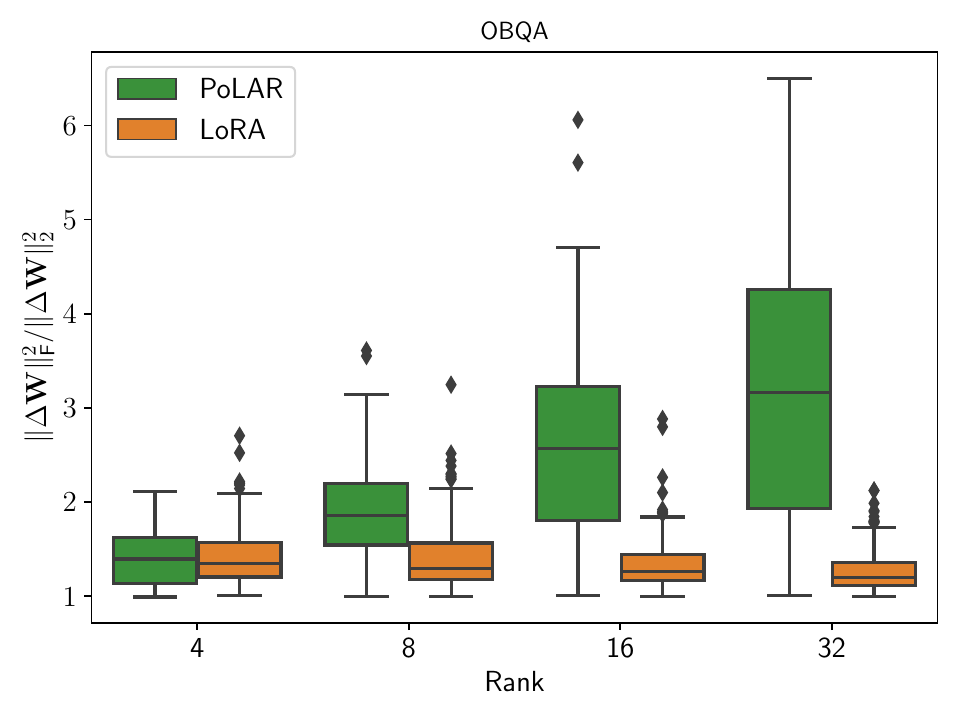}
         \caption{OpenbookQA}
     \end{subfigure}
     
     \caption{Stable rank of $\Delta \W$ for Llama2-7B fine-tuned on commonsense reasoning tasks.}
     \label{fig:appdx-stable-rank}
\end{figure}

\begin{figure}
    \centering
    \includegraphics[width=0.6\linewidth]{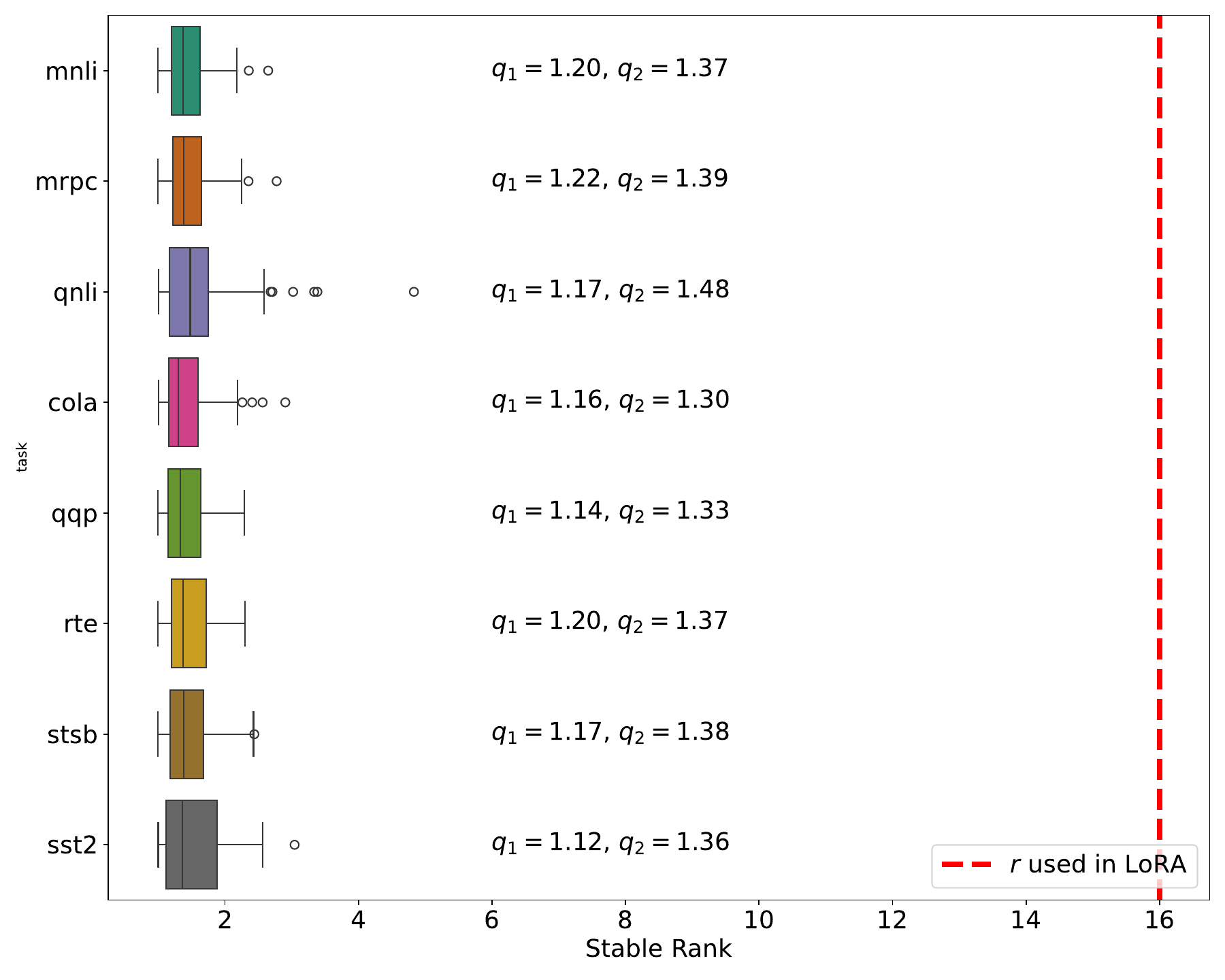}
    \caption{Stable rank of the official DeBERTa-XXL LoRA checkpoints trained on GLUE tasks with $r=16$. $q_1$ and $q_2$ denote the first quartile and median, respectively.
    }
    \label{fig:stable-rank-official-checkpoints}
\end{figure}

\subsection{Runtime Comparison}
\label{appdx:runtime-comparison}
\paragraph{Benchmarking against Retraction.} We provide the experimental details for the runtime comparison of a landing step and retraction step based on polar retraction in Table~\ref{tab:benchmark}. For the retraction step, we re-implement the polar retraction from \cite{boumal_manopt_2014} in PyTorch and compare it to our parameter update based on a landing step. We use 
\texttt{torch.compile}
to pre-compile each function and run the benchmarking on a single NVIDIA H100 device. We sample measurements until the ratio of interquartile range to median is below $0.15$ and report the median. We provide  detailed results for additional values of $m$ in Table~\ref{tab:extended_results_bechmark} below, where the choice of $m$ comes from the dimensions of linear layers of Llama models.

\paragraph{Benchmarking against LoRA/DoRA.} We compare runtime and memory usage of \method{} to that of LoRA and DoRA by reporting average step time and reserved memory usage in Table~\ref{tab:runtime-comparison-dora}. We record memory usage with the PyTorch profiler and measure step time using the HuggingFace Trainer. We remark that \method{} only slightly increases memory usage relative to LoRA. The step time overhead of \method{} over LoRA is at approximately 15\%.

\begin{table}[]
    \centering
    \caption{Reserved memory, average step time, and total time of training different low-rank adapters for Llama-2-7B on PIQA with $r=32$. A single NVIDIA H100 GPU was used.}
    \label{tab:runtime-comparison-dora}
    \begin{tabular}{lccc}
      \toprule
      Method  & Mem. (GB) & Avg. Step Time (s) & Total Time (min) \\
      \midrule
      LoRA  & $52.37$ & $2.72$ & $28.74$  \\
      DoRA  & $67.05$ & $5.71$ & $60.38$ \\
      PoLAR & $52.68$ & $3.13$ & $33.13$  \\
      \bottomrule
    \end{tabular}
\end{table}

\begin{figure}
    \centering
    \includegraphics[width=0.9\linewidth]{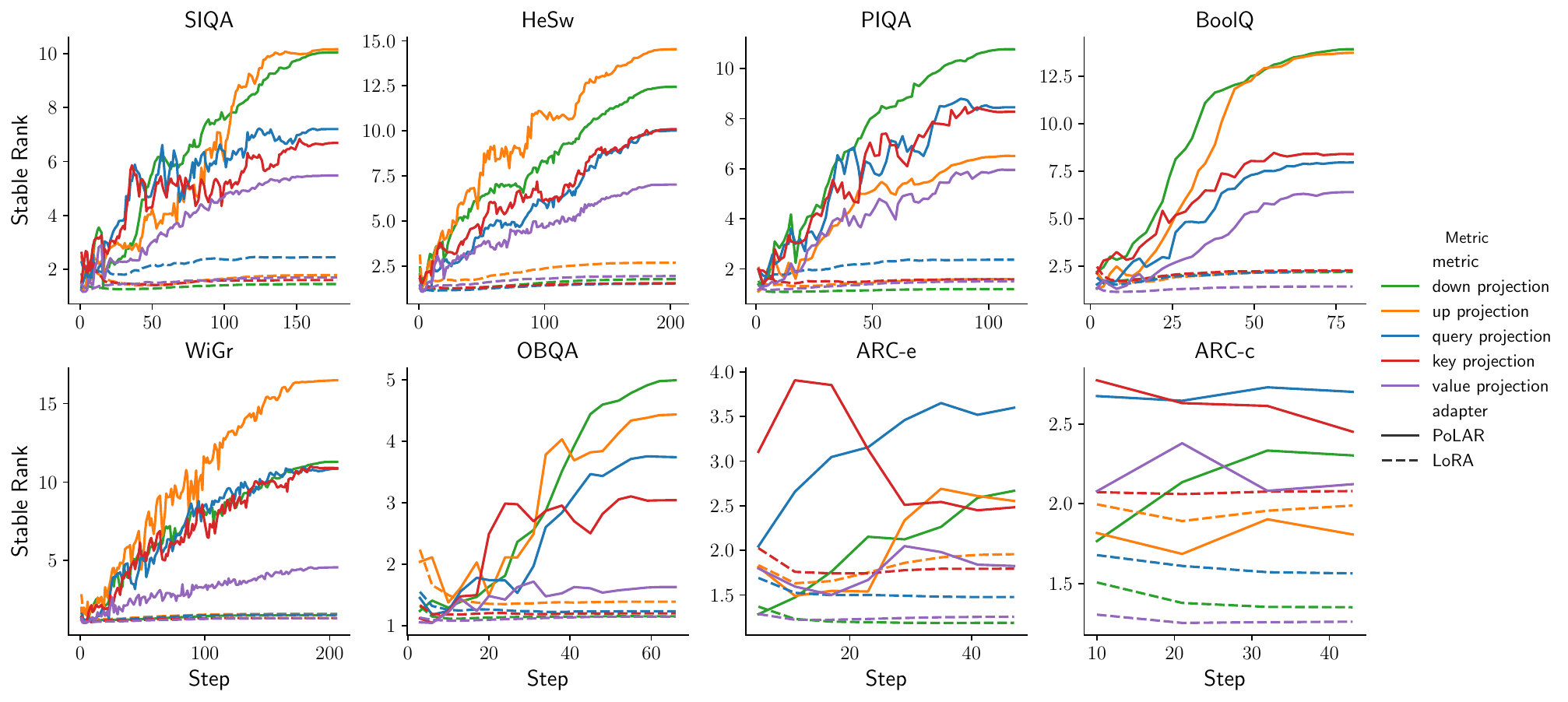}
    \caption{Stable rank evolution of the low-rank updates for the layers of the 5th transformer block of Llama-2-7B.}
    \label{fig:stable-rank-dynamics}
\end{figure}

\subsection{Approximate Feasibility} \label{appdx:approximate-feasbility}

In Fig.~\ref{fig:orthogonality}, we display the Frobenius norm distance to the Stiefel manifold for $\X$ and $\Y$ of \method{} as well as $\Z_1$ and $\Z_2$ of LoRA. The displayed values refer to averages across all layers. The plot provides empirical evidence that \method{} ensures (i) approximate feasibility of $\X$ and $\Y$ throughout training and (ii) convergence to the Stiefel manifold. 

\begin{figure}
     \centering
     \begin{subfigure}{0.4\textwidth}
        \includegraphics[width=\linewidth]{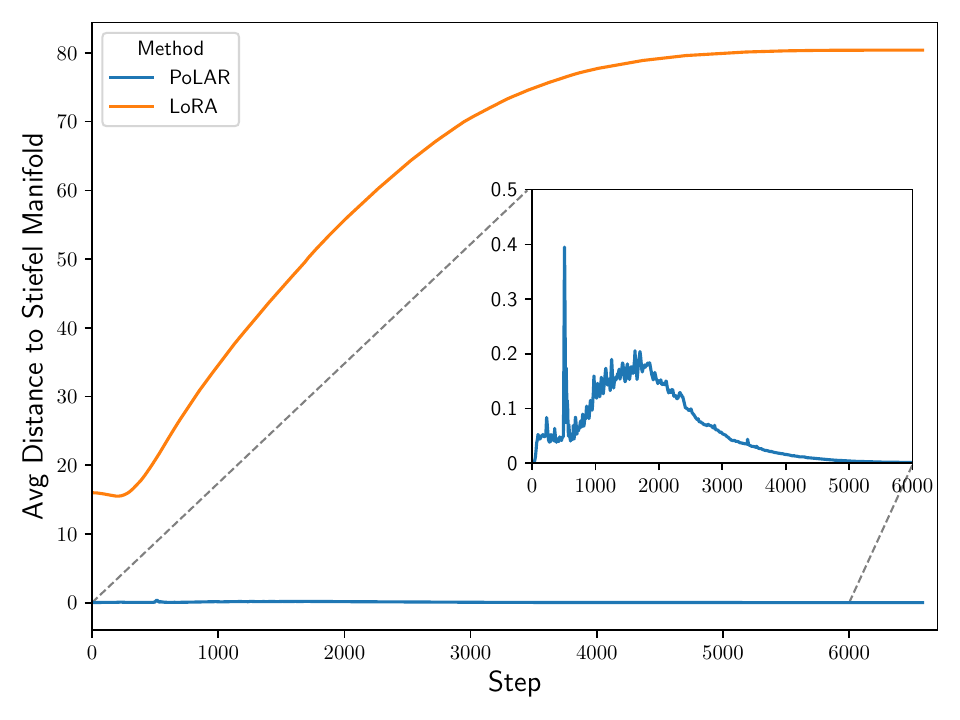}
        \caption{$\lVert \X^\top \X - \I_r\rVert_{\mathsf{F}}^2$ and $\lVert \Z_1^\top \Z_1 - \I_r\rVert_{\mathsf{F}}^2$}
     \end{subfigure}
     \begin{subfigure}{0.4\textwidth}
         \includegraphics[width=\linewidth]{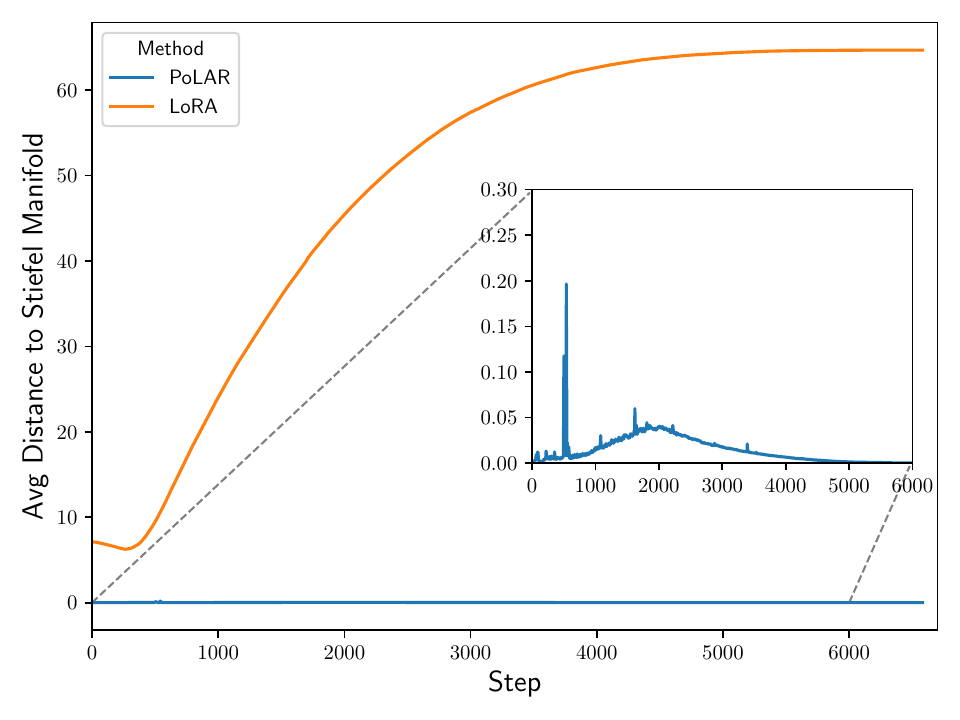}
         \caption{$\lVert \Y^\top \Y - \I_r\rVert_{\mathsf{F}}^2$ and $\lVert \Z_2^\top \Z_2 - \I_r\rVert_{\mathsf{F}}^2$}
     \end{subfigure}
     \caption{Average distance to Stiefel manifold across all LoRA and PoLAR modules in Gemma-3-27B on MetaMathQA.}
     \label{fig:orthogonality}
\end{figure}

\begin{table}[ht]
  \centering
  \caption{Retraction (Retr.) and landing (Land.) times (in \(\mu s\)) for different \((m,r)\).}
  \begin{tabular}{llcccc}
    \toprule
    & & \multicolumn{4}{c}{$r$} \\
    \cmidrule(r){3-6}
    $m$ & & 4 & 32 & 64 & 256 \\
    \midrule
    $768$ &Retr. & 507 & 1160 & 2178 & 8317 \\
    &Land. & 167 & 100  & 102  & 109  \\
    \midrule
    $4096$ & Retr. & 541 & 1050 & 1916 & 9889 \\
    & Land. & 180 & 186  & 238  & 539  \\
    \midrule
    $11008$ &Retr. & 606 & 1312 & 2237 & 11653 \\
    &Land. & 1785 & 1233 & 1632 & 3606 \\
    \bottomrule
  \end{tabular}
  
  \label{tab:extended_results_bechmark}
\end{table}

\subsection{Comparison with GaLore}

We compare \method{} to GaLore \citep{zhao_galore_2024} in Table~\ref{tab:galore_comparison}. We adopt our commonsense reasoning training setup from Table~\ref{tab:commonsense-llama} with rank $16$. For GaLore's hyperparameters, we take inspiration from their fine-tuning experiments in Appendix D and set the scale to $2$. We set the update projection gap s.t. the projections are updated once 15\% increments of training are completed and tune the learning rate $\eta \in \lbrace 1 \times 10^{-5}, 4 \times 10^{-5} \rbrace$. 
\begin{table}
    \centering
    \caption{Performance on commonsense reasoning tasks with Llama-2-7B using GaLore and \method{} for rank 16 in a single-task setup.  
  HeSw refers to HellaSwag and WiGr to WinoGrande.}
    \label{tab:galore_comparison}
    \resizebox{\columnwidth}{!}{\begin{tabular}{c l r r r r r r r r r}
    \toprule
     & $\eta$ & BoolQ & PIQA & SIQA & HeSw & WiGr & ARC-e & ARC-c & OBQA & Avg. \\
    \midrule
    \multirow{2}{*}{GaLore}   & $1 \times 10^{-5}$ & 81.93 & 80.20 & 56.40 & 80.23 & 81.53 & 80.22 & 47.53 & 44.80 & 69.10 \\
    & $4 \times 10^{-5}$ & 86.48 & 82.81 & 61.26 & 83.84 & 78.85 & 83.08 & 54.44 & 57.80 & 73.57 \\
    \midrule
    PoLAR    &                     & 88.01 & 82.92 & 59.93 & 82.68 & 81.77 & 81.82 & 57.51 & 57.40 & 74.00 \\
    \bottomrule
    \end{tabular}
    }

\end{table}

\subsection{Gemma-3-12B on Commonsense Reasoning}\label{appdx:gemma3-commonsense}

In Table~\ref{tab:commonsense-gemma3}, we evaluate \method{} on Gemma-3 on the commonsense reasoning benchmark, replicating the setting described Appendix~\ref{app:experimental-setup-commonsense} with the difference of setting $\lambda=5\times 10^{-3}$, $r=8$, and using Gemma-3-12B.

\begin{table}
  \caption{Performance on commonsense-reasoning tasks with Gemma-3-12B using LoRA and \method{} at rank 8 for different learning rates~($\eta$). HeSw refers to HellaSwag and WiGr to WinoGrande.}
  \label{tab:commonsense-gemma3}
  \centering
  \resizebox{\columnwidth}{!}{
  \begin{tabular}{c l  r r r r r r r r r}
    \toprule
    & $\eta$ & BoolQ & PIQA & SIQA & HeSw & WiGr & ARC-e & ARC-c & OBQA & Avg. \\
    \midrule
    \multirow{2}{*}{LoRA}
    & $4\!\times\!10^{-4}$ & 90.03 & 84.33 & 59.77 & 86.25 & 79.40 & 88.55 & 70.90 & 63.60 & 77.86 \\
    & $8\!\times\!10^{-4}$ & 90.43 & 82.43 & 57.68 & 83.70 & 77.35 & 87.79 & 69.28 & 63.00 & 76.46 \\
    \midrule
    \multirow{2}{*}{\method{}}
     & $4\!\times\!10^{-4}$ & 89.72 & 85.53 & 61.87 & 87.19 & 86.50 & 90.11 & 65.53 & 66.00 & 79.06 \\
     & $8\!\times\!10^{-4}$ & 89.54 & 85.58 & 61.87 & 86.58 & 86.50 & 90.61 & 66.81 & 65.80 & 79.16 \\
    \bottomrule
  \end{tabular}
  }
\end{table}

\section{Experimental Setup}
\label{app:experimental-setup}

Experiments are performed on either of NVIDIA GH200 and NVIDIA H100 GPUs. We use \textit{PyTorch} \citep{paszke_pytorch_2019} for all experiments.

\subsection{Commonsense Reasoning}
\label{app:experimental-setup-commonsense}

We consider the following tasks:  BoolQ \citep{clark_boolq_2019}, PIQA \citep{bisk_piqa_2019}, SIQA \citep{sap_social_2019}, HellaSwag \citep{zellers_hellaswag_2019}, WinoGrande \citep{sakaguchi_winogrande_2019}, ARC-e and ARC-c \citep{clark_think_2018},
and OpenbookQA \citep{mihaylov_can_2018}. To facilitate reproducibility, we use Eleuther-AI's \emph{lm-evaluation-harness} \citep{biderman_lessons_2024} and report the accuracy based on multiple-choice log-likelihood evaluation, i.e., we select the answer choice with the highest conditional log-likelihood as the predicted answer. For datasets with answer choices of varying length (PIQA, ARC-e, ARC-c, OpenbookQA, and Hellaswag), we perform byte-length normalization of the log-likelihood scores to remove any bias due to the answer length.

For the results in Table~\ref{tab:commonsense-llama-hyperparameters}, we train for $5$ epochs on each task with batch size $128$ and choose the learning rate within $\{ 4\times 10^{-4}, 8 \times 10^{-4},  4 \times 10^{-3}\}$. We tune $\lambda \in \{10^{-3}, 5 \times 10^{-3}\} $ for \method{} and set $\alpha=32$.  We choose the combination that performs best on average throughout all datasets (see Table~\ref{tab:commonsense-llama-hyperparameters}) and report these.

For the comparison in Table~\ref{tab:gemma-param-commonsense}, we train again on each task for 5 epochs with $\eta=4\times 10^{-3}$ and set $\lambda=0.1$ for all datasets and settings.

\subsection{Mathematical Reasoning}

For the mathematical reasoning experiment, we tune the learning rate grid in $\{ 2\times 10^{-4}, 4 \times 10^{-4},  6 \times 10^{-4}\}$. We train for 2 epochs on MetaMathQA \citep{yu_metamath_2024} using rank $16$, batch size $128$, and tune $\lambda \in \{10^{-3}, 5 \times 10^{-3}\}$. Further following \cite{yu_metamath_2024}, we do not use few-shot prompting, as the zero-shot setup reduces inference costs and tends to work better for fine-tuned models (see Appendix B in \cite{yu_metamath_2024}). We use the following Alpaca-based prompt \citep{taori_stanford_2023} for training and evaluation:

\texttt{"Below is an instruction that describes a task. Write a response that appropriately completes the request.{\textbackslash n}{\textbackslash n}\#\#\# Instruction:{\textbackslash n}\{instruction\}{\textbackslash n}{\textbackslash n}\#\#\# Response:{\textbackslash n}"}.
We evaluate on GSM8K \citep{cobbe_training_2021} and MATH \citep{hendrycks_measuring_2021} using lm-evaluation-harness and the above prompt.

\subsection{GLUE}
We set the rank $r=16$ and batch size $32$ for all methods. Additionally, we fix $\alpha=8$ and $\lambda = 1$. For both methods, we do a grid-search of the learning rate $\eta \in \{4 \times 10^{-4}, 8 \times 10^{-4}, 1 \times 10^{-3}\}$ and report the set of runs that yields best performance across all datasets. We report the average accuracy across four seeds.

\subsection{Matrix Factorization} \label{appdx:mf-experimental-details}

Here, we provide the experimental details for Fig.~\ref{fig:mf}. We compare Riemannian Gradient Descent (RGD) with the \method{} parameterization (PO) to Gradient Descent (GD) with the Burer-Monteiro parameterization (BM). We set $\A \in \mathbb{R}^{m \times n}$ with $r_A=4$ evenly spaced singular values in the interval $[1, \kappa]$ with $\kappa \in \{10, 100\}$. For $\kappa=10$, we set $\eta = 10^{-3}$ and $\gamma=1$ for PO + RGD and $\eta=10^{-3}$ for BM + GD. For $\kappa=100$, we use $\eta=10^{-4}$ for both methods. In the symmetric case, we use again $\eta=10^{-3}$ for $\kappa=10$ and $\eta=10^{-5}$ for $\kappa=100$.

\begin{table}
\vspace{-0.7cm}
  \caption{Learning rate and regularization lambda values for different ranks and adapters.}
  \label{tab:commonsense-llama-hyperparameters}
  \centering
  \resizebox{0.5\textwidth}{!}{
  \begin{tabular}{l l r r r}
    \toprule
    Rank & Adapter & Learning Rate & $\lambda$ & $\alpha$ \\
    \midrule
    4  & DoRA   & $4 \times 10^{-4}$ &  & 32 \\
       & LoRA   & $4 \times 10^{-4}$ &  & 32 \\
       & \method{}   & $4 \times 10^{-3}$ & $5 \times 10^{-3}$ & 32 \\
    \midrule
    8  & DoRA   & $4 \times 10^{-4}$ &  & 32 \\
       & LoRA   & $4 \times 10^{-4}$ &  & 32 \\
       & \method{}   & $4 \times 10^{-3}$ & $5 \times 10^{-3}$ & 32 \\
    \midrule
    16 & DoRA   & $4 \times 10^{-4}$ &  & 32 \\
       & LoRA   & $4 \times 10^{-4}$ &  & 32 \\
       & \method{}   & $4 \times 10^{-3}$ & $5 \times 10^{-3}$ & 32 \\
    \midrule
    32 & DoRA   & $4 \times 10^{-4}$ &  & 32 \\
       & LoRA   & $4 \times 10^{-4}$ &  & 32 \\
       & \method{}   & $4 \times 10^{-3}$ & $5 \times 10^{-3}$ & 32 \\
    \bottomrule
  \end{tabular}
  \vspace{-1.9cm}

  }
\end{table}

\subsection{Licenses}\label{appdx:licenses}

Our evaluations are carried out on commonly-used datasets in the literature.

\textbf{Language Understanding.} GLUE \citep{wang_glue_2018} is designed to provide a general-purpose evaluation of language
understanding. Those adopted in our work include MNLI (inference, \cite{mnli}), SST-2 (sentiment analysis, \cite{sst2}), MRPC (paraphrase detection, \cite{mrpc}), CoLA (linguistic acceptability, \cite{cola}), QNLI (inference \cite{qnli}), QQP\footnote{\url{https://quoradata.quora.com/First-Quora-Dataset-Release-Question-Pairs}} (question-answering), RTE\footnote{\url{https://paperswithcode.com/dataset/rte}} (inference), and STS-B (textual similarity, \cite{stsb}). These datasets are released under different permissive licenses.

\textbf{Commonsense Reasoning.} These datasets are a collection tasks that require commonsense reasoning to answer. The considered datasets include WinoGrande \citep{sakaguchi_winogrande_2019}, PIQA \citep{bisk_piqa_2019}, Social Interaction QA (SIQA) \citep{sap_social_2019}, HellaSwag \citep{zellers_hellaswag_2019}, ARC-easy, ARC-challenge \citep{arc} and OpenbookQA \citep{mihaylov_can_2018}. These datasets are released under different permissive licenses. 

\textbf{Mathematical Reasoning.} For mathematical problems, we consider GSM8K \citep{cobbe_training_2021} dataset that consists of high quality linguistically diverse school math problems created by human problem writers. MATH \citep{hendrycks_measuring_2021} contains high-school competition mathematics problems from different areas of mathematics with varying degrees of difficulty. We also adopt MetaMathQA dataset \citep{yu_metamath_2024}, which is constructed through bootstrapping mathematical questions by rewriting the question from multiple perspectives. All three datasets are under the MIT license.

\textbf{Models.} We summarize licenses and terms of use of the models employed in this work. All model checkpoints are obtained from HuggingFace, if not stated otherwise. DeBERTa-v3-Base and DeBERTa-v2-XXL are under the MIT License. Llama-2 models are under the Llama 2 community license agreement.\footnote{\url{https://huggingface.co/meta-llama/Llama-2-7b-chat-hf/blob/main/LICENSE.txt}} Gemma models are under the Gemma terms of use.\footnote{\url{https://ai.google.dev/gemma/terms}} The official LoRA checkpoints for DeBERTa-XXL available on Github are under the MIT license.\footnote{\url{https://github.com/microsoft/LoRA/releases/tag/DeBERTa}}

\end{document}